\renewcommand{\hat}{\widehat}
\DeclareMathOperator{\argmax}{argmax}
\DeclareMathOperator{\argmin}{argmin}
\newcommand{\eqdeltastar}{(31)~}
\newcommand{\appendixalgproof}{A~}
\newcommand{\appendixgaussianconnectionproof}{B.1~}
\newcommand{\appendixgaussianconcentrationproof}{B.2~}
\newcommand{\appendixspreadtimeproof}{B.3~}
\newcommand{\algtheorem}{3.1~}
\newcommand{\theorysection}{4~}
\newcommand{\alginitialization}{1~}
\newcommand{\eqmk}{(10)~}
\newcommand{\eqck}{(11)~}
\newcommand{\sstheorygaussian}{4.1~}
\newcommand{\sstheorysbm}{4.2~}
\newcommand{\thmgaussianconnection}{4.1~}
\newcommand{\lemgaussianconcentration}{4.2~}
\newcommand{\thmtimetilnext}{4.3~}
\newcommand{\eqsepconditionone}{(13)~}
\newcommand{\eqsepconditiontwo}{(14)~}
\newcommand{\eqsepconditionthree}{(15)~}
\theoremstyle{plain}
\newtheorem{theorem}{Theorem}[section]
\newtheorem{lemma}[theorem]{Lemma}
\newtheorem{proposition}[theorem]{Proposition}
\newtheorem{corollary}[theorem]{Corollary}
\theoremstyle{remark}
\newcommand{\E}{\mathbf{E}}
\renewcommand{\P}{\mathbf{P}}
\begin{document}
\begin{frontmatter}
\title{Unifying Epidemic Models with Mixtures}
\runtitle{Unifying Epidemic Models with Mixtures}

\begin{aug}
\author[A]{\fnms{Arnab} \snm{Sarker}\ead[label=e1, mark]{arnabs@mit.edu}},
\author[A]{\fnms{Ali} \snm{Jadbabaie}\ead[label=e2, mark]{jadbabai@mit.edu}}
\and
\author[A]{\fnms{Devavrat} \snm{Shah}\ead[label=e3, mark]{devavrat@mit.edu}}
\address[A]{Institute for Data, Systems, and Society, MIT \printead{e1}, \printead{e2}, \printead{e3}}
\end{aug}

\begin{abstract}
    The COVID-19 pandemic has emphasized the need for a robust understanding of epidemic models. 
    Current models of epidemics are classified as either mechanistic or non-mechanistic:
    mechanistic models make explicit assumptions on the dynamics of disease, whereas non-mechanistic models make assumptions on the form of observed time series. 
    Here, we introduce a simple mixture-based model which bridges the two approaches while retaining benefits of both.
    The model represents time series of cases and fatalities as a mixture of Gaussian curves, providing a flexible function class to learn from data compared to traditional mechanistic models.
    Although the model is non-mechanistic, we show that it arises as the natural outcome of a stochastic process based on a networked SIR framework.
    This allows learned parameters to take on a more meaningful interpretation compared to similar non-mechanistic models, and we validate the interpretations using auxiliary mobility data collected during the COVID-19 pandemic.
    We provide a simple learning algorithm to identify model parameters and establish theoretical results which show the model can be efficiently learned from data. 
    Empirically, we find the model to have low prediction error.\footnote{The model is available live at \url{covidpredictions.mit.edu}}
    Ultimately, this allows us to systematically understand the impacts of interventions on COVID-19, which is critical in developing data-driven solutions to controlling epidemics.
\end{abstract}

\begin{keyword}
\kwd{COVID-19 pandemic}
\kwd{mixture models}
\kwd{networks}
\kwd{epidemiology}
\kwd{infectious disease}
\kwd{epidemic modeling}
\end{keyword}

\end{frontmatter}

\section{Introduction}
\label{s:intro}

The COVID-19 pandemic has reinforced the need for a deep understanding of epidemic processes. 
The initial uncertainty which arose in the beginning of the pandemic led to new questions about infectious disease modeling and estimation, as well as an emphasis on robust control of the pandemic with attention to careful trade-offs between health outcomes and economic costs of interventions \citep{acemoglu2020multi}.

Epidemic processes are often understood through models, which attempt to simplify the complex process by which contagion travels between individuals in a population in order to provide insights and actionable policies \citep{hethcote2000mathematics}. 
Broadly, models in the literature are typically categorized into one of two types: Mechanistic models, and non-mechanistic models, which are also referred to as reduced form models \citep{holmdahl2020wrong}.
Both types of models have benefits and drawbacks to their use, and in this work, we aim to unify the approaches in a single model which bridges the two classes of models while retaining benefits of both.

Mechanistic models are well established and have been used by epidemiologists since the early 20th century to understand the dynamics of the spread of infectious disease \citep{kermack1927contribution}.
Broadly, this class of models make assumptions on the underlying process by which disease spreads, through the use of differential equations or more complex agent-based approaches \citep{brauer2019mathematical}. 
Such models have a vast literature and are well understood, which has made them an obvious choice for tasks such as epidemic forecasting throughout the COVID-19 pandemic \citep{li2020forecasting}.

However, mechanistic models do have some drawbacks when applied to observed data, which resulted in the popularity of reduced form models during the beginning stages of the COVID-19 pandemic \citep{jewell2020caution}. 
Mechanistic models, due to the indirect relationship between model parameters and forecasts, can have wide confidence intervals in forecasting and estimation of the pandemic \citep{ hespanha2021forecasting}. 
Classical models also often make the assumption that individuals do not explicitly react to the state of the pandemic, and depletion of the susceptible population results in the eventual end of the epidemic \citep{kermack1927contribution, hethcote2000mathematics}.
Hence, a proper adjustment to such models requires the introduction of time-varying parameters to accurately estimate the pandemic state, increasing the burden on parameter uncertainty \citep{chen2021numerical}.

Due to these limitations, which were particularly exacerbated during the initial stages of uncertainty at the beginning the COVID-19 pandemic, reduced form models were popularized which use statistical methods to hypothesize a form for the observed time series \citep{murray2020forecasting}. 
Namely, as opposed to assuming a prolonged period of exponential growth as is often the case for mechanistic models, the reduced form models often make an implicit assumption that human behavior generates a sub-exponential trend in observed time series.
This approach has been applied since the mid 19th century, and often results in better short term forecasts with tighter confidence intervals \citep{farr1840progress, santillana2018relatedness}. 
Such models rely on statistical assumptions about the data, but are often inflexible in their predictions in the sense that they can not easily account for changes in human behavior \citep{holmdahl2020wrong}.
Moreover, a major critique of such methods is that they are often not interpretable, as the learned parameters do not have direct interpretations related to the spread of disease \citep{jewell2020caution}.

In this work, we focus on a specific reduced form model which bridges the two approaches to epidemic modeling while retaining benefits of both. Namely, we assume that the observed time series of cases has the form
\begin{equation}
\label{eq:mixtures}
    N(t) = \left(\sum_{k = 1}^r e^{-a_k t^2 + b_k t + c_k} \right) (1 + \varepsilon_t)\,.
\end{equation}
Here, $t$ denotes discrete time index representing daily observations, which we assume takes integer values, $r$ is a parameter which denotes the number of mixtures present in the time series, and $\varepsilon_t$ represents independent, zero-mean noise which is bounded in absolute value almost surely by a parameter $0 \leq \delta < 1$. 
The model itself takes a reduced form approach to estimation, which results in the ability to perform a principled statistical analysis of a simple learning algorithm.
Moreover, we show that the model is the outcome of a simple Susceptible-Infected-Recovered (SIR) process on a network, which allows the learned parameters to become interpretable. 
The flexibility in the number of mixtures $r$ also allows the model to account for the possibility of additional peaks in the data, resulting in a benefit over inflexible assumptions of unimodal models used in the literature \citep{farr1840progress, murray2020forecasting}.
Because $r$ can vary, the model provides a non-parametric function class from which the trajectory of the epidemic can be learned, allowing for multimodal observations to be captured.

The idea of modeling cases as a mixture stems from the reality that the disease is spreading to a population which has diverse regional divisions and includes many jurisdictions \citep{chandrasekhar2020interacting}. Since each region has its own features and policies, we expect the observed case counts to take an additive form, and this is indeed the case within the United States (Fig. \ref{fig:empirical_mixtures}). The specific form of the Gaussian time series is chosen in part due to historical prevalence \citep{farr1840progress, santillana2018relatedness}, and the parameterization is rigorously justified in Section \ref{s:theory}. While a much wider variety of function classes can explain sub-exponential growth \citep{dandekar2020quantifying}, we restrict to the parsimonious class in \eqref{eq:mixtures} since the restrictive assumption better justifies applications to out-of-sample prediction.

\begin{figure}
    \centering
    \includegraphics[width=0.8\textwidth]{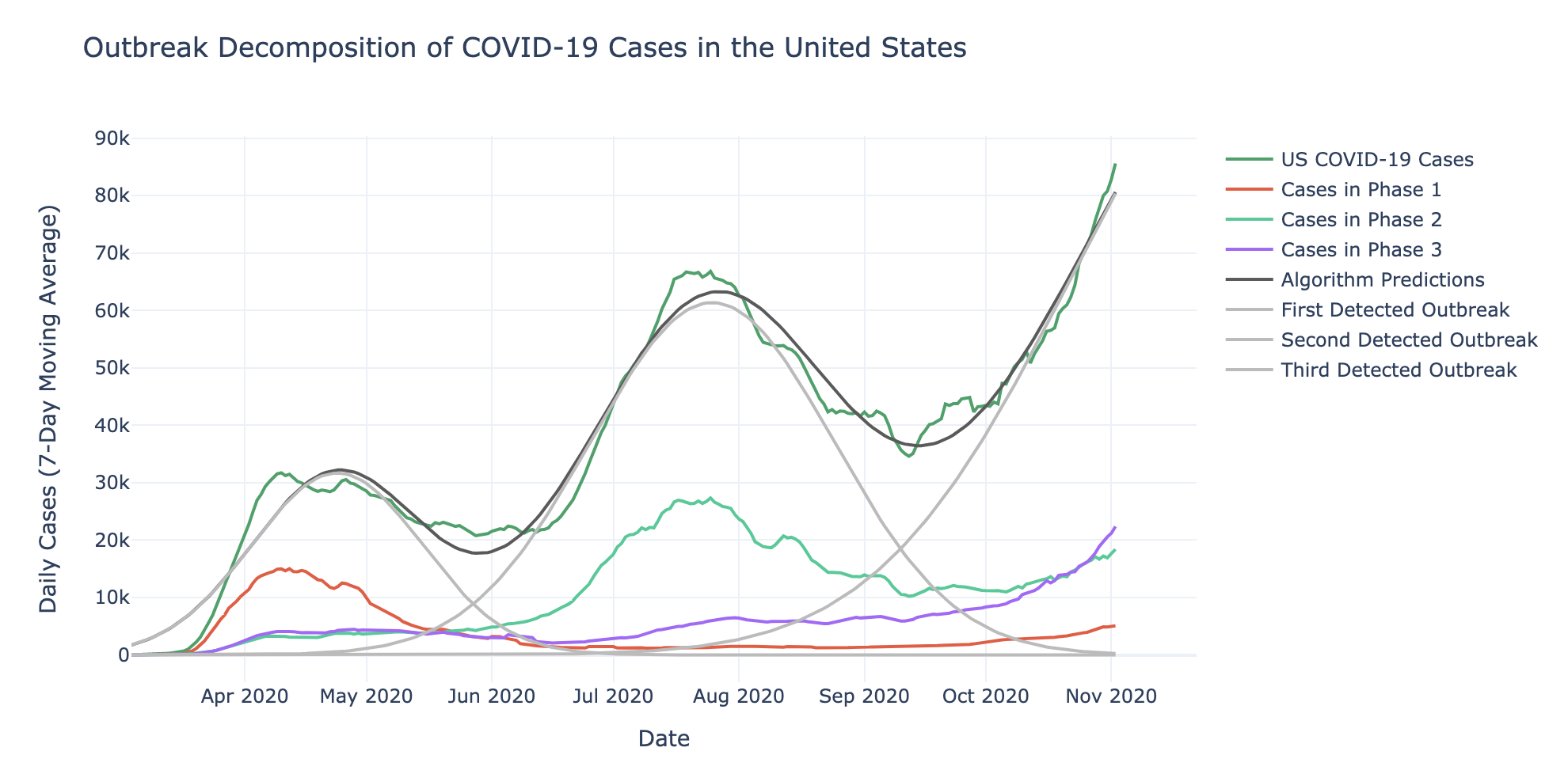}
    \caption{Empirical mixtures in United States COVID-19 Case Data. Different groups of states experience peaks at different times, indicating that a homogenous mixing assumption can not be made across the entire United States. }
    \label{fig:empirical_mixtures}
\end{figure}

After a brief review of the epidemic models and the motivation for the mixture model in Section \ref{s:background}, in Section \ref{s:algorithm} we show that \eqref{eq:mixtures} admits a simple algorithm for learning time series as a mixture of Gaussian curves.
This algorithm can be shown to provably learn the parameters of each component from data, even when the observations are perturbed by bounded noise, which builds confidence in the ability to use the model for data-driven tasks.
In bounding the error of learned parameters based on observations of data, to our knowledge we provide the first explicit statistical guarantee on learning a reduced form model for epidemic forecasting. 
The learning algorithm allows us to perform inference from empirical observations, and we have found that our prediction error is relatively small compared to other models used for epidemic forecasting \citep{ray2020ensemble}. 
We find that the model achieves the best results when our model selection procedure selects an appropriate number of components $r$, and in practice and for model comparisons we use the BIC criterion to select $r$ from in-sample data \citep{ding2018model}.

Moreover, we provide two possible ways to interpret the model from a mechanistic perspective.
First, we show the presented reduced form model can be seen as arising from a simple stochastic process on a graph, allowing for meaningful interpretation of parameters. 
We provide an explicit generative model which results in the function class \eqref{eq:mixtures} and provide a statistical analysis complete with non-asymptotic bounds for the stochastic process in Section \ref{s:theory}. 
Specifically, our generative model suggests that the parameter $a_k$ of each mixture measures the extent to which individuals in the population react to the pandemic by cutting off physical ties with others. 
Although our model is only one of many generative models which can produce observations of the form \eqref{eq:mixtures}, we show that our interpretation of the $a_k$ parameter can be validated with mobility data collected throughout the pandemic, providing evidence that this model provides appropriate insights. 
This validation indicates that policy makers can interpret the simple reduced form model to better understand the progress of the pandemic. 
That is, although the model does not explicitly assume an underlying mechanism, the learned parameters still have an interpretation which will allow for data-driven control methods to be applicable.

Since our network-based interpretation provides one of many possible ways to generate observations of the form \eqref{eq:mixtures}, we also provide a different formulation based on the standard SIR model to recover the Gaussian components of the model. 
Namely, whereas the network-based interpretation does not assume that individuals react as a function of the state of the epidemic, we show a closed-loop result which recovers the Gaussian shape with the assumption that individuals are directly responding to the state of the epidemic.
This allows us to better understand how the Gaussian form can result empirically.

We provide final thoughts and conclusions in Section \ref{s:conclusions}, which summarizes the key contributions of this work and lays out future directions of research.
Ultimately, we find that prediction based on the function class in \eqref{eq:mixtures} appears to unify the disparate approaches to epidemic modeling, striking a desirable balance between mechanistic models and their reduced form counterparts. 
While policy makers in practice may still prefer to use deep-learning based methods to prioritize accuracy \citep{shahid2020predictions}, or agent-based models to allow for refined interpretability \citep{rockett2020revealing}, equation \eqref{eq:mixtures} provides a foundation for models which provide both accurate and interpretable forecasting. 
This allows us to work towards an eventual goal of robust estimation of epidemics which can be actionable towards an eventual goal of simple and effective data-driven control for future epidemics. \\

\noindent \textbf{Summary of Contributions.} \quad Overall, the contributions of this work may be summarized as follows:
\begin{itemize}
    \item We propose a novel mixture model to capture epidemic processes, and argue that it captures the benefit of 
     mechanistic models in terms of being interpretable and the benefit of non-mechanistic models in terms of expressivity.
    \item We provide an efficient algorithm for learning the model from data, and provide theoretical guarantees. 
    \item We show that the proposed mixture model arises naturally as the outcome of a network Susceptible-Infected-Recovered (SIR) model and 
    thus provide an interpretation for the parameters of the model.
    \item We validate this interpretation of the parameters by associating learned parameters with empirical observations,
    and we find significant correlations which support the use of the generative model as well as the specific parameter interpretations.
    \item We provide an additional generative model in which the observations occur due to endogenous behavior, 
    which furthers our understanding of the mixture model and suggests an alternative reason as to why such observations may arise.
\end{itemize}

\section{Modeling Epidemics}
\label{s:background}
Both mechanistic and non-mechanistic epidemic models have been discussed at length in the epidemiology literature, with attention to the benefits and drawbacks of each method \citep{holmdahl2020wrong}.
Here, we review some of the main approaches to epidemic modeling, as to provide context for our model which we claim retains benefits of each type.

\subsection{Mechanistic Models for Epidemics}
\label{ss:mechanistic}
We begin with a discussion of mechanistic models in epidemiology, which will underlie some of the theoretical foundations of the generative network model which produces \eqref{eq:mixtures}. 
The fundamental mechanistic models are originally based on a mean-field approach introduced in \cite{kermack1927contribution}, and are known as compartmental models. 

\begin{figure}
    \centering
    \includegraphics[width = 0.5\textwidth]{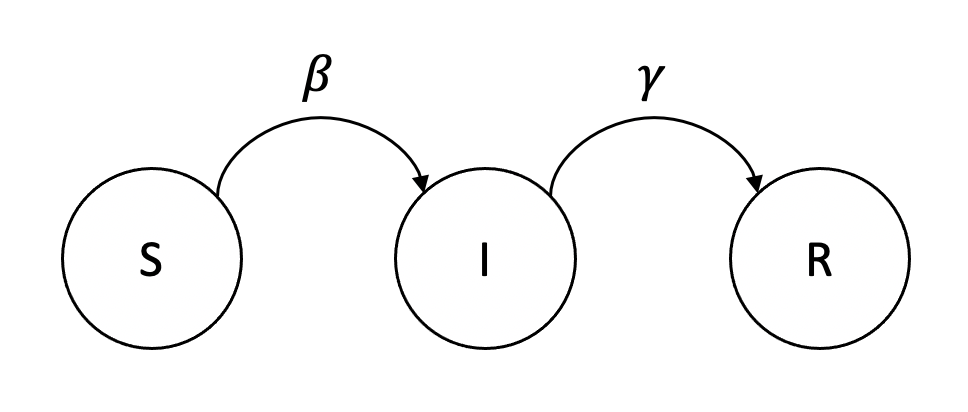}
    \caption{State Diagram of a compartmental model with three states: Susceptible (S), Infected (I), and Recovered (R).}
    \label{fig:sir}
\end{figure}

\subsubsection{Compartmental Models} Compartmental models aim to understand the dynamics of infectious disease by classifying the population into different groups, and using these classifications in order to understand the spread of the disease. 
Labels such as S, E, I, and R are often used to categorize such compartments, which stand for susceptible, exposed, infected, and recovered, respectively. 
Additional labels such as M and V, representing passively immune and vaccinated individuals, respectively, are also commonly considered as compartments in such models \citep{hethcote2000mathematics}. 
The appropriate labels will depend on the characteristics of the disease which is modeled.

Mean-field models in epidemiology model the dynamics of the infectious disease through a series of differential equations which describe the transmission of the disease. 
Figure \ref{fig:sir} shows the commonly used example of a compartmental model using three compartments. 
In this particular model, the dynamics between the compartments are modeled as if every individual has an equal probability of coming into contact with another individual, i.e. the underlying contact network is assumed to be fully connected.
The rate at which the size of the infected population then increases as the number of connections between infected individuals and susceptible individuals increases.
This results in the following set of differential equations to model the spread of disease.
\begin{align}
    \dv{S}{t} &= - \beta S(t) I(t) / N \,,\label{eq:sir-st}\\
    \dv{I}{t} &= \beta S(t) I(t) / N - \gamma I(t) \,, \label{eq:sir-it} \\
    \dv{R}{t} &= \gamma I(t) \label{eq:sir-rt} \,.
\end{align}
Here, $t$ represents continuous time, and the process is assumed to begin at some time $t_0$.
We assume that the number of births and deaths in the population is negligible, so that $S(t) + I(t) + R(t) = N$ for some positive constant $N$ for all $t$. 
Further, the initial conditions for such a system often assume that $S(t_0) \approx N$, $0 < I(t_0) \ll N$, and $R(t_0) = 0$. 
The value $\beta$ in this model represents a contact rate, in the sense that it is a measure of the number of contacts that an infected individual may have with the susceptible population. 
Similarly, the parameter $\gamma$ represents a recovery rate, and represents on average how long an infected individual remains contagious.

Several important observations can be made about the SIR model considered above. 
For example, the ratio $\beta / \gamma$, often denoted as $R_0$, the \textit{basic reproductive number} of the system, provides a simple condition to determine if the number of infections of the population will increase. 
Namely, if $R_0 < 1$, then the number of infected individuals will decay to 0; otherwise, the number of infected individuals will increase until it reaches a maximum before decaying to 0 \cite[Theorem 2.1]{hethcote2000mathematics}. 
It is also common to rewrite the SIR model with time-varying parameters, by considering how the parameters $\beta$ and $\gamma$ may change over time due, for example, to social distancing or increased testing \citep{holmdahl2020wrong}.
Such approaches are necessary when the observed data is multimodal, as the traditional SIR dynamics yield a unimodal time series of infections.

While the time-varying approach provides more flexibility to model observed dynamics, it also results in a difficult statistical procedure to estimate parameters with confidence \citep{chen2021numerical}.
For example, if the class of functions used to model time-varying parameters $\beta(t)$ has no constraints, then one can easily overfit to the data with a time-varying parameters by adapting the $\beta(t)$ parameter to each new observed data point.
Without any appropriate assumptions on the time-varying parameters, it becomes difficult to generate valid out of sample predictions.

Another approach which is often taken with the compartmental model above is to consider the case where the susceptible population is large compared to the number of infections, i.e. $S(t) \approx N$ for some range $t \in [t_0, T]$ (see, e.g. \cite{ma2020estimating}).
In such cases, because the dynamics of the susceptible population the dynamics of the number of infected individuals is the main point of interest, and can be approximated as
\begin{equation}
    \dv{I}{t} \approx (\beta - \gamma) I(t) \,,
\end{equation}
resulting in a simple differential equation with a closed form solution reflecting exponential growth or decay, depending on the sign of $\beta - \gamma$.
This model is useful for estimating epidemiological parameters at the early stages of an epidemic.
Furthermore, by allowing $\beta - \gamma$ to vary with time and take an affine form, the connection to the Gaussian curves in the mixture model \eqref{eq:mixtures} and Farr's law becomes apparent.
As we will show in Sections \ref{s:theory} and \ref{ss:closed-loop}, the idea that $\beta - \gamma$ can have an affine form is not unreasonable, as the affine form can be attained by making particular assumptions on a population's reaction to the progress of the epidemic.

Ultimately, although the mean-field compartmental models have been applied to various problems in epidemiology, they make a restrictive assumption in that they do not take into account potential heterogeneity in the physical contact networks of individuals who spread the infectious disease \citep{chakrabarti2008epidemic, easley2010networks, ganesh2005effect}. 
For example, because super-spreading events have quite common in the spread of COVID-19, it is unlikely that a homegenous contact rate $\beta$ is appropriate to model the dynamics of this infectious disease \citep{frieden2020identifying}. 
For this reason, many authors instead choose to use network-based models.

\subsubsection{Network-Based Models of Epidemics} 
\label{ss:background-network}
By modeling social networks, one may explicitly take into account the interactions between different individuals \citep{easley2010networks, ganesh2005effect, girvan2002simple, keeling2005networks, ruhi2015sirs}. 
In a network-based model, individuals are modeled using a set of nodes $V$ ($|V| = n$), and the interactions between the individuals are modeled using a set of edges $E \subseteq V \times V$. 
At each (discrete) time step $t \in \mathbb{N}$, each node in the network is assigned a label, $\xi_i(t) \in \{ S, I, R \}$. The state of the entire network can then be summarized in the time-varying vector

\begin{equation} \label{eq:3n}
    \xi (t) = (\xi_1(t), \dots, \xi_n(t)) \in \{ S, I, R \}^n \,.
\end{equation}

The model may be fully described by then stating the probability that an individual node transitions from one particular state to another. 
In many network-based models of epidemics, each infected node infects its neighbors independently with some probability $\beta$, and each infected node becomes recovered at each time step with probability $\gamma$. 

Hence, using these transition probabilities, one can study the dynamics of the $3^n$ state Markov chain represented by these underlying probabilistic rules \citep{ruhi2015sirs}. 
However, it is worth noting that such a large Markov chain is not always tractable, and several works consider approximations of this model, such as the linearized version of the dynamics \citep{ahn2013global, ruhi2015sirs}. 

Instead of providing an analysis of the $3^n$ state Markov chain described in \eqref{eq:3n}, some authors instead look for necessary and sufficient conditions for epidemics to either last or dissolve quickly on networks, to create an analog of the $R_0$ measure in compartmental models. 
In particular, they note that if $\beta / \gamma$ is less than the inverse of the spectral radius of the graph, then the epidemic may die out quickly, whereas above this threshold the epidemic will last for a long period of time \citep{ganesh2005effect}. 
While such results provide insight on the dynamics of infectious disease in a networked setting, such works do not specifically attempt to forecast future progress of the infectious disease. 

In epidemic forecasting with network-based mechanistic models, practitioners must instead make assumptions about the underlying social network, for example through agent-based modeling \citep{macal2009agent}. 
By collecting information about how individuals interact with their environment, such models can be made to handle heterogeneity in ways that mean-field models can not. 
These approaches require large amounts of data and resources to generate accurate predictions \citep{aleta2020modelling}. 
Since millions of edges may be included in the mobility network, and observed mobility often changes over time due to community response to the state of the pandemic, the amount of data required to satisfy complex, time-varying network-based models can become prohibitive.

\subsection{Non-Mechanistic Modeling of Epidemics}
In contrast to the mechanistic models of infectious disease considered above, several authors approach prediction of epidemics using a non-mechanistic, or reduced form approach \citep{farr1840progress,le2020neural, santillana2018relatedness}. 
In these approaches, the trajectory of key metrics such as case counts or fatailities is assumed to come from a specific class of functions, and optimization techniques are used in order to determine the parameters of the models and provide forecasts. 
While such approaches often lack interpretable parameters, they do have computational benefits, are data-driven, and are efficient in their use of data as they provide forecasts based only on the available time series.

The most commonly considered class of functions for predicting the number of infected individuals is the class of Gaussian bell curves, which have been considered as early as the mid 19th century with the work of William Farr \citep{farr1840progress}. 
The use of this function class has been considered in predicting infections due to AIDS \citep{bregman1990farr}, smallpox \citep{santillana2018relatedness}, drug mortality \citep{darakjy2014applying}, and COVID-19 \citep{murray2020forecasting}.
The application of such a function class has seen mixed results, particularly in the case of long term forecasting.
In particular, the function class makes the restrictive assumption that the spread of disease is unimodal in nature.

Recently, additional function classes based on advances in machine learning have also been considered in the context of the spread of infectious disease \citep{le2020neural}. 
With such models, the class of functions used for prediction is not necessarily unimodal, and practioners are able to incorporate auxiliary data such as mobility data in order to make predictions with a non-mechanistic approach.

Our work presents a non-mechanistic approach of fitting to the function class \eqref{eq:mixtures}. 
Since the approach is non-mechanistic, it has the benefit of being data-driven and having an efficient implementation.
Further, as we will show in the following section, this function class may actually be thought of as arising from a mechanistic SIR-process. 
In particular, the number of peaks $r$ in \eqref{eq:mixtures} provides an indication of the number of communities in the network as opposed to a more granular assumption on network topology; in this sense, the approach strikes a balance between the unstructured mean-field SIR models and the overly explicit network-based SIR models, while retaining the benefits of non-mechanistic approaches.

\section{Learning Mixtures from Data}
\label{s:algorithm}
Learning a mixture of Gaussians as a time series requires novel algorithmic insights due to the setting of the problem.
The input to the problem are a series of observations $\{N (t)\}_{t \in \{1, \dots, T \}}$ which are drawn according to the model \eqref{eq:mixtures}.
The goal of the learning procedure is then to identify a number of mixtures $r$ and the parameters $(a_k, b_k, c_k)$ for $k = 1$ through $r$ which provide a best fit to the data.
To operationalize a best fit to the data, we will use the $\ell_2$ loss, i.e. we wish to find parameters which minimize
\begin{equation} \label{eq:l2-loss}
L_r(\{a_k, b_k, c_k \}_{k = 1, \dots, r}) = \sum_{t = 1}^T \left( \sum_{k = 1}^r e^{-a_k t^2 + b_k t + c_k} - N(t) \right)^2 \,.
\end{equation} 

This is a difficult non-convex optimization problem, and it is worth noting that it is markedly different than learning a distribution from a mixture of Gaussians. 
Although a method such as expectation-maximization (EM) can be used to learn a mixture of Gaussians from a collection of samples from a distribution, learning from a time series provides a markedly different setting \citep{moon1996expectation}.
Rather than learning from a collection of samples, we must learn directly from the observed points of the time series, which would be comparable to being given a partial density and attempting to estimate the complete density.
The problem of fitting parameters to the Gaussian distribution remains non-convex, and hence we provide a novel algorithm to determine valid parameters and provide confidence intervals for inference.

Our algorithm is based on analyzing a transformation of $N(t)$, which allows us to find a reasonably separated set of mixtures by using a peak finding algorithm. 
We will begin by providing an intuition for the algorithm, which is then made explicit in Algorithms \ref{alg:initialization} and \ref{alg:alternating}. 
We will then show a statistical guarantee showing that, under certain noise assumptions, the algorithm performs well when the number of peaks in the data is small and sufficient data is available.

\subsection{Algorithmic Intuition: Single Peak with No Noise}
To begin our algorithmic intuition, we first consider the simplest possible setting for learning a function class in our setting, which is the noiseless case with $r=1$.
That is, we first assume the observations have the form
\[ N(t) = e^{-at^2 + bt + c} \,.\]
Optimizing \eqref{eq:l2-loss} in this simplified setting still remains non-convex in general, so the goal is to find a reasonable initialization point from which to compute gradient descent.
Ideally, gradient descent from this point of initialization would recover the true parameters of the Gaussian curve.
While the final goal is to learn the parameters $a$, $b$, and $c$, we note that the learning procedure is more intuitive if we re-write the form of the Gaussian as
\[ N(t) = M e^{-a (t-C)^2} \,,\]
where $M = e^{c + b^2 / (4a)}$ and $C = \frac{b}{2 a}$.
This re-writing comes from optimizing the quadratic in the exponent in the Gaussian, and it allows for the parameters to take on specific meanings: $M$ represents the maximum value attained by the Gaussian, $C$ represents the time at which the maximum is reached, and $a$ represents a curvature of the Gaussian.

Hence, in order to learn the parameters a single Gaussian curve in the noiseless case, we can first estimate $\hat{M}$ as the maximum of the observed counts, $\hat{M} = \max_t N(t)$.
$\hat{C}$ would then be estimated as the value of $t$ for which the peak is reached, $\hat{C} = \argmax_t N(t)$.
To learn $a$, we note that the function
\begin{equation}
\label{eq:st}
S(t) = \log \frac{N(t+1)}{N(t)} - \frac{N(t)}{N(t-1)} 
\end{equation}
is precisely equal to $-2a$ in this noiseless, single peak case.
So, we estimate $\hat{a} = - S(t) / 2$ where $t$ is an arbitrary index between $1$ and $T$ for which $S(t)$ is well defined.

To convert from $(\hat{a}, \hat{M}, \hat{C})$ to $(\hat{a}, \hat{b}, \hat{c})$, we note that in the form of the Gaussian, given the definitions of $M$ and $C$, we can solve for $b$ and $c$ as $b = 2 C a$ and $c = \log M - C^2 a$. 
Thus, we can estimate the parameters of the quadratic as
\begin{align*}
    \hat{a} = \hat{a} \,, \qquad
    \hat{b} = 2 \hat{C}\hat{a} \,,\qquad
    \hat{c} = \log \hat{M} - \hat{C}^2 \hat{a} \,.
\end{align*}

This initialization may not be perfect in the case where the true maximizing value $C$ does not lie on an index $t$ which is observed, as for example the maximizing value $C$ could be between two integers.
Hence, we apply gradient descent from this initialization point to find a local minimum which is a best fit for the Gaussian curve.
In practice, this often recovers the true parameters of the Gaussian.

\begin{figure}
     \centering
     \begin{subfigure}[]{0.4\textwidth}
         \centering
         \includegraphics[width=\textwidth]{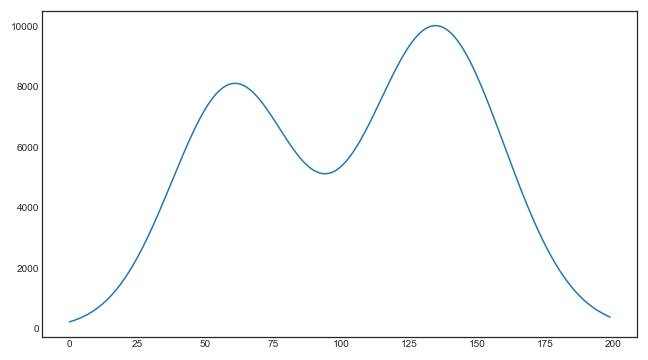}
         \caption{Idealized Number of Cases $N(t)$}
         \label{fig:idealized_cases}
     \end{subfigure}\hspace{0.5in} \begin{subfigure}[]{0.4\textwidth}
         \centering
         \includegraphics[width=\textwidth]{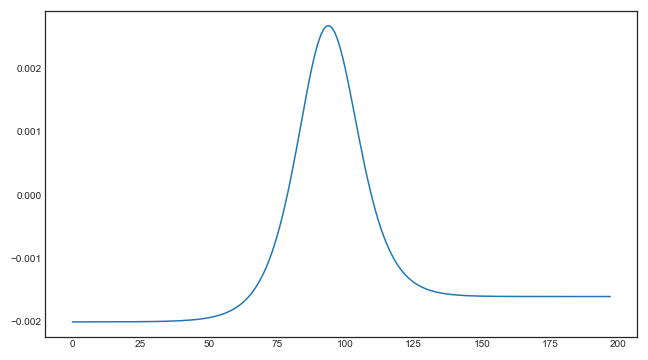}
         \caption{Idealized $S(t)$}
         \label{fig:idealized_st}
     \end{subfigure}
     \caption{Idealized Time Series for Learning a Mixture.}
     \label{fig:idealized_alg}
\end{figure}

\subsection{Algorithmic Intuition: Multiple Peaks with No Noise} \label{ss:intuition-mult}
The above handles the simplest possible case for learning, where $r = 1$.
We next provide intuition to extend to the cases where $r > 1$.
To begin, we consider the noiseless case where the observations come from a mixture of two Gaussian curves,
\[ N(t) = \sum_{k = 1}^2 e^{-a_k t^2 + b_k t + c_k} \,. \]
The intuition for the learning procedure is as follows: We first identify the ``mid-point'' at which the two Gaussians have a similar number of cases, and then use this as a separating point to learn one Gaussian at a time.

To identify this mid-point, we make an observation from the computation of $S(t)$ in this idealized case as shown in Figure \ref{fig:idealized_alg}. 
Namely, we note that in regions where the two clusters have similar counts, $S(t)$ increases to reach a local maximum.

In our algorithm, we exploit this local maximum in order to identify the midpoint between Gaussian components, creating disjoint intervals of time in which each interval corresponds to a single dominant Gaussian curve. 
Once these midpoints are defined, the problem is reduced to identifying the parameters of the dominant Gaussian components in each interval. 
As our theoretical results will indicate, the task of identifying such parameters is simple as long as the Gaussian curves are well-separated. 

After determining the initialization parameters from each Gaussian, done as in the single peak case, we again use gradient descent to optimize parameters due to the discrete nature of the observations.
In the case of more than one Gaussian, we perform an alternating minimization, as the procedure works well empirically to learn separated mixtures.

\subsection{Algorithmic Intuition: Multiple Peaks with Noise}
In order to account for the fact that there may be noise in observations, we utilize averaging in learning the $a_k$ parameters.
That is, rather than let the estimate of $a_k$ depend on one value of $S(t)$, we use averaging so that the noise from multiple observations can be handled.
In practice, this allows for a better estimation of the curvature of each Gaussian component in the mixture.

Moreover, in practice, the last question which remains is to determine the number of mixtures $r$ used in the mixture.
To determine a value of $r$ which fits the data well but can be used in out-of-sample prediction, we use the BIC criterion \citep{ding2018model, schwarz1978estimating}, and select the number of peaks as 
\[ r^* \in \argmin_{r} (3r) \ln T - 2\ln L_r(\{\hat{a}_k, \hat{b}_k, \hat{c}_k \}_{k = 1, \dots, r})\,. \]
Here, $\{\hat{a}_k, \hat{b}_k, \hat{c}_k \}_{k = 1, \dots, r}$ represent the $3r$ learned parameters from the procedure outlined above.
This allows us to justify that the model fits the data well without using too many parameters indicative of overfitting.

Our algorithms are formalized using pseudo-code in Algorithms \ref{alg:initialization} and \ref{alg:alternating}. 
In the next section, we provide theoretical results which indicate the effectiveness of the algorithm subject to bounded noise.

\begin{algorithm}[t]
\begin{algorithmic}[1]
\REQUIRE{Time series $N(t)$ for $t = 1, \dots, T$, number of peaks $r$} 
\ENSURE{$a_k, b_k, c_k$, initial parameters for each mixture} \\
\STATE{Compute $S(t) = \log \frac{N(t+1)}{N(t)} - \log \frac{N(t)}{N(t-1)}$}\\
\STATE{Compute a sorted vector $x[1, \dots, r-1]$, consisting of the locations of $r-1$ peaks of $S(t)$.\protect\footnotemark}
\STATE{Let $x[0] = 0$ and $x[r] = T$.}\\
\FOR{$k=1,2,\ldots, r$}
\STATE{$a_k = - \frac{1}{2 (x[k] - x[k-1])} \sum_{t = x[k-1]}^{x[k]-1} S(t)$} \\
\STATE{$M_k = \textrm{max}_{x[k-1] \leq t \leq x[k]} N(t)$ }\\
\STATE{$C_k = \textrm{argmax}_{x[k-1] \leq t \leq x[k]} N(t)$}
\STATE{$b_k = 2 C_k a_k, c_k =  \log M_k - C_k^2 a_k$}
\ENDFOR
\end{algorithmic}
 \caption{$\texttt{Initialize}(N(t), r)$ -- Initialization for $r$ mixtures}
\label{alg:initialization}
\end{algorithm}
\footnotetext{This can be done using, e.g., a peak-finding algorithm from \texttt{scipy} \citep{virtanen2020scipy}.}

\begin{algorithm}[t]
\begin{algorithmic}[1]
\REQUIRE{Time Series $N(t)$} 
\ENSURE{$a_k, b_k, c_k$, learned parameters for each mixture}
\STATE{$\hat{a}_k, \hat{b}_k, \hat{c}_k = \texttt{Initialize}(N(t), r)$ for $k = 1, \dots, r$}
\STATE{For each $k$, $a_k = \hat{a}_k$, $b_k = \hat{b}_k$, $c_k = \hat{c}_k$}
\WHILE{convergence not yet reached}
\FOR{$k = 1, 2, \ldots, r$}
\STATE{$N_{-k} (t) = \sum_{i \neq k} e^{-a_i t^2 + b_i t + c_i}$}
\STATE{$a_k, b_k, c_k = \textrm{argmin}_{a, b, c} \sum_{t = 1}^T \left(N(t) - N_{-k}(t) - e^{-a t^2 + bt + c} \right)^2$}
\ENDFOR
\ENDWHILE
\end{algorithmic}
\caption{Alternating Minimization}
\label{alg:alternating}
\end{algorithm}


\subsection{Parameter Estimation of the Reduced Form Model}

We are able to provide a provable guarantee for Algorithm \ref{alg:initialization} for the case $r = 2$, and are able to extend to the case $r > 2$ so long as only two peaks are non-negligible at any given time. 
First, we recall that \eqref{eq:mixtures} can be written in the following form for $r = 2$:
\begin{equation} \label{eq:mixtures-rewrite}
    N(t) = \left(\sum_{k = 1}^2 M_k e^{-a_k(t-C_k)^2}\right) (1 + \varepsilon_t) \,,\qquad k = 1, 2 \,.
\end{equation}
Without loss of generality, we will use the convention $C_1 \leq C_2$.

To present our theoretical result, we will make the assumption that $a_1 = a_2$, which will aid in clarity.
However, this assumption can be relaxed, as is discussed in the appendix.
The relaxation of the assumption requires a more intricate bound on the size of the noise $\varepsilon_t$ as well as an assumption that the proportion of cases observed from Community $2$ is increasing on the observed interval, which are both reasonable in this context.
The additional assumptions on underlying parameters are justified after the theorem statement, which is as follows.

\begin{theorem}[Parameter Estimation Bounds] \label{thm:parameter_estimation}
Suppose $a_1 = a_2$, and suppose the parameters of the mixture model and the bound on the noise $\delta$ satisfy the following properties for some $0 < \epsilon  < \min\{M_1, M_2\} / 5$, and $a, M > 0$.
\begin{enumerate}
    \item $M_k \leq M\,, \quad k = 1, 2.$
    \item $a_k \geq a\,, \quad k = 1, 2.$
    \item $|C_1 - C_2| \geq 2 \sqrt{\frac{1}{a} \log \frac{M}{\epsilon}} \,.$
    \item $\delta \leq \delta^*(a_1, C_1, C_2, M_1, M_2)\,,$ where $\delta^*(a_1, C_1, C_2, M_1, M_2)$ is defined as in \eqdeltastar in the Appendix.
    \item $C_1, C_2 \in [0, T]$
\end{enumerate}
Denote $\hat{M}_k$ as the estimate in line 6 and $\hat{C}_k$ as the estimate produced in line 7 of Algorithm \ref{alg:initialization}. Then, for $k = 1, 2$
\begin{align}
    M_k [1 - \delta] &e^{-a_k(\lceil C_k \rceil - C_k)^2} \leq \hat{M}_k \leq [M_k + \epsilon] [1 + \delta] \,, \label{eq:mk} \\
    |\hat{C}_k - C_k| &\leq \sqrt{\frac{1}{a_k} \log \left(\frac{M_k}{M_k e^{-a_k(\lceil C_k \rceil -C_k)^2} \left( \frac{1-\delta}{1+\delta}\right) - \epsilon } \right)} \label{eq:ck} \\
    &\approx \sqrt{\frac{1}{a_k} \left(\frac{1 -  e^{-a_k(\lceil C_k \rceil - C_k)^2} \left( \frac{1-\delta}{1+\delta}\right) }{e^{-a_k(\lceil C_k \rceil -C_k)^2} \left( \frac{1-\delta}{1+\delta}\right)} + \frac{\epsilon}{M_k} \right)} \nonumber\,.
\end{align}
\end{theorem}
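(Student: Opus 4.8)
The plan is to isolate one geometric fact and then do elementary sandwiching. Write $\log N(t) = g(t) + \log(1+\varepsilon_t)$ with $g(t) = \log\bigl(M_1 e^{-a(t-C_1)^2} + M_2 e^{-a(t-C_2)^2}\bigr)$, so that $S(t)$, being the discrete second difference of $\log N$, splits as $S(t) = \Delta^2 g(t) + \Delta^2\log(1+\varepsilon_t)$. The single geometric fact I need is: \emph{the index $x[1]$ produced in line~2 of Algorithm~\ref{alg:initialization} satisfies $C_1 + \sqrt{a^{-1}\log(M_1/\epsilon)} \le x[1] \le C_2 - \sqrt{a^{-1}\log(M_2/\epsilon)}$}; call this interval the ``good valley.'' It is nonempty precisely by Condition~3, and once $x[1]$ lies in it, $\lceil C_1\rceil \in [0,x[1]]$ and $\lceil C_2\rceil \in [x[1],T]$ (using that $x[1]$ is integer, that $C_1<x[1]<C_2$, and Condition~5), while the off-component term is at most $\epsilon$ throughout each of $[0,x[1]]$ and $[x[1],T]$, since $t\mapsto M_2 e^{-a(t-C_2)^2}$ is increasing on $t\le x[1]\le C_2$ and $t\mapsto M_1 e^{-a(t-C_1)^2}$ is decreasing on $t\ge x[1]\ge C_1$.

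\textbf{Step 1 (localizing $x[1]$): the main step.} Differentiating $g$ and letting $p(t)=M_1 e^{-a(t-C_1)^2}/\bigl(M_1 e^{-a(t-C_1)^2}+M_2 e^{-a(t-C_2)^2}\bigr)$ be the weight of the first component, one gets $g'(t) = -2a\bigl(t - p(t)C_1 - (1-p(t))C_2\bigr)$ and hence $g''(t) = -2a\bigl(1 - 2a(C_2-C_1)^2\,p(t)(1-p(t))\bigr)$. Thus, in the noiseless case $S(t)\approx g''(t)$ equals the baseline $-2a$ away from the crossover and forms a single bump of height $\asymp a^2(C_2-C_1)^2$ above baseline, peaked at the equal-mass point $t^\ast = \tfrac12(C_1+C_2) + \tfrac{\log(M_1/M_2)}{2a(C_2-C_1)}$ (here $\Delta^2 g$ and $g''$ differ only by a controlled lower-order term, visible either by Taylor expansion or by direct manipulation of the ratios $N(t{+}1)/N(t)$). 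The noise contributes $|\Delta^2\log(1+\varepsilon_t)| \le 2\log\tfrac{1+\delta}{1-\delta}$, and Condition~4 ($\delta\le\delta^\ast$, with $\delta^\ast$ defined as in~\eqdeltastar in the Appendix) is exactly calibrated so that this perturbation is below the prominence of the bump; consequently the peak detected in line~2 lies in the sub-interval around $t^\ast$ on which the bump exceeds baseline plus the noise budget. Using $M_k\le M$ together with Condition~3, that sub-interval sits inside the good valley, which gives the claim.

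\textbf{Step 2 (the bounds, given Step~1).} For the upper bound in~\eqref{eq:mk}: on $[x[k-1],x[k]]$ we have $\sum_j M_j e^{-a(t-C_j)^2} \le M_k e^{-a(t-C_k)^2}+\epsilon \le M_k+\epsilon$ by the valley clearances, so $\hat M_k = \max_t N(t) \le (M_k+\epsilon)(1+\delta)$. For the lower bound, evaluate at $t=\lceil C_k\rceil$, which lies in the $k$-th interval by Step~1, and drop the nonnegative cross term: $\hat M_k \ge N(\lceil C_k\rceil) \ge M_k e^{-a(\lceil C_k\rceil - C_k)^2}(1-\delta)$. For~\eqref{eq:ck}: since $\hat M_k = N(\hat C_k)$ with $\hat C_k$ in the $k$-th interval, the clearance gives $\hat M_k \le \bigl(M_k e^{-a(\hat C_k-C_k)^2}+\epsilon\bigr)(1+\delta)$; chaining with the lower bound on $\hat M_k$ yields $e^{-a(\hat C_k-C_k)^2} \ge e^{-a(\lceil C_k\rceil-C_k)^2}\tfrac{1-\delta}{1+\delta} - \tfrac{\epsilon}{M_k}$, and applying $-\tfrac1a\log(\cdot)$ to both sides gives~\eqref{eq:ck}; the displayed approximation is then the elementary bound $\log(1/u)\le (1-u)/u$ together with a first-order treatment of the $\epsilon/M_k$ term.

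\textbf{Main obstacle.} Step~1 is where all the work is: one must quantify the shape of $S(t)$ — a discrete second difference of the log of a two-Gaussian mixture, corrupted by bounded noise — sharply enough to certify that the detected peak cannot leave the good valley, and to read off the precise tolerance $\delta^\ast$ of~\eqdeltastar. The relaxation $a_1 \ne a_2$ (and the extension to $r>2$ with at most two active components at a time) requires the analogous but more delicate bump analysis, which is why those cases are deferred to the Appendix.
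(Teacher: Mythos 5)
Your overall architecture mirrors the paper's: localize the peak of $S(t)$ between the two components, then sandwich $\hat M_k$ and $\hat C_k$; your Step~2 is essentially the paper's final computation. The gap is in Step~1. You claim the detected index $x[1]$ lands in the ``good valley'' $[C_1+\sqrt{a^{-1}\log(M_1/\epsilon)},\,C_2-\sqrt{a^{-1}\log(M_2/\epsilon)}]$, i.e.\ at a point where \emph{both} components are at most $\epsilon$, and you lean on this to get the off-component bounded by $\epsilon$ on all of $[0,x[1]]$ and $[x[1],T]$ by monotonicity. That localization is strictly stronger than what is provable. When Condition~3 holds with near-equality and $M_1=M_2=M$, the good valley shrinks to (essentially) the single point $\tfrac12(C_1+C_2)$, which is generically not an integer, while $x[1]$ is an integer index, so the containment fails outright. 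More generally, at the boundary of the good valley both components already equal roughly $\epsilon$, so the case ratio $r(t)=M_2e^{-a(t-C_2)^2}/(M_1e^{-a(t-C_1)^2})$ there can be arbitrarily close to $1$ and the bump in $S$ is nearly at its maximum; no calibration of $\delta^*$ can then prevent the noisy argmax from sitting just outside the valley, at a point where the dominant component is still of order $\min\{M_1,M_2\}$ rather than $\epsilon$. Your claim is also internally inconsistent at the shared endpoint: the off-component of the left interval is component~2 and of the right interval is component~1, and both can be $\le\epsilon$ at $t=x[1]$ only if $x[1]$ lies in the good valley.

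What the paper actually proves is weaker and suffices. Using the exact identity $S(t)=-2a_1+\log\bigl(1+\tfrac{r(t)}{(1+r(t))^2}K\bigr)+(\text{noise})$ with $K=e^{2a_1(C_1-C_2)}+e^{-2a_1(C_1-C_2)}-2$, the argmax $t_m$ is shown only to satisfy $\epsilon/(\min\{M_1,M_2\}\tfrac{1-\delta}{1+\delta}-\epsilon)\le r(t_m)\le$ its reciprocal, whence $C_1\le t_m\le C_2$ (Lemma \ref{lem:midpoint}). The off-component is then controlled only at the single points $\hat C_1$ and $\hat C_2$, via a separate contradiction argument: if component $k$ did not dominate at $\hat C_k$, then $N(t)\le\min\{M_1,M_2\}[1-\delta]$ on the whole sub-interval, contradicting $N(C_k)>M_k[1-\delta]$; pointwise dominance plus Lemma \ref{lem:sufficient-assumption-1} then yields the $+\epsilon$ terms in \eqref{eq:mk} and \eqref{eq:ck}. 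To repair your proof, replace the good-valley containment with this weaker ratio-band localization of $t_m$ together with the argmax-dominance argument at $\hat C_k$; your Step~2 then goes through unchanged.
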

\begin{proof}
(Sketch) We provide a sketch of the three major steps of the proof, which are discussed in Section \appendixalgproof of the Supplementary Material. 
The proof of this claim follows in three steps.
First, we show that Assumptions 1-3 result in a condition where, if one component of the mixture model is dominant, then the other is at most $\epsilon$.
Next, we show that because the noise is small due to Assumption 4, the value defined as 
\[ t_m = \argmax_{1 \leq t \leq T-1} S(t) \,,\]
which corresponds to the output of Line 2 in Algorithm \ref{alg:initialization}, identifies an appropriate point which constitutes a midpoint between the two components of the mixture.
That is, the estimate of $\hat{C}_1$ will be in a range where component 1 comprises the majority of cases, and $\hat{C}_2$ will be selected at a point in time where component 2 comprises the majority of cases.
Finally, we combine these results to show that, because the contribution of the non-dominant component is at most $\epsilon$, and the estimates of $\hat{M}_k$ and $\hat{C}_k$ are associated with the correct component $k$, that the claim holds.
\end{proof} 
$ $\newline

\noindent \textbf{Interpretation of Assumptions.\quad} The first condition of this theorem is straight forward, as it states that the number of daily cases in each mixture must be bounded.
The second condition then states that each Gaussian component must have some curvature, i.e. each component can not be too flat.
The third condition is that of \emph{temporal separation}, and requires that the two components of the mixtures are spread apart in time.
Such a condition is common to the study of identification of mixtures, as the separation is key acknowledging that the object of study should be described as distinct components forming a mixture (see, e.g. \cite{daskalakis2017ten}).
In fact, multiplying both sides of this third condition by $\sqrt{a}$, and identifying $a$ with an inverse variance term, we see that it is similar to a ``mean-over-variance'' condition of these works. 
However, the condition here instead reflects a ``mean-over-standard deviation'' condition as the requirement is on $\sqrt{a}$ as opposed to $a$ itself.
In the identification of distributions, separation is usually determined as a function of mean and variance of component distributions, whereas here the main requirement is temporal separation.
The fourth condition simply requires that the observation noise is not particularly large.
Moreover, because the bound on $\delta$ is non-decreasing in $|C_1 - C_2|$, we see that the temporal separation condition allows for the algorithm to become more robust.
Finally, the fifth condition simply requires that for these bounds to hold, the peaks must be observed in the data. \\

\noindent \textbf{Example on US COVID-19 Case Data.\quad} The conditions outlined above are reasonable to assume in practice: For $a = 0.0005$, which roughly corresponds to each ``standard deviation'' of the Gaussian curve lasting approximately 30 days (such that 95\% of total cases in the individual outbreak occur over roughly 100 days), and $M = 300,000$, which is approximately the highest number of daily COVID cases seen in the United States as of September 2021, we find that Assumption 3 of the Theorem requires that $|C_1 - C_2| \geq 105$ for $\epsilon = 13,000$.
That is, under conservative estimates on the parameter bounds, and an $\epsilon$ which is an order of magnitude lower than the maximum number of cases $M$, the distance between peaks must be approximately $3.5$ months, is reasonable as the first two local maxima in daily new cases in the United States were April 9th and July 24th, which have 106 days between them.
For Assumption 4, estimates using the parameters above yields that $\delta$ may not exceed approximately 0.02\%, which is restrictive, but becomes reasonable when case counts are large.
Moreover, as we show in the following section, $\delta$ can be as large as 5\% in synthetic experiments and the combination of Algorithms \ref{alg:initialization} and \ref{alg:alternating} can find parameters which fit the data well, so long as the midpoint between the two components is determined reasonably. \\

Ultimately, our results show that parameter estimation bounds can be achieved which depend on the size of the overlap between the two Gaussian components of the mixture as well as the magnitude of the noise $\varepsilon_t$.
Even with this variability in parameter estimates from the initial results of Algorithm \ref{alg:initialization}, we find empirically that after the alternating minimization process of Algorithm \ref{alg:alternating} is performed, that the empirical results of the algorithm are surprisingly close to observed data, both in and out of sample.
These results are discussed in the following section.

\begin{figure}
    \centering
    \includegraphics[width=\textwidth]{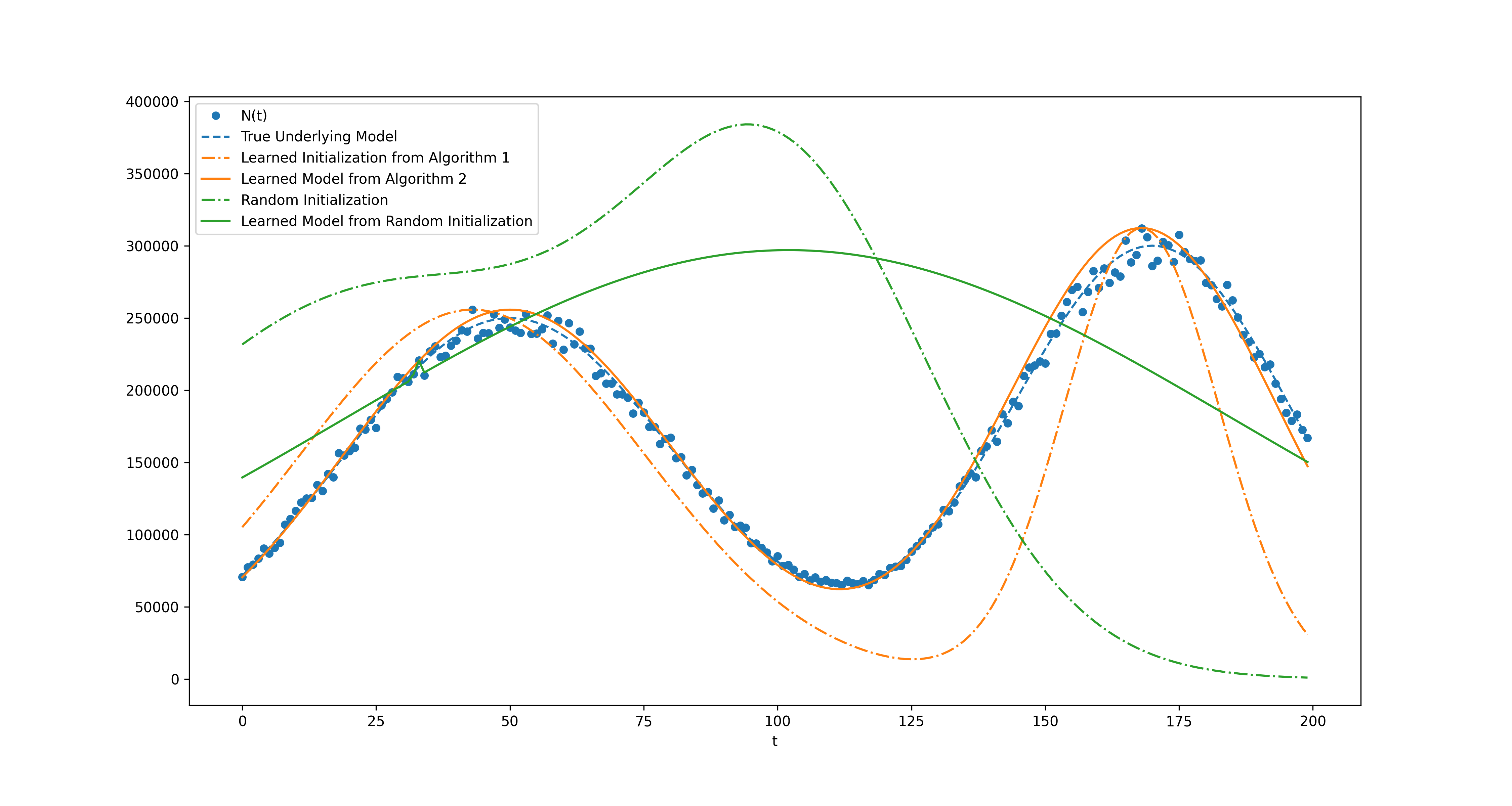}
    \caption{Performance of the Algorithm Compared to Random Initialization.}
    \label{fig:syn_results}
\end{figure}

\subsection{Synthetic Performance of the Mixture Approach}

We first show the performance of the algorithm on a synthetic dataset.
We provide a synthetic example of a time series with the following parameters:
\begin{align*}
    (a_1, M_1, C_1) &= (0.0005, 250000, 50) \,, \\
    (a_2, M_2, C_2) &= (0.0007, 300000, 170) \,,
\end{align*}
with 200 observations from $t = 0$ to $199$, and noise $\varepsilon_t$ which is uniform over the interval $[-0.05, 0.05]$.
Figure \ref{fig:syn_results} highlights the value of the initialization from Algorithm \ref{alg:initialization} over an arbitrary initialization selected at random. 
For the random baseline, we select $a_1$ and $a_2$ uniformly at random from $[0, 0.001]$, $M_1$ and $M_2$ uniformly at random from $[0, 300000]$, and $C_1$ and $C_2$ uniformly at random from $[0, 100]$ and $[100, 200]$, respectively.
For both types of initialization, we then apply Algorithm \ref{alg:alternating}, which performs gradient descent in an alternating fashion to minimize the cost function.
These results hold over multiple random initializations.
The $r^2$ score from using the initialization from Algorithm \ref{alg:initialization} results in $r^2 = 0.984$. 
In contrast, against the baseline of a random initialization run over $1,000$ different initializations drawn as above, the median $r^2$ from this baseline model is $-2.43\times 10^{5}$, with none of the $1,000$ initializations resulting in a fitted model which provides better performance than the initialization from Algorithm \ref{alg:initialization}.

\subsection{Empirical Performance of the Mixture Approach}
Figure \ref{fig:mixtures-acc} shows the utility of our forecasting method. 
The median absolute percent error (MAPE) of our approach is 15.9\%, for one week forecasts, compared to a median of 18.7\% for the same metric across all models used by the COVID-19 forecasting hub.
For two week forecasts, the MAPE of our approach is 20.6\% compared to a median MAPE of 25.9\% across other models in the forecasting hub.
Hence, in the short term we find that our model has low median forecasting error.
In general, using the BIC criterion to select $r$, we find that our method is competitive in terms of forecasting COVID-19 cases, while not requiring any auxiliary data such as mobility or an excessive computational burden.
The results of our forecasts are reproducible and can be viewed online at
\begin{center}
    \url{covidpredictions.mit.edu}
\end{center}

\begin{figure}
     \centering
         \includegraphics[width=0.32\textwidth]{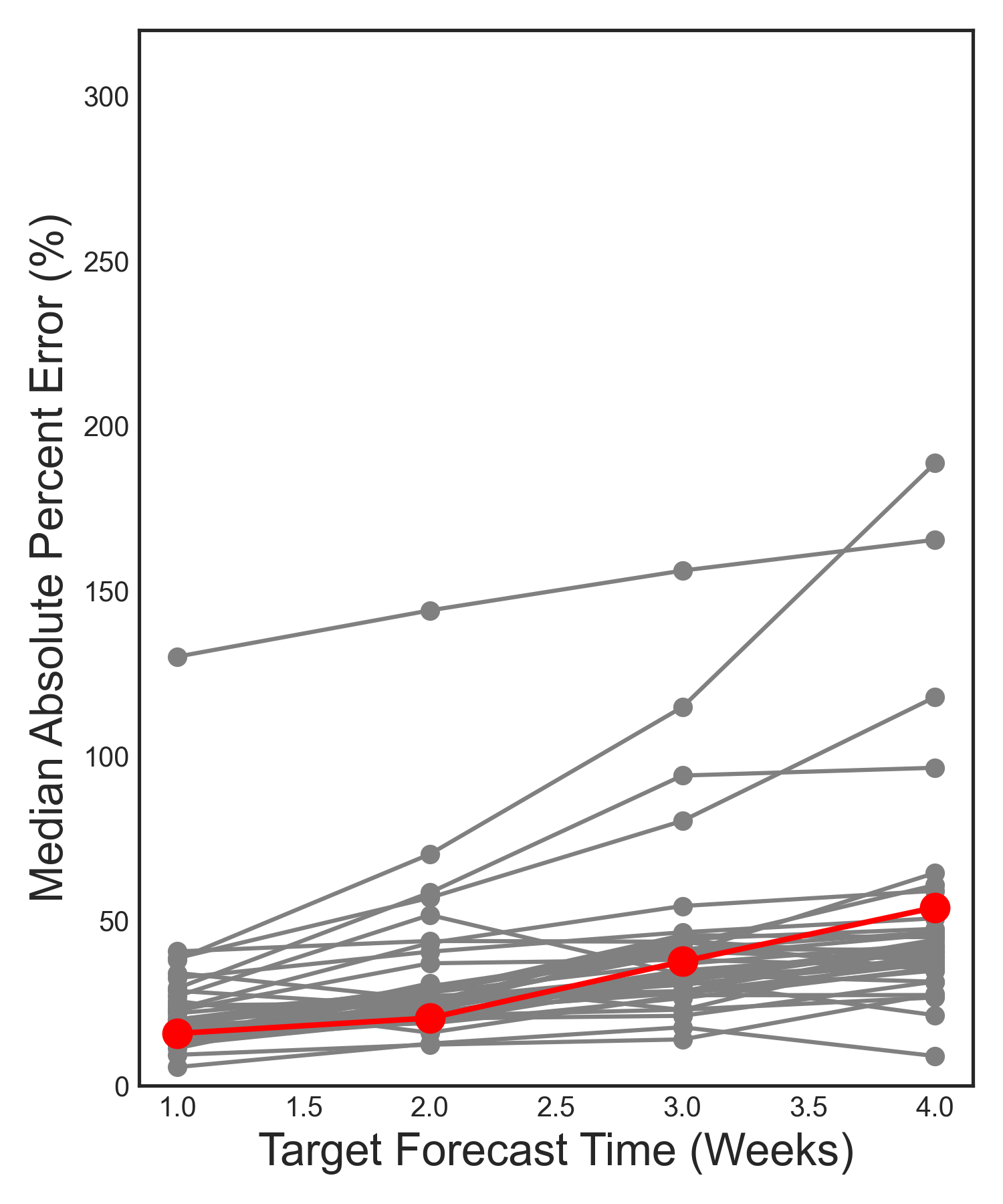}
         \includegraphics[width=0.635\textwidth]{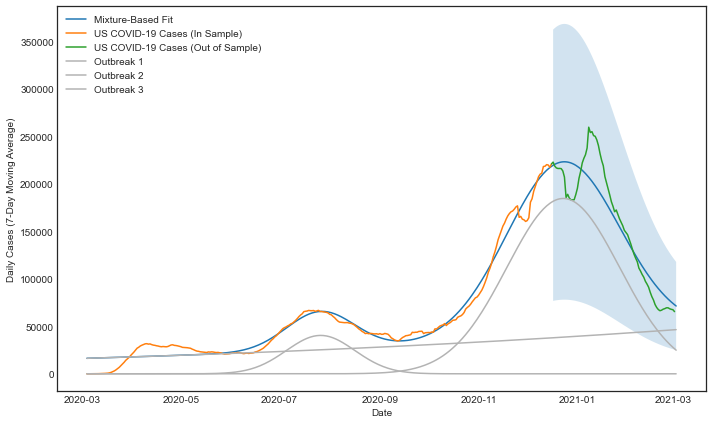}
        \caption{Accuracy of the Mixture Model in Predicting COVID-19 Cases for the United States. (Left) In terms of median accuracy of one week ahead forecasts, the mixture model (red) places 14th out of 34 available models, and 4th among models that do not use data apart from case counts.\protect\footnotemark~ (Right) An example of a remarkable forecast based on \eqref{eq:mixtures}, when the number of components $r$ is selected appropriately.}
        \label{fig:mixtures-acc}
\end{figure}

\footnotetext{{Predictions are collected from \url{https://github.com/reichlab/covid19-forecast-hub} and true case data is provided by Johns Hopkins.}}

\subsection{Limitations} Our methodology relies on a non-mechanistic approach, and implicitly makes assumptions on human response to the pandemic, as discussed in the following section.
When such assumptions are not satisfied, as is the case when there appears to be a new outbreak of cases emerging, our method does particularly poorly. This occurs because there is little evidence of the new cluster within existing data and hence $r$ can not be selected appropriately.
Hence, when new waves of the pandemic occur, our approach tends to deteriorate particularly in long term performance.
Such an issue provides clear avenues for future research, as predicting when a new wave of cases may occur is a critical problem in understanding the way in which epidemics spread.
Such work would increase prediction accuracy and inform policy decisions to optimize the allocation of healthcare resources.

\section{Generative Basis for Mixtures}
\label{s:theory}
The results of the previous section highlight the benefits of using the simple, mixture-based model \eqref{eq:mixtures} in COVID-19 modeling.
The model has a clear basis in the epidemiological literature, as Farr's law has been used to forecast the progress of epidemics since the 19th century, and still remains in use for forecasting drug mortality and the spread of infectious disease \citep{farr1840progress, darakjy2014applying, santillana2018relatedness}.
Moreover, the use of mixtures and the free parameter $r$ selected for the model allows it to be non-parametric, such that it can adapt to the multiple waves of the pandemic which have been observed empirically in the case of COVID-19.
As evidenced by the previous section, this choice of model has clear statistical benefits, as it is flexible, can be provably learned from data, and has reasonable forecasting ability.
Moreover, the simplicity of the learning procedure, which simply relies on the analysis of the time series $N(t)$, lacks the same computational burden associated with complex, network-based models which use auxiliary data to provide forecasts for the spread of infectious disease \citep{aleta2020modelling}.

That being said, while the model \eqref{eq:mixtures} provides an intuitive mixture model, can be learned efficiently from data, and performs well empirically, it does not immediately provide interpretation in the same way as traditional mechanistic
models, as it does not explicitly encode epidemiological parameters such as the infectiousness of disease into its parametric form. 
To this end, in this section we introduce a simple stochastic model that captures population heterogeneity, incorporates traditional epidemiological dynamics, and provides a mechanistic justification for the model \eqref{eq:mixtures}.
Our analysis consists of a tight statistical characterization of this stochastic model which provides the bridge between mechanistic models and the non-mechanistic model of \eqref{eq:mixtures}.
In Section \ref{ss:theory-mobility}, we then provide empirical evidence which lends validity to the generative model.

The model \eqref{eq:mixtures} has two key features which we wish to explain through a generative model: the Gaussian shape of each component and the summation which allows for multiple components to be present in the data.
First, we show how the Gaussian components in each term of the sum can arise when the individuals in the social network partake in a process we term \emph{degree pruning}, which refers to individuals severing ties with one another over time.
As we show, when degree pruning occurs at a constant rate and the population is sufficiently large, the case counts in a single community precisely follow a Gaussian shape.
To show this result for a single component, we restrict to the case where the graph consists of a single community as modeled by an Erd\H{o}s-R\'enyi graph and provide a detailed probabilistic analysis of the generative model in this case.
We also provide a discussion on how Gaussian curves can arise in more general models with a single community, for example when degree distributions are not homogenous.

Next, we show how the generative model can explain the summation which allows for $r$ separate components in the mixture model \eqref{eq:mixtures}.
Namely, by allowing the underlying graph structure to be drawn from a stochastic block model, which explicitly encodes community structure, we are able to show that the observed time series can consist of multiple distinct components which each take the form of Gaussian curves.
Specifically, our model indicates that if there are few connections between communities, then with high probability there will be temporal separation between the first case in the originating community and the first case in another community.

\begin{figure}
    \centering
    \includegraphics[width=0.7\textwidth]{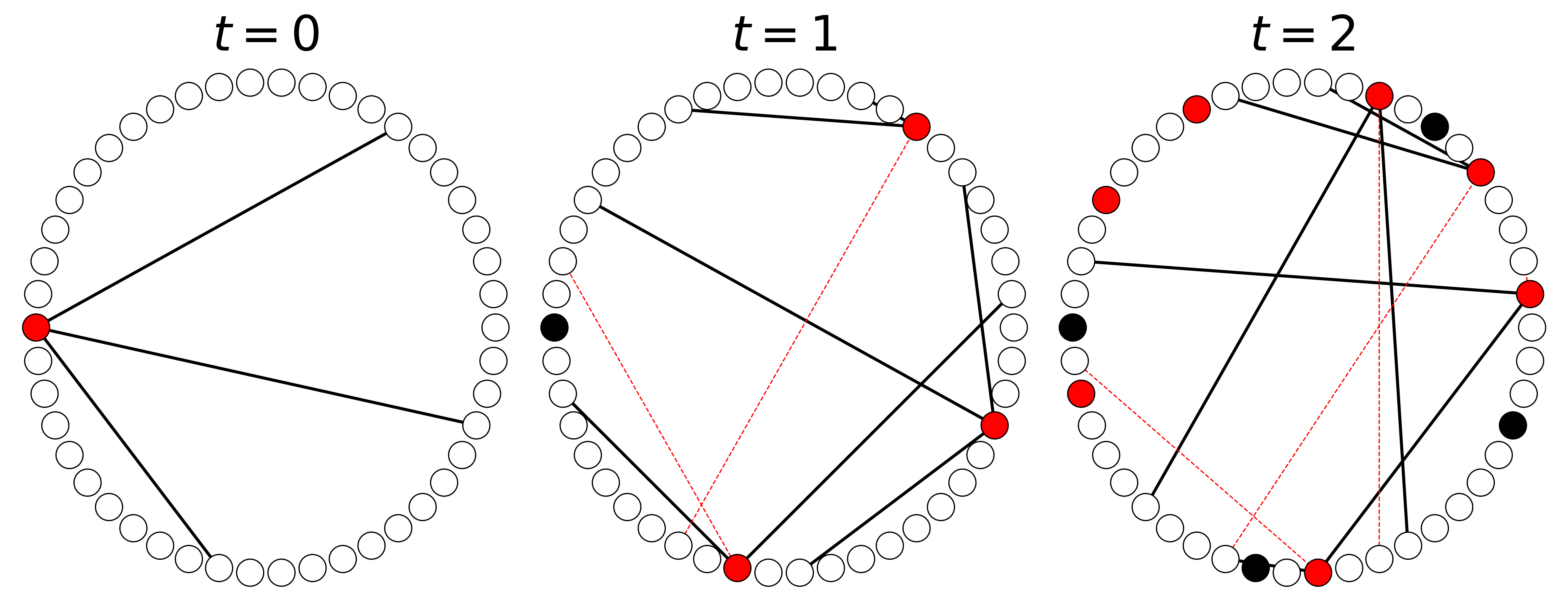}
    \caption{Visualization of Degree Pruning with Deferred Randomness. 
    At each time $t$, the edges associated with infected nodes (red) are revealed, beginning with a single infected node at time $t=0$.
    After a single epoch, each infected node becomes recovered (black).
    As $t$ increases, individuals in the population are also continually severing ties, as indicated by the dashed red edges which represent those edges which would have been present had they not been pruned.
    For simplicity, this visualization assumes $\beta = 1$, i.e. if there is an edge between an infected node and a susceptible node, the susceptible node will become infected at the next time step.}
    \label{fig:degree-pruning}
\end{figure}

\subsection{Justification of Gaussian Components}
\label{ss:theory-gaussian}

In this section, we provide a simple generative theoretical model which results in the Gaussian shape observed in \eqref{eq:mixtures}, and restrict to the case $r = 1$ for exposition. \\

\noindent \textbf{Model Description for Single Community.\quad} The model begins with a set of nodes $V$, where each node $v \in V$ represents an individual within the community, and for notation we let $|V| = n$ represent the number of nodes.
We consider the progression of infection among nodes in $V$ across discrete time steps $t = 0, 1, \dots,$.
At each time $t$, as in traditional network-based models of epidemics, an individual is either susceptible, infected, or recovered.
We denote $S(t) \subseteq V$, $I(t) \subseteq V$, and $R(t) \subseteq V$ as the sets representing susceptible, infected, and recovered individuals, respectively.
At each time $t$, we have that the sets $S(t)$, $I(t)$, and $R(t)$ are disjoint, and $S(t) \cup I(t) \cup R(t) = V$.
That is, the model has an equivalent representation as the network-based models discussed in Section \ref{ss:background-network}.
In our model, at $t = 0$, we assume the initial conditions 
\[ I(0) = \{ v_0 \}, \qquad R(0) = \emptyset, \quad \text{ and } \quad S(0) = V \setminus \{v_0\} \,, \]i.e. that there is a single initially infected node in the graph and that the remaining nodes are susceptible.

Next, we describe the mechanisms by which infection can spread between nodes.
In this model, infection spreads based on edges which exist between pairs of nodes in $V$.
For analytical purposes and clarity of presentation, we consider a model of \emph{deferred randomness}, in the sense that at time $t$, the only edges which are revealed will be those associated with nodes in $I(t)$.
For example, at time $t = 0$, the edges revealed will be those associated with $v_0$, as visualized in the first panel of Figure \ref{fig:degree-pruning}.
Edges are revealed between infected and susceptible nodes based on a probability which is time-varying, and we assume that the presence of each is determined independently of all other edges.
Specifically, for any node $v_i \in I(t)$ and any node $v_s \in S(t)$, the probability that an edge forms between $v_i$ and $v_s$ at time $t$ is equal to 
\[ \frac{d}{n} \gamma^t \,,\]
where we recall $n$ is the number of nodes, $0 < \gamma < 1$ captures the aforementioned notion of degree pruning\footnote{We assume here that $\gamma < 1$. In the case $\gamma = 1$, for finite $n$ the dynamics become that of a traditional mean-field SIR model as discussed in, e.g. \cite{xue2017law}, where the initial stages of the epidemic are a pure exponential growth and the size of the susceptible population governs the disease dynamics.},  and $d$ is a parameter which captures a notion of average degree measure of the graph, in the absence of any degree pruning.
Specifically, the term $\gamma^t$ reflects the notion that at each time step $t$, each edge which would have been revealed is removed from the graph with probability $1 - \gamma$.
This introduction of the parameter $\gamma$ is precisely what will allow for the quadratic term in the Gaussian form of \eqref{eq:mixtures} to appear in the analysis, as otherwise one would expect pure exponential growth in this model in initial stages when $n$ is large.

With the randomness of the graph structure presented, in order to complete the characterization of the model we specify the epidemiological parameters associated with the infectiousness of the disease and the rate at which individuals who are infected become recovered.
We assume there is a parameter $0 < \beta \leq 1$ such that, if there is an edge between an infected individual and a susceptible individual at time $t$, infection spreads with probability $\beta$.
Hence, for any node $v_i \in I(t)$ and any node $v_s \in S(t)$, the probability that $v_i$ infects $v_s$ becomes exactly $\beta \frac{d}{n} \gamma^t$.

In the model, we also assume that if an individual is infected at time $t$, then they become recovered at time $t+1$.
This assumption, which states that each infection lasts for a single time epoch, is no more restrictive than the assumption that nodes cure themselves with a constant probability at each time step, as is often considered in the literature (cf. \cite{chakrabarti2008epidemic, easley2010networks, ruhi2015sirs}). 
In such literature, the assumption of a constant probability of cure at each epoch results in a distribution of infection time which is memoryless, and is often not the case for infectious disease.
Here, by making each epoch last between one and two weeks, one can model the case in which the infectious period lasts for a constant amount of time.
This still results in an approximation, but appears to perform well in practice as was shown in Section \ref{s:algorithm}. \\

With the model defined for a single community, we now move on to describe the results of the generative model, which provides the link between the Gaussian shape of \eqref{eq:mixtures} with the notion of degree pruning. 
Our first result exactly characterizes this connection, as we show that in expectation, the number of infected individuals in the model will follow the shape of the Gaussian curve.

In Theorem \ref{thm:gaussian-connection}, we make use of $O$ notation with respect to $n$, in which we say, for two functions $f$ and $g$, $f(n) = O(g(n))$ if there exists an $m > 0$ and a value $n_0$ such that for all $n \geq n_0$, $f(n) \leq m g(n)$.
That is, we provide results which hold for each $n$, and indicate that our results become apparent as the size of the population becomes large.
Hence, our result relates to previous work which analyzes the SIR model in cases where the size of the susceptible population is large, which is often a paradigm used in understanding why initial stages of an epidemic process can result in exponential growth (cf. \cite{ma2020estimating}).
In standard SIR models, a key assumption is that decay of the susceptible population size results in diminishing rates of infection \citep{kermack1927contribution}.
Here, we instead assume that it is human behavior which ``flattens the curve'' as opposed to the depletion of susceptible individuals, as is made apparent by this assumption that the population size becomes arbitrarily large compared to the number of infections.

\begin{theorem} \label{thm:gaussian-connection}
Define the quantity $N(t) := |I(t)|$, which represents the number of infected individuals at time $t$.
Then,
\begin{equation} \label{eq:gaussian-expectation}
    \E[ N(t) ] = e^{\left(\frac{1}{2} \log \gamma\right) t^2 + \left( \log \frac{d\beta}{\sqrt{\gamma}}\right) t  } - O\left(\frac{t^2 e^{\log^2 (d \beta / \sqrt{\gamma} )/ \log (1/\gamma)}}{n}\right) \,,
\end{equation}
where we recall that $\gamma \in (0, 1)$ reflects the rate of degree pruning in the network, the parameter $d$ reflects the average degree of the graph in the absence of any degree pruning, and $\beta$ is the probability that infection travels across an edge in the graph.
\end{theorem}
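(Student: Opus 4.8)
The plan is to turn the networked dynamics into a one‑step recursion for $\E[N(t)]$, sandwich that recursion between $\mu(t):=\prod_{s=0}^{t-1}\beta d\gamma^s$ and $\mu(t)$ minus an $O(1/n)$ correction, and then iterate. A direct computation gives $\mu(t)=e^{(\frac12\log\gamma)t^2+(\log\frac{d\beta}{\sqrt\gamma})t}$, the claimed leading term, and since the exponent is a downward parabola centered at $t^\star:=\log(d\beta/\sqrt\gamma)/\log(1/\gamma)$ we have the uniform bound $\mu(t)\le e^{M^\star}$ with $M^\star:=\tfrac{\log^2(d\beta/\sqrt\gamma)}{2\log(1/\gamma)}$, so that $2M^\star$ is exactly the exponent appearing in the error term of \eqref{eq:gaussian-expectation}.

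Let $\mathcal F_t$ be the $\sigma$-algebra generated by the process through time $t$, and write $p_t:=\beta\frac dn\gamma^t$ for the probability that a fixed infected node infects a fixed susceptible node at time $t$. Conditionally on $\mathcal F_t$, each of the $|S(t)|$ susceptible nodes becomes infected independently with probability $q_t:=1-(1-p_t)^{N(t)}$, so $\E[N(t+1)\mid\mathcal F_t]=|S(t)|\,q_t$ and, $N(t+1)$ being a sum of independent Bernoullis, $\operatorname{Var}(N(t+1)\mid\mathcal F_t)\le\E[N(t+1)\mid\mathcal F_t]$. Combining the elementary bounds $N(t)p_t-\binom{N(t)}{2}p_t^2\le q_t\le N(t)p_t$ with the exact identity $|S(t)|=n-\sum_{s=0}^{t}N(s)$ yields
\begin{equation*}
\beta d\gamma^t N(t)-\frac{(\beta d\gamma^t)^2N(t)^2}{2n}-\frac{\beta d\gamma^t}{n}N(t)\!\sum_{s=0}^{t}N(s)\;\le\;\E[N(t+1)\mid\mathcal F_t]\;\le\;\beta d\gamma^t N(t).
\end{equation*}
Taking expectations of the upper bound and iterating from $\E[N(0)]=1$ gives $\E[N(t)]\le\mu(t)$, which is the leading term; the remaining work is the matching lower bound, for which I need second‑moment control that does not degrade with $n$.

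For that, combine $\E[N(t+1)^2\mid\mathcal F_t]=\operatorname{Var}(N(t+1)\mid\mathcal F_t)+(\E[N(t+1)\mid\mathcal F_t])^2\le\beta d\gamma^tN(t)+(\beta d\gamma^t)^2N(t)^2$ with the upper bound above; taking expectations gives $v_{t+1}\le\mu(t+1)+(\beta d\gamma^t)^2v_t$ for $v_t:=\E[N(t)^2]$, whose solution is $v_t\le\mu(t)^2\big(1+\sum_{j=1}^t\mu(j)^{-1}\big)$. Writing $\mu(s)=e^{M^\star}e^{-\frac12\log(1/\gamma)(s-t^\star)^2}$ and using the log‑concavity (hence unimodality) of $\mu$, one checks $\mu(t)^2/\mu(j)\le e^{2M^\star}$ and $\mu(t)/\mu(s)\le e^{M^\star}$ for $0\le s,j\le t$; hence $\E[N(t)^2]\le(1+t)e^{2M^\star}$ uniformly in $n$, and, via $\E[N(t)N(s)]=\E[N(s)\,\E[N(t)\mid\mathcal F_s]]\le\frac{\mu(t)}{\mu(s)}\E[N(s)^2]$, a comparable bound on $\E\big[N(t)\sum_{s\le t}N(s)\big]$. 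Finally, take expectations in the lower bound, set $e_t:=\mu(t)-\E[N(t)]\ge 0$, obtain $e_{t+1}\le\beta d\gamma^te_t+\tfrac1n\cdot O\!\big(\mathrm{poly}(t)\,e^{2M^\star}\big)$, and unroll, using $\prod\beta d\gamma^u\le e^{M^\star}$ and the $\gamma^t$ decay of the per‑step errors to get $e_t=O\big(t^2e^{2M^\star}/n\big)$; together with $\E[N(t)]\le\mu(t)$ this is \eqref{eq:gaussian-expectation}.

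The main obstacle is precisely the uniform second‑moment bound: any crude estimate like $N(t)\le n$ reinserts a factor of $n$ and kills the correction term, so one must genuinely use the recursion for $v_t$ — equivalently, that $N$ is stochastically dominated by a time‑inhomogeneous Galton--Watson process with generation‑$s$ mean offspring $\beta d\gamma^s$, since overlaps among the sets of nodes infected by distinct infectives only decrease counts — and then carry out the somewhat delicate summation of the resulting Gaussian‑type series, leaning on the unimodality of $\mu$, to see that $\E[N(t)^2]$ (and the cross term) stays bounded by a quantity independent of $n$. Everything downstream — the two Bonferroni inequalities, the identity for $|S(t)|$, and the final unrolling of the $O(1/n)$ terms — is routine bookkeeping.
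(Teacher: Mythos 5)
Your proposal is correct and follows essentially the same route as the paper's proof: condition on the filtration to identify the conditional binomial law of $N(t+1)$, sandwich the conditional mean via Bernoulli/Bonferroni bounds, establish second-moment and cross-moment bounds that are uniform in $n$ from the same recursion, and unroll the resulting error recursion against the upper bound $\E[N(t)]\le\mu(t)$. The only cosmetic differences are that you control the cross terms $\E[N(s)N(t)]$ via the tower property and the ratio $\mu(t)/\mu(s)$ (the paper uses $\E[XY]\le\tfrac12\E[X^2]+\tfrac12\E[Y^2]$) and that you solve the second-moment recursion in closed form before invoking unimodality of $\mu$, neither of which changes the substance of the argument.
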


Proof details can be found in Section \appendixgaussianconnectionproof of the Supplementary Material.
Theorem \ref{thm:gaussian-connection} is critical to the connection between mechanistic and non-mechanistic models of epidemics.
Farr's law has been used since the mid-19th century to estimate epidemics, and here we see that there is a generative explanation which gives the parameters of the Gaussian form a mechanistic interpretation.
Namely, the quadratic term in the exponent of the Gaussian represents the extent to which individuals are reacting to the progress of the virus, and the linear term in the exponent of the Gaussian represents a reproduction rate.
Such relationships will allow us to validate this interpretation using mobility data in Section \ref{ss:theory-mobility}.
This Theorem also represents a bridge between Gaussian parametric form and compartmental models, as the notion of degree pruning in discrete time corresponds to the affine form of $\beta - \gamma$ previously introduced in Section \ref{s:background}.
That is, by introducing a simple mechanism by which individuals can respond to the progress of an epidemic, the Gaussian form becomes apparent, and the parameters of the Gaussian can be interpreted as well.

Our statistical analysis also details concentration results regarding the behavior of the number of cases for each time $t$.

\begin{lemma}\label{lem:gaussian-concentration}
For the model above, again define $N(t) := |I(t)|$ as the number of cases in the model at time $t$. 
For any $\epsilon > 0$, and any $t$,
\begin{align*} 
& \P\left(N(t) \leq e^{\left(\frac{1}{2} \log \gamma\right) t^2 + \left( \log \frac{d\beta}{\sqrt{\gamma}} (1+\epsilon) \right) t} \right) \geq 1 - \sum_{s = 0}^{t-1} e^{-\min \{ \epsilon^2, \epsilon\} d \beta \gamma^s/4} \,.
\end{align*}
\end{lemma}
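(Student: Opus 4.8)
The plan is to prove this one-sided bound by a union bound over a sequence of favorable one-step events, using the deferred-randomness structure which makes $N(\cdot)$ behave like a multiplicative branching process. First I would fix the natural filtration $\mathcal{F}_s$ generated by $I(0),\dots,I(s)$ and record the conditional one-step law. At time $s$, each pair consisting of an infected node in $I(s)$ and a susceptible node in $S(s)$ independently carries a transmitting edge with probability $\beta\frac{d}{n}\gamma^s$, and $N(s+1)$ is at most the number of such transmitting pairs; since $|S(s)|\le n$ and $\mathrm{Binomial}(k,p)$ is stochastically increasing in $k$, conditionally on $\mathcal{F}_s$ the variable $N(s+1)$ is stochastically dominated by $\mathrm{Binomial}\!\big(N(s)\,n,\ \beta\tfrac{d}{n}\gamma^s\big)$, a sum of independent Bernoullis whose mean is exactly $N(s)\,d\beta\gamma^s$. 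The crucial point is that on $\{N(s)\ge 1\}$ this mean is at least $d\beta\gamma^s$, and larger $N(s)$ only sharpens concentration, while on $\{N(s)=0\}$ everything below is trivial since then $N(s+1)=0$.

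Next I would introduce the good events $E_s=\{\,N(s+1)\le(1+\epsilon)\,d\beta\gamma^s\,N(s)\,\}$ for $s=0,\dots,t-1$ and apply a standard multiplicative Chernoff bound to the dominating binomial: conditionally on $\mathcal{F}_s$,
\[
\P\!\big(E_s^c\mid\mathcal{F}_s\big)\ \le\ \exp\!\Big(-\tfrac{\min\{\epsilon^2,\epsilon\}}{3}\,N(s)\,d\beta\gamma^s\Big)\ \le\ \exp\!\Big(-\tfrac{\min\{\epsilon^2,\epsilon\}}{4}\,d\beta\gamma^s\Big),
\]
with the bound holding trivially when $N(s)=0$; taking expectations over $\mathcal{F}_s$ gives the unconditional estimate $\P(E_s^c)\le \exp(-\min\{\epsilon^2,\epsilon\}\,d\beta\gamma^s/4)$.

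I would then close the argument by induction on the event $\bigcap_{s=0}^{t-1}E_s$. Using $N(0)=1$, on this event $N(s)\le\prod_{j=0}^{s-1}(1+\epsilon)\,d\beta\gamma^j$ for every $s\le t$, so in particular
\[
N(t)\ \le\ (1+\epsilon)^t(d\beta)^t\gamma^{t(t-1)/2}\ =\ \exp\!\Big(\tfrac12(\log\gamma)\,t^2+t\log\!\big(\tfrac{d\beta}{\sqrt\gamma}(1+\epsilon)\big)\Big),
\]
which is exactly the threshold in the statement; a union bound over the complements then yields $\P(N(t)>\text{threshold})\le\sum_{s=0}^{t-1}\P(E_s^c)\le\sum_{s=0}^{t-1}e^{-\min\{\epsilon^2,\epsilon\}\,d\beta\gamma^s/4}$.

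The step needing the most care is the one-step Chernoff estimate, specifically arguing that the exponent can be made free of both $N(s)$ and $|S(s)|$. Replacing $|S(s)|$ by $n$ in the stochastic-domination step above makes the conditional mean of the dominating binomial exactly $N(s)\,d\beta\gamma^s$, so the threshold $(1+\epsilon)\,d\beta\gamma^s\,N(s)$ is a clean $(1+\epsilon)$-overshoot of that mean and the standard Chernoff bound applies without having to track how far below $n$ the quantity $|S(s)|$ actually is; then $N(s)\ge1$ replaces $N(s)$ by its worst case in the exponent, producing the $d\beta\gamma^s$ that appears in the stated probability. The remaining pieces — the telescoping product, the union bound, and matching the algebra $t\log(d\beta)-\tfrac{t}{2}\log\gamma+t\log(1+\epsilon)=t\log\!\big(\tfrac{d\beta}{\sqrt\gamma}(1+\epsilon)\big)$ to the exponent — are routine.
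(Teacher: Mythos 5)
Your proposal is correct and follows the same skeleton as the paper's argument: define the per-step good events $E_s=\{N(s+1)\le(1+\epsilon)\,d\beta\gamma^s N(s)\}$, establish a one-step conditional tail bound of $e^{-\min\{\epsilon^2,\epsilon\}d\beta\gamma^s/4}$, aggregate the failure probabilities over $s=0,\dots,t-1$, and telescope on the intersection using $N(0)=1$ to land exactly on the stated threshold. The only place you genuinely diverge is in how the one-step bound is obtained. The paper first shows that the conditional moment generating function of $N(s+1)$ given $\mathcal{F}_s$ is dominated by that of a Poisson random variable with parameter $d\beta\gamma^s N(s)$, and then invokes Bennett-type Poisson tail bounds (via the function $h(u)=2\frac{(1+u)\ln(1+u)-u}{u^2}$) to reach the exponent $\min\{x^2/(4d\beta\gamma^s),x/4\}$. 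You instead stochastically dominate $N(s+1)$ by the number of transmitting (infected, susceptible) pairs, a $\mathrm{Binomial}\!\left(N(s)\,n,\ \beta d\gamma^s/n\right)$ variable with mean exactly $N(s)\,d\beta\gamma^s$, and apply the standard multiplicative Chernoff bound with constant $1/3$, relaxing to $1/4$; this is somewhat more elementary and avoids the Poisson machinery, at the cost of not producing the MGF estimate that the paper's appendix states as a standalone lemma. Your treatment of the edge cases is sound: on $\{N(s)=0\}$ the event is trivial, replacing $|S(s)|$ by $n$ only enlarges the dominating binomial, and using $N(s)\ge 1$ to strip $N(s)$ from the exponent is exactly what the paper does. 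Finally, your plain union bound over the complements $E_s^c$ is an equivalent (and slightly cleaner) substitute for the paper's product-of-conditional-probabilities expansion followed by the generalized Bernoulli inequality.
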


The proof of this claim can be found in Section \appendixgaussianconcentrationproof of the Supplementary Material.
Therefore, it is likely that the number of cases differs from its expectation by what is effectively a change in contact rate, since only the linear term of the exponential is changed.
For this particular claim, when $t$ tends towards infinity, the bound on the probability becomes vacuous as $\gamma^t$ tends towards 0 and hence each term in the summation tends towards 1.
However, for large enough $t$, the expectation of $N(t)$ tends towards 0 regardless due to the Gaussian shape, which is to say that a simple Markov inequality can be utilized instead for sufficiently large $t$.
This regime, in which expected case counts are low, is not the focus of this work as we are primarily concerned with regimes in which number of cases are sufficiently large.
In the following section, we show that the result of Lemma \ref{lem:gaussian-concentration} is indeed non-trivial, by utilizing the result to show that distinct Gaussian components are likely to be observed when the model is generalized to handle multiple communities.

Ultimately, these statistical results show that the equation of the form \eqref{eq:mixtures}, in the special case where $r=1$, tightly characterizes the outcome of this natural stochastic process which encodes human behavior into the traditional networked SIR framework.

\subsection{Justification of Mixture Structure}
\label{ss:theory-sbm}

The previous section provided a theoretical model which justified the form of the equation \eqref{eq:mixtures} in the case $r = 1$.
In this section, we provide the full generative model which justifies the general equation \eqref{eq:mixtures} for the case $r > 1$.
The primary difference between this model and the model of the previous section is the set of connections in the underlying graph.
Whereas previously the graph was assumed to be drawn from a model based on an Erd\H{o}s-R\'enyi graph with deferred randomness, here we assume that the underlying graph of connections is drawn from a stochastic block model, which allows for community structure to be encoded into the model (see Figure \ref{fig:sbm_viz}).

\begin{figure}
    \centering
    \includegraphics[width=0.7\textwidth]{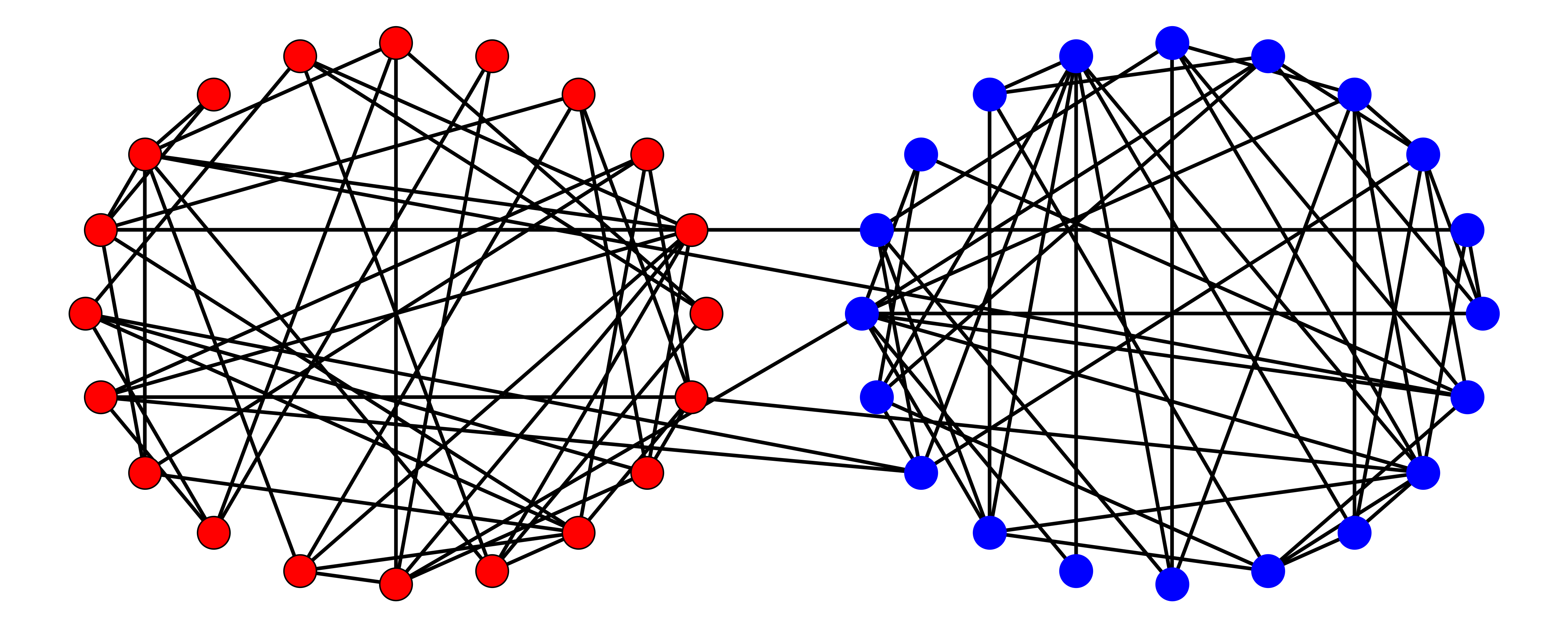}
    \caption{Example of a Stochastic Block Model with Two Communities}
    \label{fig:sbm_viz}
\end{figure}

With the introduction of this community structure, we see that each individual community will experience a Gaussian curve in expectation, stemming from the initial infected individual within each community.
The primary result of this section is to show that if the number of connections between each community is sufficiently small, then it is likely that the Gaussian curves from each community will be well-separated, resulting in observations according to \eqref{eq:mixtures} for $r > 1$.
We first present the details of the model and modification from the previous section, and then present our theoretical results. \\

\noindent \textbf{Description of Connections for Multiple Communities.\quad} 
In the model which accounts for community structure, we first decompose the vertex set $V$ into two disjoint sets $V_1$ and $V_2$, such that $V_1 \cup V_2 = V$, $V_1 \cap V_2 = \emptyset$.
To simplify notation, we will assume $|V_1| = |V_2| = n$, such that the total number of nodes in the graph is now $2n$.
We refer to each set $V_1$ and $V_2$ as communities, as they will represent groups of nodes with a high likelihood of knowing one another.
We will assume an initial condition that $I(0) = \{ v_0 \}$, where $v_0 \in V_1$ without loss of generality, $R(0) = \emptyset$ and $S(0) = V \setminus \{ v_0 \}$.

For this model, as before we assume a model of deferred randomness, where the edges of the graph are revealed according to the individuals who are infected.
However, for an infected individual in $V_1$, the probability of an edge forming with a susceptible individual in $V_1$ will be different than the probability of forming a connection with an individual in $V_2$.
Specifically, for given $v_i \in I(t)$ such that $v_i \in V_1$, and a node $v_s \in S(t)$, then the probability there is an edge between $v_i$ and $v_s$ at time $t$ is equal to $\frac{d_{in}}{n} \gamma^t$ if $v_s$ is also a member of $V_1$, and $\frac{d_{out}}{n} \gamma^t$ if $v_s$ is a member of $V_2$.
In the case where $d_{out} \ll d_{in}$, we see that there is community structure in the underlying graph.
For infection within $V_2$ itself, we will assume that degree pruning only begins in $V_2$ after a first node is infected in $V_2$.
That is, if we let $T$ represent the random variable denoting the time at which the first node in $V_2$ is infected, then for nodes $v_i \in I(t) \cap V_2$ and $v_s \cap V_2$, the probability $v_i$ would infect $v_s$ would be equal to $\frac{d_{in}}{n} \gamma^{t-T}$.
This feature of the model reflects that communities are not expected to react to the virus until it becomes an immediate threat to the individuals within the community.
The remaining epidemiological dynamics, including the infection rate $\beta$ and the recovery rate, remain the same as in the previous model. \\

The primary result of this section shows that, when the stochastic block model is parameterized such that the number of edges between communities is sufficiently small, the Gaussian curves from each community will become temporally well-separated, resulting in a mixture for the observed time series of infection.
The theorem is stated as follows.
\begin{theorem}\label{thm:time_til_next}
Define the following quantity, which represents the time at which the expected number of cases in community 1 is maximized:
\[ C_1 (d_{in},\beta, \gamma) = - \log \left(\frac{d_{in} \beta}{\sqrt{\gamma}} \right) / \log \gamma \,. \]
Suppose that, for some parameter $\delta \in (0, 1/2)$, the following conditions hold:
\begin{enumerate}
    \item The time at which expected number of cases in Community 1 is maximized has the bound
    \begin{equation}
        C_1(d_{in},\beta, \gamma)  < \delta e^{5\sqrt{\gamma}} + \log 20 / \log \frac{1}{\gamma}  \label{eq:sep-condition-1} \,.
    \end{equation}
    \item The expected number of connections between communities are not so large, specifically
    \begin{equation}
        d_{out}  \leq \frac{1}{2 \beta \sum_{s = 0}^{\lfloor - \left( \log \left( \frac{d_{in}\beta}{20\sqrt{\gamma}}\right) \right) / \log \gamma \rfloor - 1} (2d_{in}\beta)^s \gamma^{s(s+1)/2}}\log \left(\frac{1}{1 - \delta}\right) \label{eq:sep-condition-2}\,.
    \end{equation}
    \item The number of individuals in each community is sufficiently large,
    \begin{equation} \label{eq:sep-condition-3}
        n \geq 2 \beta d_{out} \,.
    \end{equation}
\end{enumerate}
If the above conditions hold, then the random variable $T$ which represents the time at which the first infection occurs in community 2 satisfies
\begin{equation}
    \P\left(T > C_1(d_{in},\beta, \gamma) - \log 20 / \log \frac{1}{\gamma} \right) \geq 1 - 2 \delta \,.
\end{equation}
\end{theorem}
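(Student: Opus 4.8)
The plan is to control the random time $T$ at which infection first crosses from community $1$ into community $2$ by coupling the early-stage process inside community $1$ to a branching-type recursion, bounding the total number of ``out-reaching'' opportunities, and then applying a union bound over these opportunities. Concretely, let $t^* := C_1(d_{in},\beta,\gamma) - \log 20 / \log\frac{1}{\gamma}$ be the target time. We want $\P(T > t^*) \geq 1 - 2\delta$, i.e. with probability at least $1-2\delta$, no edge to $V_2$ is ever revealed and activated by an infected node before time $t^*$. The event $\{T \le t^*\}$ requires that for some time $s \le t^*$, some node $v_i \in I(s)\cap V_1$ forms an edge to a susceptible node in $V_2$ (probability $\frac{d_{out}}{n}\gamma^s$ per pair) and that edge transmits (probability $\beta$). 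Since $|V_2| = n$ and the susceptible pool is at most $n$, conditioned on $|I(s)\cap V_1|$ the expected number of successful cross-infections at time $s$ is at most $|I(s)\cap V_1|\cdot n \cdot \frac{d_{out}}{n}\gamma^s \cdot \beta = \beta d_{out}\gamma^s |I(s)\cap V_1|$. By a union bound / first-moment argument over $s$,
\[
\P(T \le t^*) \le \beta d_{out} \sum_{s=0}^{\lfloor t^*\rfloor} \gamma^s\, \E\big[|I(s)\cap V_1| \,;\, T>s\big].
\]

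The main work is to bound $\E[|I(s)\cap V_1|]$ for $s$ up to roughly $t^*$ (equivalently, up to $-\log(d_{in}\beta/(20\sqrt\gamma))/\log\gamma$, which is where the summation cutoff in condition \eqsepconditiontwo~comes from). On the event $\{T>s\}$ the process restricted to $V_1$ is exactly the single-community degree-pruning model of Section~\ref{ss:theory-gaussian} (with internal degree $d_{in}$), so $\E[|I(s)\cap V_1|] \le $ the clean Gaussian expression $e^{(\frac12\log\gamma)s^2 + (\log(d_{in}\beta/\sqrt\gamma))s}$ from Theorem~\ref{thm:gaussian-connection} — strictly, I would use the crude recursive upper bound $\E[|I(s+1)\cap V_1|\mid I(s)] \le \beta d_{in}\gamma^s |I(s)\cap V_1|$ that drives that theorem's proof, iterating it to get $\E[|I(s)\cap V_1|] \le (d_{in}\beta)^s \gamma^{s(s-1)/2}$. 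Substituting this into the first-moment bound above produces a sum of the shape $\sum_s (d_{in}\beta)^s \gamma^{s(s-1)/2} \gamma^s = \sum_s (d_{in}\beta)^s \gamma^{s(s+1)/2}$; pulling out the factor $\beta d_{out}$ and comparing with the explicit denominator in condition \eqsepconditiontwo~gives $\P(T\le t^*) \le -\frac12 \cdot 2\log\frac{1}{1-\delta}\cdot(\text{telescoped constant}) $, which I would arrange to be at most $\log\frac{1}{1-\delta} \le -\log(1-\delta)$... wait — more carefully, condition~\eqsepconditiontwo~is exactly calibrated so that $\beta d_{out}\sum_{s} (2d_{in}\beta)^s\gamma^{s(s+1)/2} \le \frac12\log\frac{1}{1-\delta}$; the factor $2^s$ and the extra $\frac12$ provide slack to absorb (a) the discrepancy between $(d_{in}\beta)^s$ and the looser $(2d_{in}\beta)^s$ coming from a cleaner Chernoff-type bound on $|I(s)\cap V_1|$ rather than just its mean, and (b) a Markov-inequality step converting $\E[|I(s)\cap V_1|]$ into a high-probability bound so the conditioning on $\{T>s\}$ is legitimate. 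One then uses $\log\frac{1}{1-\delta}\le 2\delta$ for $\delta\in(0,1/2)$... actually $-\log(1-\delta) \le \delta/(1-\delta) \le 2\delta$ on $(0,1/2)$, yielding $\P(T\le t^*)\le 2\delta$, i.e. $\P(T>t^*)\ge 1-2\delta$ as claimed.

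The roles of the remaining hypotheses: condition~\eqsepconditionone~guarantees $C_1(d_{in},\beta,\gamma)$ — hence $t^*$ and the summation cutoff — is finite and that the Gaussian peak has not yet caused $\E[|I(s)\cap V_1|]$ to start decaying in a way that would invalidate the geometric-type tail manipulations; it is the condition under which truncating the sum at $\lfloor -\log(d_{in}\beta/(20\sqrt\gamma))/\log\gamma\rfloor-1$ rather than at $\lfloor t^*\rfloor$ loses nothing, because beyond that point the summand $(2d_{in}\beta)^s\gamma^{s(s+1)/2}$ is already in its decaying regime and contributes a bounded correction folded into the constants. Condition~\eqsepconditionthree, $n \ge 2\beta d_{out}$, ensures the per-step cross-infection probability $\beta d_{out}\gamma^s/n$ is genuinely a probability ($\le 1$) and that the ``at most $n$ susceptibles in $V_2$'' bound used above is not wasteful, i.e. it keeps the first-moment linearization $1-(1-p)^n \approx np$ valid up to the constants already budgeted.

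The step I expect to be the main obstacle is making the conditioning on $\{T>s\}$ rigorous while bounding $\E[|I(s)\cap V_1|\,;\,T>s]$: the event $\{T>s\}$ is not independent of the within-$V_1$ dynamics (they share the deferred-randomness filtration), so I cannot simply replace $\E[|I(s)\cap V_1|\,;\,T>s]$ by the unconditional single-community mean without care. The clean way around this is to note that $\{T>s\}$ is a decreasing event in the revealed cross-edges and the within-$V_1$ edge set is independent of the cross-edges given the infection sets, so one can dominate the conditional process by the unconditional single-community process (a monotone coupling), or equivalently run the single-community analysis first to get a high-probability envelope $|I(s)\cap V_1| \le B(s)$ for all $s \le t^*$ with $B(s) = (2d_{in}\beta)^s\gamma^{s(s-1)/2}$ say, intersect that good event with $\{T>t^*\}$, and only then do the union bound over cross-edge reveals against the deterministic envelope $B(s)$. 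Keeping the two failure budgets ($\delta$ for the envelope, $\delta$ for the cross-edges, summing to $2\delta$) cleanly separated is the bookkeeping crux.
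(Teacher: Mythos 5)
Your plan, in the form it takes by your final paragraph, is essentially the paper's proof: first run the single-community concentration analysis to obtain a multiplicative envelope $N_1(s)\le(2d_{in}\beta)^s\gamma^{s(s-1)/2}$ on the community-1 case counts, valid for all $s$ up to the cutoff with failure probability at most $\delta$; then, on that good event, show that the probability of any cross-community transmission before the cutoff is at most $\delta$; then combine the two budgets to get $1-2\delta$. The only methodological difference is in the second step: you propose a first-moment union bound $\P(T\le t^*)\le \beta d_{out}\sum_s \gamma^s N_1(s)$ against the envelope, whereas the paper writes the exact conditional no-cross-infection probability $\left(1-\tfrac{d_{out}\beta}{n}\right)^{N_1(s)n}$, lower-bounds the product via $\log(1-x)\ge -x/(1-x)$, and uses condition \eqref{eq:sep-condition-3} to replace $1/(1-x)$ by $2$ --- which is exactly your remark that \eqref{eq:sep-condition-3} keeps the linearization valid up to budgeted constants. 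Both routes work, and condition \eqref{eq:sep-condition-2} is calibrated so that either lands at $\le\tfrac12\log\tfrac{1}{1-\delta}\le\delta$.

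The one concrete error is your reading of condition \eqref{eq:sep-condition-1}. It is not about finiteness of the cutoff, nor about the truncation of the sum "losing nothing": it is precisely the envelope failure budget. The Poisson-type tail bound (the analogue of Corollary \ref{corr:ind-concentration} conditioned on no cases yet in $V_2$) gives per-step failure probability $\P\bigl(N_1(s+1)>2d_{in}\beta\gamma^s N_1(s)\bigr)\le e^{-d_{in}\beta\gamma^s/4}\le e^{-5\sqrt{\gamma}}$ for all $s$ up to the cutoff, and the union bound over the $\bigl\lfloor -\log\bigl(d_{in}\beta/(20\sqrt{\gamma})\bigr)/\log\gamma\bigr\rfloor$ steps requires that this count times $e^{-5\sqrt{\gamma}}$ be at most $\delta$ --- which is exactly what \eqref{eq:sep-condition-1} rearranges to. Your proposal correctly reserves "$\delta$ for the envelope" but never identifies which hypothesis supplies that $\delta$, and has already assigned \eqref{eq:sep-condition-1} to a purpose it does not serve; that is the one place the writeup would stall, though the fix is exactly the bookkeeping you flagged as the crux.
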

The proof of this claim can be found in Section \appendixspreadtimeproof of the Supplementary Material.
Notably, the requirements of the theorem which result in temporal well-separation do not depend as much on the difference between $d_{out}$ and $d_{in}$ as much as it depends on the magnitude of each.
Often in the literature, the identification of a stochastic block model depends on the difference $d_{in} - d_{out}$.

Theorem \ref{thm:time_til_next} indicates that, under certain model assumptions, with high probability the components of the mixture will be temporally well-separated, which is similar to the condition of identification required in Theorem \ref{thm:parameter_estimation}.
From this theorem, we see that we can expect the observed components to be temporally well separated if there are few connections between communities and the outbreak in community 1 is sufficiently small.
This observation highlights the different ways in which policy makers can prevent an outbreak from spreading between communities, as they can either focus on mitigating spread within their own community or ask members of a community with many infections to reduce their ties to other communities. \\

\noindent \textbf{Example.} \quad Suppose that in the context of Theorem \ref{thm:time_til_next}, we set $\delta = 0.05$, and choose the following parameters: $\gamma = 0.9$, which represents that individuals remove approximately $10\%$ of their contacts each epoch, $d_{in} = 55.2$, which represents expected contacts within the community in each epoch, $\beta = 0.5$, which represents the probability of infection given that a contact occurs, and $d_{out} = 2\times 10^{-5}$, which represents the expected number of contacts outside of the community.
Then, $C_1(d_{in}, \beta, \gamma) \approx 32$, and Theorem \ref{thm:time_til_next} shows that for any $n$, with probability at least 90\%, the time at which the first infection occurs in community 2 will be after at least $3.5$ epochs, which occurs after about 10\% of the time that it takes for cases in community 1 to reach their peak.
In such a case, it would be likely to observe distinct mixtures. \\

The connection between the temporal well-separation in the generative model and the temporal separation condition required for learning can in fact be made precise, as shown in the following Corollary.

\begin{corollary} \label{corr:separation-learning}
Recall the definition of $C_1(d_{in}, \beta, \gamma)$, as
\[ C_1 (d_{in},\beta, \gamma) = - \log \left(\frac{d_{in} \beta}{\sqrt{\gamma}} \right) / \log \gamma \,, \]
and define the following random variable
\[ C_2^{d_{in}, \beta, \gamma} = T - \log \left(\frac{d_{in} \beta}{\sqrt{\gamma}} \right) / \log \gamma = T + C_1 (d_{in},\beta, \gamma) \,, \]
where $T$ represents the time at which the first infection in Community 2 occurs. $C_2^{d_{in}, \beta, \gamma}$ then represents the time at which the expected time series of infection in community 2 would be maximized.
Then, under the conditions of Theorem \ref{thm:time_til_next}, with probability at least $1 - 2 \delta$, the following statement holds:
\[ |C_1(d_{in},\beta, \gamma) - C_2^{d_{in},\beta, \gamma}| \geq 2 \sqrt{\frac{1}{a} \log \frac{M}{\epsilon} } \,,\]
for parameters
\begin{align*}
    a &= \frac{1}{2} \log \frac{1}{\gamma} \,, \\
    M &= e^{\frac{1}{2} \left(\log \left( \frac{d_{in}\beta}{\sqrt{\gamma}}\right)^2 /  \log \frac{1}{\gamma} \right)} \,, \\
    \epsilon &\geq \frac{3}{8} M - \frac{\log 20}{4} C_1(d_{in},\beta, \gamma) - \frac{(\log 20)^2}{8} \,.
\end{align*}
\end{corollary}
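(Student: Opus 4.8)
The plan is to reduce the claimed separation bound to the conclusion of Theorem~\ref{thm:time_til_next} and then check that the stated choices of $a$, $M$, and $\epsilon$ make the reduction go through by elementary algebra. First, by the very definition of $C_2^{d_{in},\beta,\gamma}$ we have $C_2^{d_{in},\beta,\gamma} - C_1(d_{in},\beta,\gamma) = T$, so that $|C_1(d_{in},\beta,\gamma) - C_2^{d_{in},\beta,\gamma}| = T$. Since the hypotheses of the corollary are exactly those of Theorem~\ref{thm:time_til_next}, that theorem gives $T > C_1(d_{in},\beta,\gamma) - \log 20/\log\tfrac1\gamma$ on an event of probability at least $1-2\delta$. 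Hence it suffices to establish the deterministic inequality
\[
C_1(d_{in},\beta,\gamma) - \frac{\log 20}{\log\frac1\gamma} \;\geq\; 2\sqrt{\frac{1}{a}\log\frac{M}{\epsilon}}
\]
for the stated $a$, $M$, and every $\epsilon$ at least the claimed lower bound.

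Next I would record the identities linking the generative-model quantities to the $(a,M,C)$-parametrization. By Theorem~\ref{thm:gaussian-connection} the expected curve is $e^{(\frac12\log\gamma)t^2+(\log\frac{d_{in}\beta}{\sqrt\gamma})t}$; written in the form $M e^{-a(t-C_1)^2}$ this has curvature $a=-\tfrac12\log\gamma=\tfrac12\log\tfrac1\gamma$ and is maximized at $t=C_1(d_{in},\beta,\gamma)$, and plugging $t=C_1$ into the exponent (using $\log\frac{d_{in}\beta}{\sqrt\gamma}=-C_1\log\gamma$) gives the peak value $M=e^{\frac12(\log(d_{in}\beta/\sqrt\gamma))^2/\log(1/\gamma)}$, matching the statement. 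The same computation yields the key relations $a\,C_1^2=\log M$ (equivalently $\log\tfrac1\gamma\cdot C_1^2=2\log M$) and $\log 20/\log\tfrac1\gamma = \log 20/(2a)$, which make the subsequent bookkeeping tractable.

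Finally, under the hypotheses the left-hand side $C_1-\log 20/\log\tfrac1\gamma$ is nonnegative (this is where the conditions keeping $d_{in}\beta/\sqrt\gamma$ from being too small enter; in the degenerate regime where that fails, the conclusion of Theorem~\ref{thm:time_til_next} is vacuous and one only needs $\epsilon \ge M$, which the stated bound subsumes). One can then square and solve for $\epsilon$: the displayed inequality is equivalent to $\epsilon \ge M\exp\!\bigl(-\tfrac a4(C_1-\log 20/\log\tfrac1\gamma)^2\bigr)$. Expanding the square and substituting the identities above turns the exponent into $\tfrac14\log M - \tfrac{\log 20}{4}C_1 + \tfrac{(\log 20)^2}{16a}$, and it remains to dominate $M\exp\!\bigl(-\tfrac14\log M + \tfrac{\log 20}{4}C_1 - \tfrac{(\log 20)^2}{16a}\bigr)$ by the advertised expression $\tfrac38 M - \tfrac{\log 20}{4}C_1 - \tfrac{(\log 20)^2}{8}$ via standard elementary estimates (convexity bounds such as $e^{-x}\le 1-x+\tfrac12 x^2$ and $e^{-x}\le 1$, together with discarding nonnegative terms in the direction that keeps the bound a \emph{sufficient} lower bound on $\epsilon$).

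The reduction in the first two paragraphs is clean; the main obstacle is this last step, namely translating the exponential bound into the stated affine-looking expression in $M$ and $C_1$ while keeping every inequality pointing the right way (we need an upper bound on $M\exp(-\tfrac a4(\cdots)^2)$, i.e. a sufficient lower bound on $\epsilon$). The identities $aC_1^2=\log M$ and $\log\tfrac1\gamma=2a$ are what make this feasible, but care is required near the boundary regime where $C_1$ is comparable to $\log 20/\log\tfrac1\gamma$, which should be treated as a separate case as indicated above.
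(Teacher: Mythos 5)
Your first two paragraphs reproduce the paper's proof exactly: the paper likewise notes $|C_1(d_{in},\beta,\gamma) - C_2^{d_{in},\beta,\gamma}| = T$ since $T \geq 0$, invokes Theorem \ref{thm:time_til_next} to get $T > C_1 - \log 20/\log\frac{1}{\gamma}$ with probability at least $1-2\delta$, and then simply asserts that ``solving for $\epsilon$'' in $C_1 - \log 20/\log\frac{1}{\gamma} \geq 2\sqrt{\frac{1}{a}\log\frac{M}{\epsilon}}$ yields the result. The identities you record ($a = \frac12\log\frac1\gamma$, $\log M = aC_1^2$, $\log 20/\log\frac1\gamma = \log 20/(2a)$) are correct and are precisely what that step implicitly uses.

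The gap is in your third paragraph, and you have correctly located it as the crux, but the completion you propose cannot work. Solving the displayed inequality exactly gives the threshold $\epsilon_0 = M\exp\!\bigl(-\tfrac{a}{4}(C_1 - \tfrac{\log 20}{2a})^2\bigr) = M^{3/4}\exp\!\bigl(\tfrac{\log 20}{4}C_1 - \tfrac{(\log 20)^2}{16a}\bigr)$, and for the corollary's condition $\epsilon \geq \tfrac38 M - \tfrac{\log 20}{4}C_1 - \tfrac{(\log 20)^2}{8}$ to be \emph{sufficient}, that affine expression would have to dominate $\epsilon_0$. It does not: the $\tfrac{\log 20}{4}C_1$ term enters the exponent of $\epsilon_0$ with a positive sign but is subtracted in the advertised expression, and in the paper's own worked example ($\gamma = 0.9$, $d_{in}\beta = 27.6$, so $C_1 \approx 32$, $a \approx 0.053$) one computes $\epsilon_0 \approx 0.85\,M$ while the advertised expression is $\approx 0.375\,M$. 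Indeed, condition \eqref{eq:sep-condition-1} of Theorem \ref{thm:time_til_next} forces $C_1 - \log 20/\log\frac1\gamma < \delta e^{5\sqrt\gamma}$, so $\log(M/\epsilon_0) \leq \tfrac{a}{4}\delta^2 e^{10\sqrt\gamma}$ is small and $\epsilon_0$ is necessarily close to $M$; no convexity estimate of the form $e^{-x} \leq 1 - x + \tfrac12 x^2$ can push it below $\tfrac38 M$. To be fair, this is exactly the step the paper's proof is silent on, so the hole you leave is the one the paper leaves; but a correct completion along these lines must either state the threshold as $\epsilon_0$ itself (or any upper bound on it) or supply a derivation of the advertised expression that preserves the inequality direction, and your plan as written does neither.
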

\begin{proof}
The Corollary follows directly from Theorem \ref{thm:time_til_next}, as we see that $|C_1(d_{in},\beta, \gamma) - C_2^{d_{in},\beta, \gamma}| = T$ since $T \geq 0$.
Hence, with probability at least $1 - 2 \delta$,
\[ |C_1(d_{in},\beta, \gamma) - C_2^{d_{in},\beta, \gamma}| \geq C_1(d_{in},\beta, \gamma) - \log 20 / \log \frac{1}{\gamma} \,. \]
Solving for $\epsilon$ in the inequality
\[ C_1(d_{in},\beta, \gamma) - \log 20 / \log \frac{1}{\gamma} \geq 2 \sqrt{\frac{1}{a} \log \frac{M}{\epsilon} } \,, \]
with $a$ and $M$ defined above then yields the result.
\end{proof}

Corollary \ref{corr:separation-learning} highlights a temporal separation between the peaks of the expected time series, and makes explicit the relationship between the generative model presented here and the learning algorithm of Section \ref{s:algorithm}.
Notably, the $\epsilon$ obtained in the Corollary is nearly an order of magnitude lower than the value $M$, which is desirable for the learning algorithm.
We note that in Corollary \ref{corr:separation-learning}, $a$ is defined to be the quadratic coefficient in the expected time series and that the quantity $M$ represents a maximum number of cases in the expected time series of infection for each community, specifically representing the maximum number of infections in $V_1$ due only to other infections which originated in $V_1$.
We choose these particular expressions for clarity of presentation, as it could be the case that the maximum number of infections in $V_1$ can become larger due to the stochastic nature of the process within $V_1$ or due to additional cases in $V_1$ which can be traced to infections from individuals in $V_2$, after individuals in $V_2$ receive infection.
Rather, the purpose of Corollary \ref{corr:separation-learning} is to highlight the connection between temporal well-separation in this model and the previous condition required for learning.

Taken together, these results provide a statistical characterization of a reasonable stochastic model, and motivate the use of the function class \eqref{eq:mixtures}.
Of course, this is not the only possible model which results in observations which resemble a mixture of Gaussian curves.
Rather, the above model provides but one formalization by which case counts of the form in \eqref{eq:mixtures} can arise, and we prefer this model for its tractability in analysis. 
In particular, the \eqref{eq:mixtures} may also arise from an SIR-model with a particular time-varying reproductive rate as we will show in Section \ref{ss:closed-loop}, or in other network-based models with a particular degree distribution which emulates the degree pruning parameter above. 
Even situations in which the spread has spatial heterogeneity can be captured, so long as the simultaneous outbreaks have similar features resulting in global observations of the Gaussian curve, and the temporal separation of Gaussian curves may reflect different waves of the pandemic \citep{epstein2008coupled}.

\subsection{Empirical Validation of the Networked SIR Interpretation}
\label{ss:theory-mobility}
While our model is not the only one which explains the function class \eqref{eq:mixtures}, we do provide empirical evidence that the data observed during the COVID-19 pandemic is consistent with the simple interpretation provided above. 
Namely, we are able to show that ``degree-pruning'' correlates with observed mobility data.
To do so, we compare degree pruning parameters learned from the data to mobility data taken from Google and SafeGraph. 
Our analysis indicates that the intepretation of the quadratic coefficients in each county is consistent with the notion of degree pruning.

\begin{figure}
    \centering
    \includegraphics[width=0.8\textwidth]{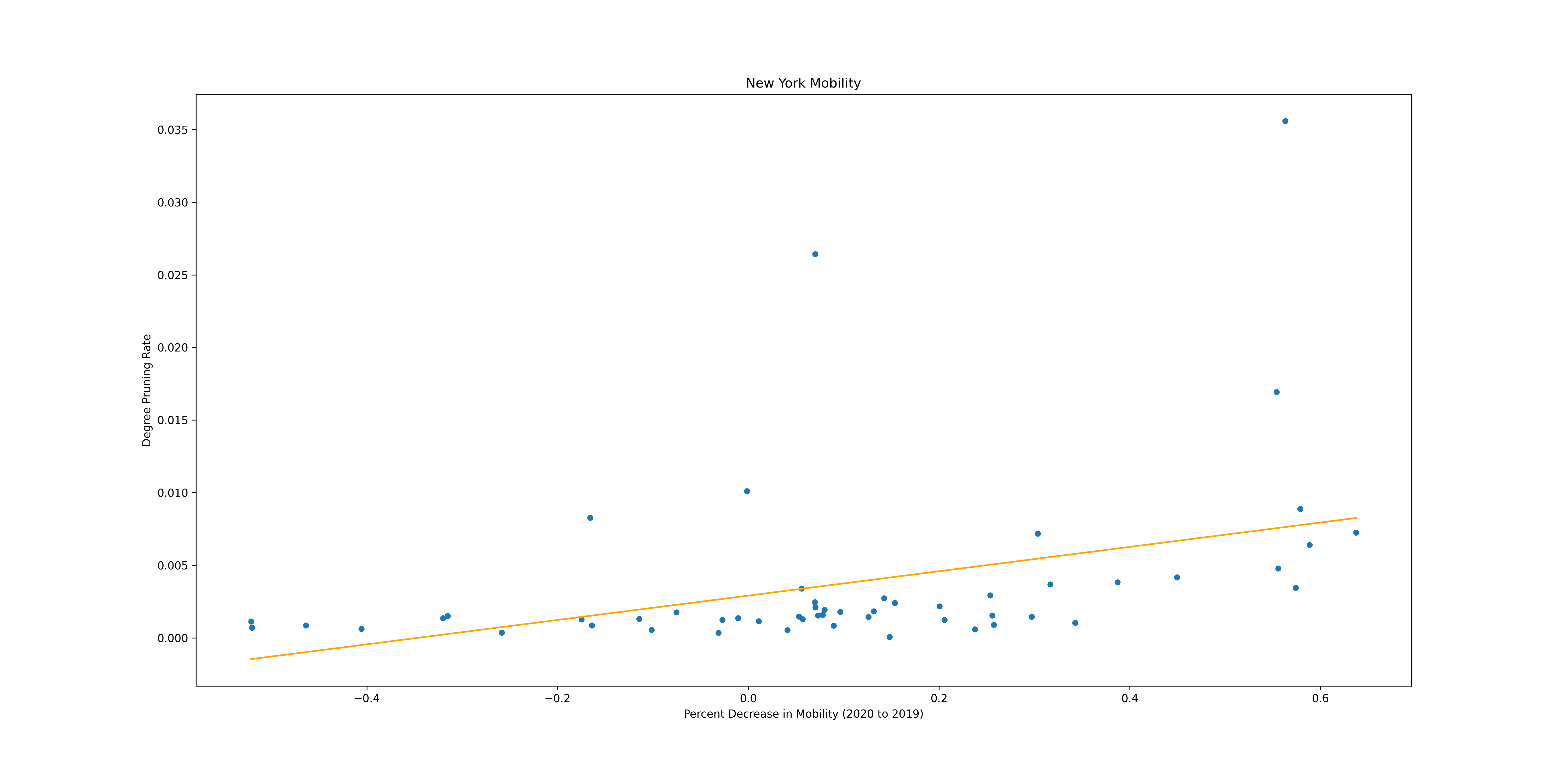}
    \caption{Degree pruning parameters compared to SafeGraph mobility data for counties in New York.
    There is a clear positive relationship between the degree pruning rate and percent decrease in mobility, indicating that for larger decreases in mobility, higher degree pruning rates are observed as expected.}
    \label{fig:ny}
\end{figure}

We use two sets of mobility data in our analysis. 
The first, from Google, provides a time series showing percent reduction in six different types of mobility compared to a baseline measured in February, before the United States government implemented any lockdown policies. 
The six types of mobility are \textsc{retail and recreation}, \textsc{groceries and pharmacy}, \textsc{parks}, \textsc{transit stations}, \textsc{workplaces}, and \textsc{residential}. The data is provided at county, state, and national granularity.

The second dataset, provided by SafeGraph, tracks the census block of particular devices, and results are aggregated daily. 
We process the data to measure the amount of time, on average, that a device spends outside of its census block. 
Since census blocks are typically small compared to the size of counties, the data provides another metric by which we may measure the extent to which individuals limited their mobility in response to the pandemic.

For validation of the interpretation of parameters, we use an estimate of degree pruning parameters which are the output of Algorithm \ref{alg:initialization}.
This also corresponds to an direct estimate of the degree pruning parameters where degree pruning parameters are time-varying. 
That is, if we assume time series have the form
\begin{equation}
N(t) = \left( (d \beta)^t \prod_{s = 1}^t \prod_{\tau = 1}^s \gamma_\tau \right) [1 + \eta_t]\,,
\end{equation}
where $t$ is again a discrete time index and $\eta_t$ represents bounded noise, then $S(t)$ defined in \eqref{eq:st} provides an estimate of $\log \gamma_{t+1}$. 
Note that, if $\gamma_t$ is a constant with respect to $t$, then the above equation is equivalent to \eqref{eq:mixtures} with $r = 1$.
Hence, in this time-varying model, $\frac{1}{t_1 - t_0 + 1} \sum_{\tau = t_0}^{t_1} S(\tau) $ represents an estimate of the log of the geometric mean of the degree pruning parameters of the region from time $t_0$ to $t_1$.

We compute this value for $t_0$ corresponding to March 1st, 2020, and $t_1$ corresponding to May 20th, 2020, when the national mask mandate in the United States was implemented, as in this period of time we would expect mobility to be the best proxy for degree pruning. 
The comparison between SafeGraph data and mobility for counties in New York is shown in Figure \ref{fig:ny}, and a comparison of state-level mobility data to degree pruning parameters in the same time frame for both SafeGraph and Google Mobility Data is shown in Table \ref{tab:google_state}.
In Figure \ref{fig:ny}, there is a clear positive relationship between degree pruning and mobility reduction that can be observed at the county level.

Moreover, among each state in the US, at a national level there are clear expected relationships between mobility and degree pruning parameters.
Change in mobility in terms of time spent in \textsc{retail and recreation}, \textsc{transit stations}, and \textsc{workplaces}, as well as mobility as measured by the average amount of time a device spends outside of its census block, appear to correlate negatively with degree pruning parameters, which is to say that as mobility decreases in these measures, degree pruning rates increase, as expected.
This is validated with correlation coefficients ranging from $-0.248$ to $-0.430$, each of which is statistically significant.
Moreover, mobility as measured by change in time spent in \textsc{Residential} places is positively correlated, implying that as individuals increase their time spent at home, degree pruning rates increase, as expected.

\begin{table}[t]
    \centering
    \begin{tabular}{rrl} \toprule
    Mobility Data & Correlation & $p$-value\\ \midrule
    \textsc{retail and recreation} & -0.275 &  0.054* \\
\textsc{groceries and pharmacy}  & -0.219 &  0.126 \\
\textsc{parks}                 &  0.015 &  0.916 \\
\textsc{transit stations}      & -0.248 &  0.083* \\
\textsc{workplaces}            & -0.299 &  0.035** \\
\textsc{residential}            &  0.345 &  0.014** \\
\textsc{safegraph}                                          & -0.430 &  0.002*** \\\bottomrule
    \end{tabular}
    \caption{Comparison of learned degree pruning parameters to types of mobility data. In a linear regression, these mobility features predict the degree pruning rate with an $r^2$ value of $0.220$.
    For most there is a statistically significant negative relationship between amount of mobility reduction and degree pruning.
    For \textsc{Residential} mobility, we see a positive relationship, indicating that areas in which individuals stayed at home saw more degree pruning in case counts, as expected.\\ ${}^{*}p < 0.10, {}^{**}p < 0.05, {}^{***}p < 0.01$}
    \label{tab:google_state}
\end{table}

\section{Closed-Loop Interpretation of Gaussian Curves}
\label{ss:closed-loop}
While the model above provides clear insight into the structure of the mixture model \eqref{eq:mixtures}, which can be validated empirically, a drawback of the approach is that it is open-loop.
That is, the parameter $\gamma$ which modulates the extent to which individuals remove ties is assumed to be constant to generate the Gaussian form, and the model does not explicitly allow for individuals to react to the state of the disease.
In this section, we show that the Gaussian shape can in fact arise as the outcome of a closed-loop system, in which individuals explicitly react to the spread of disease.
Specifically, by using an approximation of an SIR model, the Gaussian curve can be justified by assuming individuals react in response to the observed number of infections in an epidemic.

Two key features of the network SIR model above are that the number of individuals who are susceptible remains large compared to the infected population, i.e. that $n$ is large, and that the assumption of degree pruning results in contact rates and removal rates are time-varying. 
With these assumptions, we can also rewrite the number of infections in the continuous time Susceptible-Infected-Recovered compartmental from Figure \ref{fig:sir} of Section \ref{ss:mechanistic} as
\begin{align}
\label{eq:infected_dynamics}
    \dv{I}{t} &= \bigg(\beta (t) - \gamma (t) \bigg) I(t) \,. 
\end{align}

Denoting $\alpha(t) := \beta (t) - \gamma (t)$, we then see that, in order to ensure that $I(t)$ follows the shape of a bell curve, $\beta (t) - \gamma (t)$ must follow a specific form, formalized in the following Theorem.

\begin{theorem}[Gaussian Curve as Closed Loop Control]
\label{thm:closed_loop}
In the setting of \eqref{eq:infected_dynamics}, suppose a population can guarantee the following form for $\alpha(t) :=  \beta (t) - \gamma (t)$
\[ \alpha(t) = \begin{cases}
\sqrt{b^2 - 4a(c - \log I(t))} & \text{if $\max_{t' \leq t} \log I(t') < c + \frac{b^2}{4a}$} \\
- \sqrt{b^2 - 4a(c - \log I(t))}  & \text{otherwise}
\end{cases} \,. \]
Then, if $I(0) = e^c$, \[ I(t) = e^{- at^2 + bt + c} \,.\]
\end{theorem}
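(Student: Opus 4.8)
My plan is to reduce the closed-loop equation to a single scalar ODE and integrate it in two phases separated by the peak of $I$. First I would set $y(t) := \log I(t)$, so that the dynamics $\dv{I}{t} = \alpha(t) I(t)$ become $\dot y(t) = \alpha(t)$ with $y(0) = c$ (using $I(0) = e^c$), reducing the claim to $y(t) = -at^2 + bt + c$. The key observation I would exploit is that the prescribed feedback is nothing but the quadratic formula for $t$ in terms of $y$: regarding $-at^2 + bt + c = y$ as a quadratic in $t$, its vertex sits at $t^\star := b/(2a)$ with maximal value $c + b^2/(4a)$, and solving for $t$ expresses $b - 2at$ as $\pm$ the square root of the associated discriminant, with the $+$ branch for $t < t^\star$ and the $-$ branch for $t > t^\star$. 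Hence the case split in the definition of $\alpha$ --- ``$\max_{t' \le t} \log I(t') < c + b^2/(4a)$'' versus ``otherwise'' --- is exactly the split into ``before the peak'' and ``at or after the peak'', and the hypothesis pins $\alpha(t) = b - 2at$ along the realized trajectory. Here I would also invoke the implicit standing assumptions $a, b > 0$, so that $t^\star > 0$ and $\sqrt{b^2} = b$ at $t = 0$, consistent with the epidemic initially growing.

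Next I would integrate the two phases. \emph{Phase 1}, $0 \le t \le t^\star$: while the running maximum of $y$ equals its current value and is strictly below $c + b^2/(4a)$, the first branch is active, so $\dot y$ equals the positive square root of the radicand, a separable equation. Substituting $u(t)$ for the radicand, this becomes ``$\frac{d}{dt}\sqrt{u}$ is constant in $t$'', hence $\sqrt{u(t)}$ is affine in $t$; matching $y(0) = c$ and unwinding the substitution gives $y(t) = -at^2 + bt + c$ on $[0, t^\star]$. In particular $y(t^\star) = c + b^2/(4a)$ and $y$ is strictly increasing up to $t^\star$, which retroactively confirms that the first branch was indeed the active one throughout this phase and that the running maximum first attains $c + b^2/(4a)$ exactly at $t^\star$. \emph{Phase 2}, $t \ge t^\star$: now the running maximum equals $c + b^2/(4a)$, so the second branch is active and $\dot y$ equals the negative square root, started from the value $y(t^\star) = c + b^2/(4a)$ inherited from Phase 1. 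The same substitution again makes $\sqrt{u(t)}$ affine in $t$ (now with the opposite slope), and unwinding yields once more $y(t) = -at^2 + bt + c$. Concatenating the two phases, $I(t) = e^{y(t)} = e^{-at^2 + bt + c}$ for all $t \ge 0$.

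The step I expect to be the main obstacle is the junction at $t = t^\star$. There the radicand vanishes, so the reduced ODE for $y$ fails to be Lipschitz at $(t^\star,\, c + b^2/(4a))$ and the initial-value problem through that point is genuinely non-unique: the constant function $y \equiv c + b^2/(4a)$ also solves $\dot y = -\sqrt{(\text{radicand})}$. To handle this I would lean on the structure of the feedback itself --- for $t > t^\star$ it is \emph{defined} to equal the negative root, which drives the trajectory off the peak rather than holding it there --- together with the fact that $y \mapsto -\sqrt{(\text{radicand})}$ is locally Lipschitz on the open region $\{y < c + b^2/(4a)\}$, so the solution is uniquely determined once it has left the peak. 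Arguing from continuity at $t^\star$ and the strict monotonicity established in Phase 1, I would conclude that the Gaussian branch, not the stationary one, is the trajectory consistent with the stated $\alpha$. Apart from this junction argument, the remaining work --- the two substitutions and their inversions, and the bookkeeping that the running-maximum condition toggles the branches exactly at $t^\star$ --- is routine.
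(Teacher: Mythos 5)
Your proposal is correct but runs in the opposite direction from the paper's proof. The paper argues by pure verification: it substitutes the candidate $I(t)=e^{-at^2+bt+c}$ into the feedback law, observes that the radicand collapses to $(b-2at)^2$ so that the casework yields $\alpha(t)=b-2at$ in both branches, and then invokes uniqueness of the resulting linear ODE $\dot I=(b-2at)\,I$ with $I(0)=e^c$. You instead integrate forward: setting $y=\log I$ gives $\dot y=\alpha$, and separation of variables on the two phases $[0,t^\star]$ and $[t^\star,\infty)$ reconstructs the parabola. Both arguments hinge on the same algebraic identity --- your ``quadratic formula'' observation is exactly the paper's ``identifying the square'' --- and, for the record, the radicand as printed, $b^2-4a(c-\log I(t))$, should read $b^2-4a(\log I(t)-c)$ for that identity to hold; your ``associated discriminant'' is the corrected version, and the paper's computation implicitly uses it as well. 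What your route buys is an honest confrontation with the one genuine subtlety, which the paper's appeal to ODE uniqueness glosses over: at the peak the radicand vanishes, the closed-loop vector field fails to be Lipschitz in $y$, and the constant continuation $y\equiv c+b^2/(4a)$ is \emph{also} consistent with the stated feedback, since the ``otherwise'' branch gives $\alpha=-\sqrt{0}=0$ there. Your proposed fix --- that the negative branch ``drives the trajectory off the peak'' --- does not quite close this, because the negative root of zero is still zero; no purely local argument can select the Gaussian branch over the stationary one. The theorem is therefore best read as an existence/consistency statement (the Gaussian is \emph{a} solution of the closed loop compatible with the feedback), and under that reading both your argument and the paper's are complete; the paper's verification is shorter, while yours makes the branch-switching bookkeeping and the degeneracy at $t^\star$ explicit rather than hiding them inside a uniqueness claim that does not literally apply.
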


\begin{proof}
The proof follows from first plugging in the form of $I(t)$ into the equation of $\alpha(t)$,

\[ \alpha(t) = \begin{cases}
\sqrt{b^2 - 4a(at^2 - bt)} & \text{if $\max_{t' \leq t} \log I(t') < c + \frac{b^2}{4a}$} \\
- \sqrt{b^2 - 4a(at^2 - bt)}  & \text{otherwise}
\end{cases} \,. \]
Identifying the square and performing casework yields
\[ \alpha(t) = - 2 at + b \,,\]
in both cases, which directly implies the claim by substitution into the Equation \eqref{eq:infected_dynamics}. Hence, since the solution to the differential equation is unique given the initial condition, it must be the case that
\[ I(t) = e^{- at^2 + bt + c} \,,\]
as desired.
\end{proof}

Theorem \ref{thm:closed_loop} shows that the evolution of the number of infections in a particular community can be written in terms of the number of infections itself, implying that community reaction to the progress of an epidemic may be stated in terms of the infection prevalence. 
Further, as a policy maker, one may try to adjust the value of $\alpha(t)$ to ensure that the maximum number of cases does not exceed some value $I_{max}$ by ensuring that the rate $\alpha(t)$ evolves according to a rule for which the constants $a, b, $ and $c$ satisfy $c + \frac{b^2}{4a} \leq \log I_{max}$.

This indicates another possible way in which the Gaussian form of \eqref{eq:mixtures} can be interpreted through the lens of policy decisions.
Namely, if observed case counts follow the Gaussian trend, then a possible explanation for this is that policy decisions are being formed as a function of the state of the epidemic, as opposed to through a constant pruning of edges.



\section{Conclusions}
\label{s:conclusions}
We provide a simple, non-mechanistic model for forecasting cases and fatalities in an epidemic which, upon further inspection, bridges the two approaches to epidemic forecasting while retaining benefits of both.
By assuming that observed case counts follow a functional form represented by a sum of Gaussian curves, we benefit from the reduced form structure of the model because we are able to perform statistical inference to measure the parameters of each Gaussian.
The approach fits the data surprisingly well.
Moreover, we show that a generative model can yield observed case counts of the form \eqref{eq:mixtures}, which provides a benefit from mechanistic models in that the learned parameters have an interpretation.
The interpretation is validated using mobility data from Google and SafeGraph and suggests the a path to unify the two broad approaches to epidemic modeling.

A key observation from the mechanistic perspective of the model is that, in order to attain the quadratic term of the Gaussian time series, individuals in a community must be continually severing ties with neighbors.
There are several possible strategies by which a policy maker or general population may achieve a specific level of ``degree pruning'' in order to attain a smaller level of infection over time. 
These strategies include, but are not limited to, the use of social distancing, masks, testing, and vaccinations in order to remove links between individuals in the community faced with an epidemic.

Social distancing has been one of the primary ways in which communities have implemented degree pruning throughout COVID-19 pandemic. 
By initiating stay-at-home orders and posting signs in public areas that enforce individuals to stay apart from one another, there are fewer links between individuals in the communities by which the virus may spread. Masks have had a similar effect on the extent of degree pruning. As the pandemic has progressed, mask mandates have become mandatory in many areas, and have reduced the total number of COVID-19 cases \citep{chernozhukov2020causal}. Testing provides another means of degree pruning, by limiting the number of contacts an individual has once they are infectious. By allowing individuals to know that they are infected with the virus, the individual may then self isolate and limit the number of contacts they will have with susceptible individuals. 
In particular, by \emph{increasing} testing rates over time, the rate of degree pruning can be changed. Finally, vaccinations provide another simple mechanism by which degree pruning may be achieved. 
For example, vaccinating a constant fraction of the susceptible population at each time step has the exact effect of degree pruning, as edges in the graph will be deleted proportionally to the rate of vaccination. Ultimately, a combination of the above strategies, with increased utilization over time, may allow policy makers to better understand and combat the spread of infectious disease, when considering the evolution of an epidemic through the lens of degree pruning.

The overarching goal of this work is to provide progress towards robust, data-driven control of epidemics for general outbreaks of infectious disease.
This will require refined statistical algorithms for estimating the state of an epidemic subject to noisy observations, as well as an understanding of the mechanisms that policy makers can utilize to inhibit spread of the disease.
Hence, further research can focus on determining optimal ways to estimate epidemic state from noisy data due to delays and testing variance, as well as understanding the impact that policy levers such as lockdowns and masks have on the spread of infectious disease.
By unifying the disparate approaches to epidemic forecasting, we hope to take a step towards reaping the benefits of both approaches in the design of public health policy for infectious disease.




\begin{appendix}

\section{Proof of Theorem \ref{thm:parameter_estimation}}
\label{a:alg-proof}
We first provide a proof of Theorem \algtheorem as stated, and then discuss the extension to the case where $a_1 \neq a_2$.

The proof of the theorem has three parts. 
First, we show the following Lemma, which shows that Assumptions 1-3 of the Theorem result in a low overlap condition such that when one component of the mixture comprises a majority of cases, the other component must have a small size.

\begin{lemma} \label{lem:sufficient-assumption-1}
Fix an $\epsilon > 0$, and suppose that the parameters of the underlying model satisfy the following three conditions.
\begin{enumerate}
    \item $M_k \leq M\,, \quad k = 1, 2$
    \item $a_k \geq a\,, \quad k = 1, 2$
    \item $|C_1 - C_2| \geq 2 \sqrt{\frac{1}{a} \log \frac{M}{\epsilon}} \,.$
\end{enumerate}
Then, the parameters satisfy the following conditions.
\begin{enumerate}
    \item[{1. [Dominance of Component 1]}] If $t$ satisfies 
\[ M_1 e^{-a_1(t-C_1)^2} \geq M_2 e^{-a_2(t-C_2)^2} \,,\]
then 
\[ M_2 e^{-a_2(t-C_2)^2} \leq \epsilon \,. \]
\item[{2. [Dominance of Component 2]}] Otherwise, if $t$ satisfies 
\[ M_1 e^{-a_1(t-C_1)^2} \leq M_2 e^{-a_2(t-C_2)^2} \,,\]
then 
\[ M_1 e^{-a_1(t-C_1)^2} \leq \epsilon \,. \]
\end{enumerate}
\end{lemma}
\begin{proof}
We prove that the three conditions of the lemma imply the Dominance of Component 1, and note that the proof for the Dominance of Component 2 follows symmetrically. 
We proceed by contradiction. 
Suppose that the three conditions hold, and that $t$ satisfies
\begin{equation} \label{eq:t-cond}
M_1 e^{-a_1(t-C_1)^2} \geq M_2 e^{-a_2(t-C_2)^2} \,,
\end{equation}
but
\begin{equation} \label{eq:lemma-assumption-1-contradiction}
M_2 e^{-a_2(t-C_2)^2} > \epsilon \,. 
\end{equation}
Since $M_2 e^{-a_2(t-C_2)^2} > \epsilon$, $t$ must satisfy
\begin{equation} \label{eq:t-c2}
    |t-C_2| < \sqrt{\frac{1}{a} \log \frac{M}{\epsilon}} \,,
\end{equation}
as conditions 1 and 2 of the lemma would imply $M e^{-a(t-C_2)^2} > \epsilon$, and \eqref{eq:t-c2} follows from rearranging this inequality.
Moreover, since we assume both \eqref{eq:t-cond} and \eqref{eq:lemma-assumption-1-contradiction}, we must have $M_1 e^{-a_1(t-C_1)^2} > \epsilon$, which similarly implies that $t$ satisfies
\begin{equation} \label{eq:t-c1}
    |t-C_1| < \sqrt{\frac{1}{a} \log \frac{M}{\epsilon}} \,.
\end{equation}
However, equations \eqref{eq:t-c2} and \eqref{eq:t-c1} imply that for any $t$ which satisfies \eqref{eq:t-cond},
\[ |C_1 - C_2| \leq |t - C_1| + |t - C_2| < 2 \sqrt{\frac{1}{a} \log \frac{M}{\epsilon}} \,,\]
by the triangle inequality. This is a contradiction to condition 3 of the Lemma.
Hence, it must be the case that whenever $t$ satisfies \eqref{eq:t-cond}, 
\[ M_2 e^{-a_2(t-C_2)^2} \leq \epsilon \,,\]
proving the claim.
\end{proof}

The next claim shows that, because the noise is sufficiently small, each $\hat{C}_k$ occurs in a location in which component $k$ is dominant, for $k = 1, 2$. 

\begin{lemma} \label{lem:midpoint}
Suppose $a_1 = a_2$ and the conditions of Theorem \algtheorem hold, i.e. the following properties are satisfied for some $0 < \epsilon  < \min\{M_1, M_2\} / 5$, and $a, M > 0$.
\begin{enumerate}
        \item $M_k \leq M\,, \quad k = 1, 2$
    \item $a_k \geq a\,, \quad k = 1, 2$
    \item $|C_1 - C_2| \geq 2 \sqrt{\frac{1}{a} \log \frac{M}{\epsilon}} \,.$
    \item $\delta \leq \delta^*(a_1, C_1, C_2, M_1, M_2)$ as defined as in \eqref{eq:delta-star}.
    \item $C_1, C_2 \in [0, T]$
\end{enumerate}
Let
\begin{align*}
    t_m &= \argmax_{1 \leq t \leq T} S(t) \,, \\
    \hat{C}_1 &= \argmax_{0 \leq t \leq t_m} N(t) \,,\\
    \hat{C}_2 &= \argmax_{t_m \leq t \leq T} N(t) \,,
\end{align*}
which represent the estimates of $\hat{C}_k$ in Line 2 of Algorithm \alginitialization. 
These values will satisfy the following three properties:
\begin{align}
    &C_1 \leq t_m \leq C_2 \label{eq:tm_midpoint}\\
    &M_1 e^{-a_1(\hat{C}_1 - C_1)^2} \geq M_2 e^{-a_2(\hat{C}_1 - C_2)^2} \label{eq:c1-hat-ok} \\
    &M_1 e^{-a_1(\hat{C}_2 - C_1)^2} \leq M_2 e^{-a_2(\hat{C}_2 - C_2)^2} \label{eq:c2-hat-ok}
\end{align}
\end{lemma}
\begin{proof}
To show the proof of this Lemma, we introduce the following object which measures the ratio of cases between the two communities:
\begin{equation} \label{eq:rt}
    r(t) = \frac{M_2 e^{-a_2(t-C_2)^2}}{M_1 e^{-a_1(t-C_1)^2}} \,.
\end{equation}
Since we assume $a_1 = a_2$, this simplifies to
\begin{equation}
    r(t) = \frac{M_2}{M_1} e^{2a_1 (C_2 - C_1) t + a_1 (C_1^2 - C_2^2)} \,.
\end{equation}
Since we assume without loss of generality that $C_1 < C_2$, we can observe that $r(t)$ is increasing in $t$, and starts at $r(0) < 1$ and ends at $r(T) > 1$.
This holds, as $r(C_1) \leq \frac{\epsilon}{M_1 - \epsilon} < \frac{1}{4}$, and similarly, $r(C_2) \geq \frac{M_2 - \epsilon}{\epsilon} > 4$.
Noting $C_1, C_2 \in [0, T]$ allows us to conclude $r(0) < 1$ and $r(T) > 1$.
Since $r(t)$ is continuous, by the intermediate value theorem there must be some point $t^*$ for which $r(t^*) = 1$.
The primary work of this lemma is to show that if the observation noise is small, then $t_m$ as defined above is close to $t^*$ in the sense that the case ratio of $t_m$ is close to 1.

We first note the following relationship between $S(t)$ and $r(t)$:
\begin{equation} \label{eq:st-rt}
    S(t) = - 2 a_1 + \log \left( 1 + \frac{r(t)}{(1 + r(t))^2} \left(e^{2a_1(C_1 - C_2)} + e^{-2a_1(C_1 - C_2)}  - 2 \right) \right) + \log \frac{(1 + \eta_{t+1})(1+\eta_{t-1})}{(1 + \eta_t)^2} \,.
\end{equation}
The proof of Equation \eqref{eq:st-rt} is as follows: First, by noting 
\begin{align*}
    \log N(t) &= \log \left( M_1 e^{-a_1(t-C_1)^2} + M_2 e^{-a_2(t-C_2)^2} \right) \\
    &= \log M_1 e^{-a_1(t-C_1)^2}  + \log \left(1 + r(t) \right) \,,
\end{align*}
we can show algebraically that
\begin{align}
    S(t) &= \log \frac{N(t+1)}{N(t)} - \log \frac{N(t)}{N(t-1)} \nonumber \\
    &= \log M_1 e^{-a_1(t + 1 -C_1)^2}  + \log \left(1 + r(t + 1) \right) \nonumber\\
    &\quad - 2 \left(\log M_1 e^{-a_1(t-C_1)^2}  + \log \left(1 + r(t) \right) \right) \nonumber\\
    &\quad + \log M_1 e^{-a_1(t - 1 -C_1)^2}  + \log \left(1 + r(t - 1) \right) \nonumber\\
    &\quad + \log \frac{(1 + \eta_{t+1})(1+\eta_{t-1})}{(1 + \eta_t)^2}\nonumber\\
    &= \log M_1 e^{-a_1(t -C_1)^2 - 2a_1(t-C_1) - a_1} - 2 \log M_1 e^{-a_1(t-C_1)^2} + \log M_1 e^{-a_1(t -C_1)^2 + 2a_1 (t-C_1) - a_1}   \nonumber\\
    &\quad+ \log \left(1 + r(t + 1) \right) - 2 \log \left(1 + r(t) \right) + \log \left(1 + r(t-1) \right) + \log \frac{(1 + \eta_{t+1})(1+\eta_{t-1})}{(1 + \eta_t)^2} \nonumber\\
    &=  - 2 a_1 +  \log \left(1 + r(t + 1) \right) - 2 \log \left(1 + r(t) \right) + \log \left(1 + r(t-1) \right) + \log \frac{(1 + \eta_{t+1})(1+\eta_{t-1})}{(1 + \eta_t)^2} \nonumber\\
    &=  - 2 a_1 +  \log \left(1 + r(t) \frac{e^{-2a_2(t-C_2) - a_2}}{e^{-2a_1(t-C_1) - a_1}} \right) - 2 \log \left(1 + r(t) \right) + \log \left(1 + r(t) \frac{e^{2a_2(t-C_2) - a_2}}{e^{2a_1(t-C_1) - a_1}}\right)\nonumber\\
    &\quad + \log \frac{(1 + \eta_{t+1})(1+\eta_{t-1})}{(1 + \eta_t)^2} \nonumber\\
    &=  - 2 a_1 +  \log \left(1 + r(t) \frac{e^{-2a_2(t-C_2) - a_2}}{e^{-2a_1(t-C_1) - a_1}} + r(t) \frac{e^{2a_2(t-C_2) - a_2}}{e^{2a_1(t-C_1) - a_1}} + r(t)^2 \frac{e^{-2a_2}}{e^{-2a_1}}\right) - 2 \log \left(1 + r(t) \right)\nonumber\\
    &\quad + \log \frac{(1 + \eta_{t+1})(1+\eta_{t-1})}{(1 + \eta_t)^2} \nonumber\\
    &= - 2 a_1 + \log \left( \left( \frac{1 + (r(t)e^{a_1 - a_2})^2}{(1 + r(t))^2}\right) + \frac{r(t)}{(1 + r(t))^2} \left(\frac{e^{-2a_1(t-C_1)}}{e^{-2a_2(t-C_2)}} + \frac{e^{-2a_2(t-C_2)}}{e^{-2a_1(t-C_1)}} \right) e^{a_1 - a_2}\right) \nonumber\\
    &\quad + \log \frac{(1 + \eta_{t+1})(1+\eta_{t-1})}{(1 + \eta_t)^2} \label{eq:st-rt-general}\,.
\end{align}
In the case where $a_1 = a_2$, this simplifies further yielding the result \eqref{eq:st-rt}.

Since at $t = t^*$, we have $r(t^*) = 1$, and because the noise $\eta_t \in [-\delta, \delta]$ for all $t$, we then can place an lower bound on $S(t^*)$ as follows.
\begin{align*}
    S(t^*) &= - 2 a_1 + \log \left( 1 + \frac{r(t^*)}{(1 + r(t^*))^2} \left(e^{2a_1(C_1 - C_2)} + e^{-2a_1(C_1 - C_2)}  - 2 \right) \right) + \log \frac{(1 + \eta_{t^*+1})(1+\eta_{t^*-1})}{(1 + \eta_t^*)^2} && \eqref{eq:st-rt} \\
    &= - 2 a_1 + \log \left( 1 + \frac{1}{(1 + 1)^2} \left(e^{2a_1(C_1 - C_2)} + e^{-2a_1(C_1 - C_2)}  - 2 \right) \right) + \log \frac{(1 + \eta_{t^*+1})(1+\eta_{t^*-1})}{(1 + \eta_t^*)^2} \\
    & \geq - 2 a_1 + \log \left( 1 + \frac{1}{4} \left(e^{2a_1(C_1 - C_2)} + e^{-2a_1(C_1 - C_2)}  - 2 \right) \right) - 2 \log \frac{1 + \delta}{1 - \delta}
\end{align*}

We next wish to show that for any $t$ such that either $r(t) \leq \frac{\epsilon}{\min \{ M_1, M_2 \} \left(\frac{1-\delta}{1+\delta}\right) - \epsilon}$ or $r(t) \geq \frac{\min \{ M_1, M_2 \}\left(\frac{1-\delta}{1+\delta}\right) - \epsilon}{\epsilon}$, there is a non-trivial lower bound on $S(t)$.
To do so, we note the following fact: For any value $\tau < 1$, if either  $r(t) \leq \tau $ either or $r(t) \geq \frac{1}{\tau}$, then $\frac{r(t)}{(1+r(t))^2} \leq \tau$.

This then allows us to show that, if either $r(t) \leq \frac{\epsilon}{\min \{ M_1, M_2 \} \left(\frac{1-\delta}{1+\delta}\right) - \epsilon}$ or $r(t) \geq \frac{\min \{ M_1, M_2 \}\left(\frac{1-\delta}{1+\delta}\right) - \epsilon}{\epsilon}$, then
\begin{align*}
    S(t) &= - 2 a_1 + \log \left( 1 + \frac{r(t)}{(1 + r(t))^2} \left(e^{2a_1(C_1 - C_2)} + e^{-2a_1(C_1 - C_2)}  - 2 \right) \right) + \log \frac{(1 + \eta_{t+1})(1+\eta_{t-1})}{(1 + \eta_t)^2} && \eqref{eq:st-rt}\\
    &\leq - 2 a_1 + \log \left( 1 + \frac{\epsilon}{\min \{ M_1, M_2 \} \left(\frac{1-\delta}{1+\delta}\right) - \epsilon} \left(e^{2a_1(C_1 - C_2)} + e^{-2a_1(C_1 - C_2)}  - 2 \right) \right) + \log \frac{(1 + \eta_{t+1})(1+\eta_{t-1})}{(1 + \eta_t)^2} \\
    &\leq - 2 a_1 + \log \left( 1 + \frac{\epsilon}{\min \{ M_1, M_2 \} \left(\frac{1-\delta}{1+\delta}\right) - \epsilon} \left(e^{2a_1(C_1 - C_2)} + e^{-2a_1(C_1 - C_2)}  - 2 \right) \right) + 2 \log \frac{1 + \delta}{1 - \delta} \,,
\end{align*}
where the first inequality is due to the bound on $\frac{r(t)}{(1+r(t))^2}$ mentioned above and the second is due to the boundedness of noise.

Importantly, if we show that $S(t^*) \geq S(t)$ for all $t$ such that $r(t) \leq \frac{\epsilon}{\min \{ M_1, M_2 \} \left(\frac{1-\delta}{1+\delta}\right) - \epsilon}$ or $r(t) \geq \frac{\min \{ M_1, M_2 \}\left(\frac{1-\delta}{1+\delta}\right) - \epsilon}{\epsilon}$ almost surely, then we must have
\begin{equation} \label{eq:tm-works}
    \frac{\epsilon}{\min \{ M_1, M_2 \}\left(\frac{1-\delta}{1+\delta} \right) - \epsilon} \leq r(t_m) \leq \frac{\min \{ M_1, M_2 \}\left(\frac{1-\delta}{1+\delta} \right) - \epsilon }{\epsilon} \,,
\end{equation}
as $S(t_m) \geq S(t^*)$ due to the definition of $t_m$ as an argmax.

Due to the condition on $\delta$, it is in fact the case that $S(t^*) \geq S(t)$ for $t$ which satisfy the aforementioned conditions.
Namely, when $\delta \leq \delta^*(a_1, C_1, C_2, M_1, M_2)$ for
\begin{align}
&\delta^*(a_1, C_1, C_2, M_1, M_2) = \nonumber \\
&\quad \min \left\lbrace \left(\left( \frac{1 + \frac{1}{4}  \left(e^{2a_1(C_1 - C_2)} + e^{-2a_1(C_1 - C_2)}  - 2 \right)}{1 + \frac{\epsilon}{\min\{M_1, M_2\} - \epsilon } \left(e^{2a_1(C_1 - C_2)} + e^{-2a_1(C_1 - C_2)}  - 2 \right)}\right)^{1/5} - 1\right) / 4, \frac{1}{4}\right\rbrace \label{eq:delta-star}\,,
\end{align}
we then see that because for any $\delta \leq \frac{1}{2}$, $\frac{1+\delta}{1 - \delta} \leq 1 + 4 \delta$.
Hence, $\delta$ also satisfies
\begin{align*}
\frac{1+\delta}{1-\delta} \leq 1 + 4\delta \leq \left( \frac{1 + \frac{1}{4}  \left(e^{2a_1(C_1 - C_2)} + e^{-2a_1(C_1 - C_2)}  - 2 \right)}{1 + \frac{\epsilon}{\min\{M_1, M_2\} - \epsilon } \left(e^{2a_1(C_1 - C_2)} + e^{-2a_1(C_1 - C_2)}  - 2 \right)}\right)^{1/5} \,.
\end{align*}
Given the assumption that $\epsilon < \min \{M_1, M_2\}$, we also note
\begin{equation} \label{eq:fix-m-bar}
    \frac{\min\{M_1, M_2 \} - \epsilon}{\min\{M_1, M_2 \}\left(\frac{1-\delta}{1+\delta}\right) - \epsilon} \leq 1 + 4 \delta \,.
\end{equation}
This results, as
\begin{align*}
    \min\{M_1, M_2 \} \geq 5\epsilon
    \implies &\min\{M_1, M_2 \} \geq \epsilon \frac{2 [1+\delta]}{(1 - 2\delta)} \qquad (\delta \leq \frac{1}{4})\\
    \implies &\min\{M_1, M_2 \} \geq \epsilon \frac{[2 \delta][1+\delta]}{\delta (1 - 2\delta)}\\
    \implies &\min\{M_1, M_2 \}\left(\frac{\delta - 2 \delta^2}{1+\delta}\right) \geq \epsilon [2 \delta]\\
    \implies &\min\{M_1, M_2 \}\left(\frac{- 2 \delta + 4 \delta^2)}{1+\delta}\right) \leq - \epsilon [4 \delta]\\
    \implies &\min\{M_1, M_2 \}\left(\frac{1 + \delta - (1 + 3 \delta - 4 \delta^2)}{1+\delta}\right) \leq - \epsilon [4 \delta]\\
    \implies &\min\{M_1, M_2 \}\left(1 - \frac{1+3 \delta - 4 \delta^2}{1+\delta}\right) \leq - \epsilon [4 \delta]\\
    \implies &\min\{M_1, M_2 \}\left(1 - \frac{1-\delta}{1+\delta}[1 + 4 \delta]\right) \leq - \epsilon [4 \delta]\\
    \implies &\min\{M_1, M_2 \} - \left[\min\{M_1, M_2 \}\left(\frac{1-\delta}{1+\delta}[1 + 4 \delta]\right) \right] \leq - \epsilon [4 \delta]\\
    \implies &\min\{M_1, M_2 \} - \left[\min\{M_1, M_2 \}\left(\frac{1-\delta}{1+\delta}[1 + 4 \delta]\right) \right] - \epsilon \leq - \epsilon [1 + 4 \delta]\\
    \implies &\min\{M_1, M_2 \} - \epsilon \leq \left[\min\{M_1, M_2 \}\left(\frac{1-\delta}{1+\delta}\right) - \epsilon \right][1 + 4 \delta] \\
    \implies &\frac{\min\{M_1, M_2 \} - \epsilon}{\min\{M_1, M_2 \}\left(\frac{1-\delta}{1+\delta}\right) - \epsilon} \leq 1 + 4 \delta \,.
\end{align*}
This then implies
\begin{align*}
    &4 \log \frac{1+\delta}{1-\delta} + \log  \frac{\min\{M_1, M_2 \} - \epsilon}{\min\{M_1, M_2 \}\left(\frac{1-\delta}{1+\delta}\right) - \epsilon} \\
    &\qquad\leq \log \left( 1 + \frac{1}{4}  \left(e^{2a_1(C_1 - C_2)} + e^{-2a_1(C_1 - C_2)}  - 2 \right)\right) \\
    &\qquad\quad - \log \left(1 + \frac{\epsilon}{\min\{M_1, M_2\} - \epsilon } \left(e^{2a_1(C_1 - C_2)} + e^{-2a_1(C_1 - C_2)}  - 2 \right) \right) \,,
\end{align*}
which implies
\begin{align*}
    4 \log \frac{1+\delta}{1-\delta} &\leq \log \left( 1 + \frac{1}{4}  \left(e^{2a_1(C_1 - C_2)} + e^{-2a_1(C_1 - C_2)}  - 2 \right)\right) \\
    &\quad - \log \left(\frac{\min\{M_1, M_2 \} - \epsilon}{\min\{M_1, M_2 \}\left(\frac{1-\delta}{1+\delta}\right) - \epsilon}  + \frac{\epsilon \left( \frac{\min\{M_1, M_2 \} - \epsilon}{\min\{M_1, M_2 \}\left(\frac{1-\delta}{1+\delta}\right) - \epsilon} \right)}{\min\{M_1, M_2\} - \epsilon } \left(e^{2a_1(C_1 - C_2)} + e^{-2a_1(C_1 - C_2)}  - 2 \right) \right) \\
    &= \log \left( 1 + \frac{1}{4}  \left(e^{2a_1(C_1 - C_2)} + e^{-2a_1(C_1 - C_2)}  - 2 \right)\right) \\
    &\quad - \log \left(1 + \frac{\epsilon}{\min\{M_1, M_2\}\frac{1-\delta}{1+\delta}  - \epsilon } \left(e^{2a_1(C_1 - C_2)} + e^{-2a_1(C_1 - C_2)}  - 2 \right) \right)
\end{align*}
From here, we can conclude that the condition on $\delta^*(a_1, C_1, C_2, M_1, M_2)$ implies that the lower bound on $S(t^*)$ is at least as large as the upper bound on $S(t)$ for $t$ such that either $r(t) \leq \frac{\epsilon}{\min \{ M_1, M_2 \} \left(\frac{1-\delta}{1+\delta}\right) - \epsilon}$ or $r(t) \geq \frac{\min \{ M_1, M_2 \}\left(\frac{1-\delta}{1+\delta}\right) - \epsilon}{\epsilon}$, as the final inequality implies
\begin{align*}
    &- 2 a_1 + \log \left( 1 + \frac{1}{4} \left(e^{2a_1(C_1 - C_2)} + e^{-2a_1(C_1 - C_2)}  - 2 \right) \right) - 2 \log \frac{1 + \delta}{1 - \delta} \geq \\
    &\quad - 2 a_1 + \log \left( 1 + \frac{\epsilon}{\min \{ M_1, M_2 \} \left(\frac{1-\delta}{1+\delta}\right) - \epsilon} \left(e^{2a_1(C_1 - C_2)} + e^{-2a_1(C_1 - C_2)}  - 2 \right) \right) + 2 \log \frac{1 + \delta}{1 - \delta} \,.
\end{align*}

Therefore, as mentioned above, because $S(t_m) \geq S(t^*)$ due to the definition of $t_m$ as an argmax, we can conclude that equation \eqref{eq:tm-works} holds.

\noindent \textbf{Justification of \eqref{eq:tm_midpoint}.} Equation \eqref{eq:tm-works} can then be used to show that \eqref{eq:tm_midpoint} holds.
Specifically, recall that in the case where $a_1 = a_2$, the case ratio $r(t)$ can be written as in \eqref{eq:rt}, which is an increasing function in $t$. Moreover, we have the following set of inequalities:
\begin{align*}
    r(C_1) &\leq \frac{\epsilon}{M_1 - \epsilon} \\
    &\leq\frac{\epsilon}{\min \{ M_1, M_2 \} \left(\frac{1-\delta}{1+\delta}\right) - \epsilon}\\
    &\leq r(t_m) \\
    &\leq \frac{\min \{ M_1, M_2 \} \left(\frac{1-\delta}{1+\delta}\right) - \epsilon}{\epsilon} \\
    &\leq \frac{M_2 - \epsilon}{\epsilon} \\
    &\leq r(C_2) \,,
\end{align*}
which by the increasing nature of $r(t)$ on the interval then implies directly that 
\[ C_1 \leq t_m \leq C_2 \,,\]
as desired.

\noindent \textbf{Justification of \eqref{eq:c1-hat-ok} and \eqref{eq:c2-hat-ok}.} Equation \eqref{eq:tm-works} can also be used to show that both \eqref{eq:c1-hat-ok} and \eqref{eq:c2-hat-ok} hold.
We proceed by showing \eqref{eq:c1-hat-ok} holds, and note that the proof for \eqref{eq:c2-hat-ok} follows symmetrically.

We first note that \eqref{eq:c1-hat-ok} is equivalent to the condition $r(\hat{C}_1) \leq 1$, and that we must have $\hat{C}_1 \leq t_m$ by the definition of $\hat{C}_1$.
Next, suppose towards contradiction that $r(\hat{C}_1) > 1$.
Because $r(t)$ is an increasing function of $t$, and from \eqref{eq:tm-works}, it would have to be the case that
\[ 1 < r(\hat{C}_1) \leq r(t_m) \leq \frac{\min \{ M_1, M_2 \} \left(\frac{1-\delta}{1+\delta}\right) - \epsilon}{\epsilon} \,.\]
From Lemma \ref{lem:sufficient-assumption-1}, we know that because $r(\hat{C}_1) > 1$, at the time index $\hat{C_1}$ the number of cases from the first outbreak is at most $\epsilon$, i.e. $M_1 e^{-a_1(\hat{C_1} - C_1)^2} \leq \epsilon$. 
Further, because 
\[ \frac{M_2 e^{-a_2(\hat{C}_1-C_2)^2}}{M_1 e^{-a_1(\hat{C}_1-C_1)^2}} = r(\hat{C}_1) \leq \frac{\min \{ M_1, M_2 \} \left(\frac{1-\delta}{1+\delta}\right) - \epsilon}{\epsilon}\,,\]
we must also have $M_2 e^{-a_1(\hat{C_1} - C_2)^2} < \min \{ M_1, M_2 \} \left(\frac{1-\delta}{1+\delta}\right) - \epsilon$, or equivalently,
\begin{align*}
N(\hat{C}_1) &= \left(\sum_{k = 1}^2 M_k e^{-a_k (\hat{C_1} - C_k)^2}\right)[1 + \eta_{\hat{C}_1}] \\
&< \left(M_2 e^{-a_1(\hat{C_1} - C_2)^2} + \epsilon \right) [1 + \eta_{\hat{C}_1}]\\
&\leq  \min \{ M_1, M_2 \} \left(\frac{1-\delta}{1+\delta}\right)[1 + \eta_{\hat{C}_1}]  \\
&\leq  \min \{ M_1, M_2 \} [1-\delta] &&(|\eta_{\hat{C}_1}| \leq \delta) \,.
\end{align*} 
By the definition of $\hat{C_1}$ as an argmax, this then implies that for all $t < t_m$,
\[ N(t)  \leq \min \{ M_1, M_2 \}[1-\delta] \,.\]
However, at the point $C_1$, which is the time at which cases in the first outbreak are maximized, we must have $N(C_1) > M_1 [1-\delta]$, a contradiction. Hence, it must be the case that $ r(\hat{C}_1) \leq 1$, which is equivalent to \eqref{eq:c1-hat-ok}, as desired.
\end{proof}

Finally, we put the above two lemmas together for the final proof of the claim.

\noindent \textbf{Proof of Theorem \algtheorem. }
We prove for the case where $k =1$, as the $k = 2$ case holds symmetrically. 
To show \eqmk, we first show $\hat{M}_k \geq M_k [1 - \delta]$.
We have
\begin{align*}
    \hat{M}_1 &= \max_{0 \leq t \leq t_m} N(t) \\
    &\geq \max_{0 \leq t \leq t_m} 
    \left(\sum_{k = 1}^2 M_k e^{-a_k(t-C_k)^2}\right) [1 - \delta] &&(\eta_t \in [-\delta, \delta]) \\
    &\geq \max_{0 \leq t \leq t_m} M_1 e^{-a_1(t-C_1)^2}[1 - \delta] &&(M_2 e^{-a_2(t-C_2)^2} \geq 0)\\
    &\geq M_1 e^{-a_1(\lceil C_1 \rceil -C_1)^2}[1 - \delta] &&(\text{From Lemma \ref{lem:midpoint}}, t_m \geq C_1) \,.
\end{align*}
The upper bound holds similarly, as
\begin{align*}
    \hat{M}_1 &= \max_{0 \leq t \leq t_m} N(t) \\
    &\leq \max_{0 \leq t \leq t_m} 
    \left(\sum_{k = 1}^2 M_k e^{-a_k(t-C_k)^2}\right) [1 + \delta] &&(\eta_t \in [-\delta, \delta]) \\
    &\leq \max_{0 \leq t \leq t_m} (M_1 e^{-a_1(t-C_1)^2} + \epsilon) [1 + \delta] &&(*)\\
    &\leq (M_1 + \epsilon)[1+\delta] &&(e^{-x} \leq 1, ~\forall x \geq 0) \,.
\end{align*}
Where the inequality in $(*)$ holds because Lemma \ref{lem:midpoint} ensures that at $\hat{C}_1$, $M_1 e^{-a_1(\hat{C}_1-C_1)^2} \geq M_2 e^{-a_2(\hat{C}_1-C_2)^2}$ and Lemma \ref{lem:sufficient-assumption-1} then ensures that $M_2 e^{-a_2(\hat{C}_1-C_2)^2} < \epsilon$.
We next show the proof of \eqck, and again show the case where $k = 1$.
To begin, we note that by the definition of $\hat{M}_1$
\begin{align}
    \hat{M}_1 = \left(\sum_{k = 1}^2 M_k e^{-a_k(\hat{C}_1-C_k)^2}\right)[1 + \eta_{\hat{C}_1}] \,.
\end{align}
We further note the following lower and upper bound on $\hat{M}_1$,
\begin{align}
    \hat{M}_1 = \left(\sum_{k = 1}^2 M_k e^{-a_k(\hat{C}_1-C_k)^2}\right)[1 + \eta_{\hat{C}_1}] &\geq M_1 e^{-a_1(\lceil C_1 \rceil -C_1)^2}[1 -\delta] \label{eq:ck-proof-lower}\\
    \hat{M}_1 = \left(\sum_{k = 1}^2 M_k e^{-a_k(\hat{C}_1-C_k)^2}\right)[1 + \eta_{\hat{C}_1}] &\leq (M_1 e^{-a_1(\hat{C}_1-C_1)^2} + \epsilon)[1 + \delta] \label{eq:ck-proof-upper} \,,
\end{align}
where both inequalities follow directly from the computations above used to prove \eqmk.
Hence, from the two inequalities above,
\begin{align*}
    M_1 e^{-a_1(\lceil C_1 \rceil -C_1)^2}[1 -\delta] \leq (M_1 e^{-a_1(\hat{C}_1-C_1)^2} + \epsilon) \,.
\end{align*}
Solving for $|\hat{C}_1-C_1|$, we get
\begin{align*}
    M_1 e^{-a_1(\hat{C}_1-C_1)^2} &\geq  M_1 e^{-a_1(\lceil C_1 \rceil -C_1)^2} \left( \frac{1-\delta}{1+\delta}\right) - \epsilon \implies \\
    -a_1(\hat{C}_1-C_1)^2 &\geq \log \left( e^{-a_1(\lceil C_1 \rceil -C_1)^2} \left( \frac{1-\delta}{1+\delta}\right) - \frac{\epsilon}{M_1} \right) \implies\\
    |\hat{C}_1-C_1| &\leq \sqrt{\frac{1}{a_1} \log \left(\frac{1}{ e^{-a_1(\lceil C_1 \rceil -C_1)^2} \left( \frac{1-\delta}{1+\delta}\right) - \frac{\epsilon}{M_1}} \right)} \,.
\end{align*}
Rewriting the final equation yields
\begin{align*}
    |\hat{C}_1-C_1| &\leq \sqrt{\frac{1}{a_1} \log \left(\frac{M_1}{M_1 e^{-a_1(\lceil C_1 \rceil -C_1)^2} \left( \frac{1-\delta}{1+\delta}\right) - \epsilon} \right)} \\
    &=\sqrt{\frac{1}{a_1} \log \left(1 + \frac{M_1 - M_1 e^{-a_1(\lceil C_1 \rceil -C_1)^2} \left( \frac{1-\delta}{1+\delta}\right) + \epsilon}{M_1 e^{-a_1(\lceil C_1 \rceil -C_1)^2} \left( \frac{1-\delta}{1+\delta}\right) - \epsilon} \right)} \,.
\end{align*}
To get an approximation, we can apply a Taylor expansion as long as $\frac{M_1 - M_1 e^{-a_1/4} \left( \frac{1-\delta}{1+\delta}\right) + \epsilon}{M_1 e^{-a_1(\lceil C_1 \rceil -C_1)^2} \left( \frac{1-\delta}{1+\delta}\right) - \epsilon} \approx 0$, which holds when $\delta$ is close to 0 and $M_1 e^{-a_1(\lceil C_1 \rceil -C_1)^2} \left( \frac{1-\delta}{1+\delta}\right) \gg \epsilon$.
Roughly, we would then have
\begin{align*}
    |\hat{C}_1-C_1| &\lesssim \sqrt{\frac{1}{a_1} \left( \frac{M_1 - M_1 e^{-a_1(\lceil C_1 \rceil -C_1)^2} \left( \frac{1-\delta}{1+\delta}\right) + \epsilon}{M_1 e^{-a_1(\lceil C_1 \rceil -C_1)^2} \left( \frac{1-\delta}{1+\delta}\right) - \epsilon} \right)} \\
    &\approx \sqrt{\frac{1}{a_1} \left( \frac{M_1 - M_1 e^{-a_1(\lceil C_1 \rceil -C_1)^2} \left( \frac{1-\delta}{1+\delta}\right) }{M_1 e^{-a_1(\lceil C_1 \rceil -C_1)^2} \left( \frac{1-\delta}{1+\delta}\right) - \epsilon } \right) + \frac{\epsilon}{M_1 e^{-a_1(\lceil C_1 \rceil -C_1)^2} \left( \frac{1-\delta}{1+\delta}\right) - \epsilon }} \\
    &\approx \sqrt{\frac{1}{a_1} \left(\frac{1 -  e^{-a_1(\lceil C_1 \rceil -C_1)^2} \left( \frac{1-\delta}{1+\delta}\right) }{e^{-a_1(\lceil C_1 \rceil -C_1)^2} \left( \frac{1-\delta}{1+\delta}\right)} + \frac{\epsilon}{M_1} \right)}\,,
\end{align*}
completing the claim of the theorem. \hfill $\qedsymbol$

\subsection{Relaxation of the Assumption $a_1 = a_2$}
The assumption $a_1 = a_2$ was made for clarity of presentation, but is not required in general.
Namely, we note that Lemma \ref{lem:midpoint} can be re-written for the case $a_1 \neq a_2$ with the following modifications.

\begin{lemma}\label{lem:midpoint-a1-neq-a2}
Suppose the parameters of the underlying mixture satisfy the following properties:
\begin{enumerate}
        \item $M_k \leq M\,, \quad k = 1, 2$
    \item $a_k \geq a\,, \quad k = 1, 2$
    \item $|C_1 - C_2| \geq \max \bigg\lbrace 2 \sqrt{\frac{1}{a} \log \frac{M}{\epsilon}}, \\
    \sqrt{ \frac{1}{a_1 a_2}\left(\frac{1}{4}\log^2 \left(\left(\max \{1, e^{2a_1 - 2a_2}\} - \left(\frac{1 + e^{a_1 - a_2}}{2} \right)^2  \right) \frac{4}{e^{a_1 - a_2}} + 2 \right) + (a_2 - a_1)\log \frac{M_1}{M_2} \right)} \bigg\rbrace \,.$
    \item $\delta$ and $\epsilon$ satisfy \eqref{eq:delta-a1-neq-a2}
    \item $C_1, C_2 \in [0, T]$
    \item $r(t) = \frac{M_2e^{-a_2 (t-C_2)^2}}{M_1e^{-a_1 (t-C_1)^2}}$ is increasing in $t$ on the interval $[0, T]$.
\end{enumerate}
Then, as in Lemma \ref{lem:midpoint}, for
\[ t_m = \argmax_{1 \leq t \leq T} S(t) \,,\]
the quantities
\begin{align*}
    \hat{C}_1 &= \argmax_{0 \leq t \leq t_m} N(t) \\
    \hat{C}_2 &= \argmax_{t_m \leq t \leq T} N(t) \,,
\end{align*}
which represent the estimates of $\hat{C}_k$ in Line 2 of Algorithm \alginitialization, and $t_m$ itself will satisfy \eqref{eq:tm_midpoint}-\eqref{eq:c2-hat-ok}.
\end{lemma}
\begin{proof}
The proof again begins by noting the relationship between $S(t)$ and the case ratio $r(t) = \frac{M_2e^{-a_2(t-C_2)^2}}{M_1e^{-a_1(t-C_1)^2}}$, which we claim can be written as 
\begin{align}
    S(t) &= -2a_1 + \log \bigg( \left(\frac{1 + r(t) e^{a_1 - a_2}}{1 + r(t)} \right)^2 \label{eq:st-rt-a1-neq-a2}\\
    &\quad + \frac{r(t)e^{a_1 - a_2}}{(1+r(t))^2} \left(e^{2\sqrt{a_1 a_2 (C_1 - C_2)^2 - (a_2 - a_1)\log \frac{r(t)M_1}{M_2}}} + e^{-2\sqrt{a_1 a_2 (C_1 - C_2)^2 - (a_2 - a_1)\log \frac{r(t)M_1}{M_2}}} - 2 \right)\bigg) \nonumber\\
    &\quad + \log \frac{(1 + \eta_{t+1})(1+\eta_{t-1})}{(1 + \eta_t)^2} \nonumber\,. 
\end{align}
The proof of this observation is as follows.
First, we note that from the manipulations of Lemma \ref{lem:midpoint}, 
\begin{align*}
    S(t) &= - 2 a_1 + \log \left( \left( \frac{1 + (r(t)e^{a_1 - a_2})^2}{(1 + r(t))^2}\right) + \frac{r(t)}{(1 + r(t))^2} \left(\frac{e^{-2a_1(t-C_1)}}{e^{-2a_2(t-C_2)}} + \frac{e^{-2a_2(t-C_2)}}{e^{-2a_1(t-C_1)}} \right) e^{a_1 - a_2}\right)\\
    &\quad + \log \frac{(1 + \eta_{t+1})(1+\eta_{t-1})}{(1 + \eta_t)^2}\,.
\end{align*}
Next, we note that since $r(t) = \frac{M_2e^{-a_2(t-C_2)^2}}{M_1e^{-a_1(t-C_1)^2}}$,
\[ -a_2(t-C_2)^2 + a_1(t-C_1)^2 = \log \frac{M_1 r(t)}{M_2} \,,\]
which implies
\begin{align*}
    (a_1 - a_2)t^2 + 2(a_2 C_2 - a_1 C_1)t + a_1 C_1^2 - a_2 C_2^2 - \log \frac{M_1 r(t)}{M_2} \,.
\end{align*}
Then, applying the quadratic formula, we see:
\begin{align*}
    t &= \frac{-2(a_2 C_2 - a_1 C_1) \pm \sqrt{4(a_2 C_2 - a_1 C_1)^2 - 4 (a_1 - a_2) \left(a_1 C_1^2 - a_2 C_2^2 - \log \frac{M_1 r(t)}{M_2}\right)}}{2(a_1 - a_2)} \\
    &= \frac{-2(a_2 C_2 - a_1 C_1) \pm 2\sqrt{a_1 a_2 (C_1 - C_2)^2 - (a_2 - a_1) \left(\log \frac{M_1 r(t)}{M_2}\right)}}{2(a_1 - a_2)} \,,
\end{align*}
where the second equality follows from noting
\begin{align*}
    4(a_2 C_2 - a_1 C_1)^2 &- 4 (a_1 - a_2) \left(a_1 C_1^2 - a_2 C_2^2 - \log \frac{M_1 r(t)}{M_2}\right) \\
    &= 4a_2^2 C_2^2 + 4a_1^2 C_1^2 - 8 a_1 a_2 C_1 C_2 - 4 a_1^2 C_1^2 - 4 a_2^2 C_2^2 \\
    &\quad + 4 a_1 a_2 C_1^2 + 4 a_1 a_2 C_2^2 - 4(a_2 - a_1) \log \frac{M_1 r(t)}{M_2} \\
    &= 4 a_1 a_2 (C_1^2 - 2C_1 C_2 + C_2^2) - 4 (a_2 - a_1) \log \frac{M_1 r(t)}{M_2}\\
    &= 4 a_1 a_2 (C_1 - C_2)^2 - 4 (a_2 - a_1) \log \frac{M_1 r(t)}{M_2} \,.
\end{align*}
Given that $t$ can be written in terms of $r(t)$, we can then plug back into the definition of $S(t)$, and find that since the solution to the quadratic formula shows
\[ 2(a_1 - a_2)t + 2(a_2 C_2 - a_1 C_1) = \pm 2\sqrt{a_1 a_2 (C_1 - C_2)^2 - (a_2 - a_1) \left(\log \frac{M_1 r(t)}{M_2}\right)} \,,  \]
it must be the case
\begin{align*}
&\frac{e^{-2a_1(t-C_1)}}{e^{-2a_2(t-C_2)}} + \frac{e^{-2a_2(t-C_2)}}{e^{-2a_1(t-C_1)}} =
&e^{2\sqrt{a_1 a_2 (C_1 - C_2)^2 - (a_2 - a_1)\log \frac{r(t)M_1}{M_2}}} + e^{-2\sqrt{a_1 a_2 (C_1 - C_2)^2 - (a_2 - a_1)\log \frac{r(t)M_1}{M_2}}} \,,
\end{align*}
as both positive and negative terms will appear in the sum regardless of which selection of $t$ is taken.
Plugging this expression into \eqref{eq:st-rt-general} proves the equality in \eqref{eq:st-rt-a1-neq-a2}.

At the index $t^*$ for which $r(t^*) = 1$, which is unique because we assume $r(t)$ is increasing on the observed interval, we see that the following statement holds:
\begin{align*}
    S(t^*) &\geq -2a_1 + \log \bigg( \left(\frac{1 + e^{a_1 - a_2}}{2} \right)^2 \\
    &\quad + \frac{e^{a_1 - a_2}}{4} \left(e^{2\sqrt{a_1 a_2 (C_1 - C_2)^2 - (a_2 - a_1)\log \frac{M_1}{M_2}}} + e^{-2\sqrt{a_1 a_2 (C_1 - C_2)^2 - (a_2 - a_1)\log \frac{M_1}{M_2}}} - 2 \right)\bigg) \\
    &\quad - 2 \log \left(\frac{1+\delta}{1-\delta} \right) \,.
\end{align*}
The inequality here comes from noting $|\eta_t| \leq \delta$.

Now, suppose $t$ is such that either $r(t) \leq \frac{\epsilon}{\min \{ M_1, M_2 \}\left(\frac{1-\delta}{1+\delta}\right) - \epsilon}$ or $r(t) \geq \frac{\min \{ M_1, M_2 \}\left(\frac{1-\delta}{1+\delta}\right) - \epsilon}{\epsilon}$. 
Since we assume $r(t)$ is increasing in $t$ on the observed interval, there are precisely two continuous regions of time intervals which are considered here.
We then see, for $t$ in these regions, since for any $\tau < 1$ we know that $r(t) \leq \tau \implies \frac{r(t)}{(1+r(t))^2} \leq \tau$ and $r(t) \geq \frac{1}{\tau} \implies \frac{r(t)}{(1+r(t))^2} \leq \tau, $
\begin{align*}
    S(t) &= -2a_1 + \log \bigg( \left(\frac{1 + r(t) e^{a_1 - a_2}}{1 + r(t)} \right)^2 \\
    &\quad + \frac{r(t)e^{a_1 - a_2}}{(1+r(t))^2} \left(e^{2\sqrt{a_1 a_2 (C_1 - C_2)^2 - (a_2 - a_1)\log \frac{r(t)M_1}{M_2}}} + e^{-2\sqrt{a_1 a_2 (C_1 - C_2)^2 - (a_2 - a_1)\log \frac{r(t)M_1}{M_2}}} - 2 \right)\bigg)\\
    &\quad + \log \frac{(1 + \eta_{t+1})(1+\eta_{t-1})}{(1 + \eta_t)^2} \\
    &\leq -2a_1 + \log \bigg( \left(\frac{1 + r(t) e^{a_1 - a_2}}{1 + r(t)} \right)^2 \\
    &\quad + \frac{\epsilon}{\min \{ M_1, M_2 \}\left(\frac{1-\delta}{1+\delta}\right) - \epsilon} e^{a_1 - a_2} \bigg(e^{2\sqrt{a_1 a_2 (C_1 - C_2)^2 - (a_2 - a_1)\log \frac{r(t)M_1}{M_2}}} \\
    &\qquad\qquad\qquad\qquad\qquad + e^{-2\sqrt{a_1 a_2 (C_1 - C_2)^2 - (a_2 - a_1)\log \frac{r(t)M_1}{M_2}}} - 2 \bigg)\bigg)\\
    &\quad + \log \frac{(1 + \eta_{t+1})(1+\eta_{t-1})}{(1 + \eta_t)^2} 
\end{align*}
Defining
\begin{align*}
&Q(a_1, a_2, C_1, C_2, M_1, M_2)= \\
&\sup_{t \in [0, T] ~|~ \frac{r(t)}{(1+r(t))^2} \leq \frac{\epsilon}{\min \{ M_1, M_2 \}\left(\frac{1-\delta}{1+\delta}\right) - \epsilon}} e^{2\sqrt{a_1 a_2 (C_1 - C_2)^2 - (a_2 - a_1)\log \frac{r(t)M_1}{M_2}}} + e^{-2\sqrt{a_1 a_2 (C_1 - C_2)^2 - (a_2 - a_1)\log \frac{r(t)M_1}{M_2}}} - 2 \,,
\end{align*} 

it is then clear that for such values of $t$,
\begin{align*}
    S(t) &\leq - 2 a_1 + \log \bigg( \left(\frac{1 + r(t) e^{a_1 - a_2}}{1 + r(t)} \right)^2  + \frac{\epsilon}{\min \{ M_1, M_2 \}\left(\frac{1-\delta}{1+\delta}\right) - \epsilon} e^{a_1 - a_2} Q(a_1, a_2, C_1, C_2, M_1, M_2)\bigg)\\
    &\quad + \log \frac{(1 + \eta_{t+1})(1+\eta_{t-1})}{(1 + \eta_t)^2}  \\
    &\leq- 2 a_1 + \log \bigg( \left(\frac{1 + r(t) e^{a_1 - a_2}}{1 + r(t)} \right)^2  + \frac{\epsilon}{\min \{ M_1, M_2 \}\left(\frac{1-\delta}{1+\delta}\right) - \epsilon} e^{a_1 - a_2} Q(a_1, a_2, C_1, C_2, M_1, M_2)\bigg)\\
    &\quad + 2 \log \frac{1+\delta}{1-\delta} \qquad (|\eta_t| \leq \delta)\\
    &\leq -2 a_1 + \log \bigg( \max \{1, e^{2a_1 - 2a_2}\}  + \frac{\epsilon}{\min \{ M_1, M_2 \}\left(\frac{1-\delta}{1+\delta}\right) - \epsilon} e^{a_1 - a_2} Q(a_1, a_2, C_1, C_2, M_1, M_2)\bigg)\\
    &\quad + 2 \log \frac{1+\delta}{1-\delta} \,. \qquad \left(\left(\frac{1 + r(t) e^{a_1 - a_2}}{1 + r(t)} \right)^2 \leq \max\{1, e^{2a_1 - 2a_2}\} \right) \,.
\end{align*}
Importantly, $Q(a_1, a_2, C_1, C_2, M_1, M_2)$ is bounded, as $t \in [0, T]$ is bounded, and is not a function of $\epsilon$.

Hence, if $\delta$ and $\epsilon$ are small enough as to ensure
\begin{align} 
   &-2a_1 + \log \bigg( \left(\frac{1 + e^{a_1 - a_2}}{2} \right)^2 \label{eq:delta-a1-neq-a2}\\
    &\quad + \frac{e^{a_1 - a_2}}{4} \left(e^{2\sqrt{a_1 a_2 (C_1 - C_2)^2 - (a_2 - a_1)\log \frac{M_1}{M_2}}} + e^{-2\sqrt{a_1 a_2 (C_1 - C_2)^2 - (a_2 - a_1)\log \frac{M_1}{M_2}}} - 2 \right)\bigg) \nonumber\\
    &\quad - 2 \log \left(\frac{1+\delta}{1-\delta} \right) \geq \nonumber \\
    & - 2 a_1 + \log \bigg( \max \{1, e^{2a_1 - 2a_2}\}  + \frac{\epsilon}{\min \{ M_1, M_2 \}\left(\frac{1-\delta}{1+\delta}\right) - \epsilon} e^{a_1 - a_2} Q(a_1, a_2, C_1, C_2, M_2)\bigg)\nonumber \\
    &\quad + 2 \log \frac{1+\delta}{1-\delta} \nonumber
\end{align}
then the conclusion of Lemma \ref{lem:midpoint} will hold and the proof will follow as before.
We note that such a selection of $\delta$ and $\epsilon$ is possible, as when both are equal to 0, the inequality holds strictly so long as:
\begin{align*}
    &\left(\frac{1 + e^{a_1 - a_2}}{2} \right)^2\\
    &\quad + \frac{e^{a_1 - a_2}}{4} \left(e^{2\sqrt{a_1 a_2 (C_1 - C_2)^2 - (a_2 - a_1)\log \frac{M_1}{M_2}}} + e^{-2\sqrt{a_1 a_2 (C_1 - C_2)^2 - (a_2 - a_1)\log \frac{M_1}{M_2}}} - 2 \right) \\
    &> \max \{1, e^{2a_1 - 2a_2}\} \,,
\end{align*}
which is guaranteed by the temporal well-separation condition in Assumption 3 of the Lemma, since
\begin{align*}
    & |C_1 - C_2|  \\
    &> \sqrt{ \frac{1}{a_1 a_2}\left(\frac{1}{4}\log^2 \left(\left(\max \{1, e^{2a_1 - 2a_2}\} - \left(\frac{1 + e^{a_1 - a_2}}{2} \right)^2  \right) \frac{4}{e^{a_1 - a_2}} + 2 \right) + (a_2 - a_1)\log \frac{M_1}{M_2} \right)}\\
    \implies & a_1 a_2 (C_1 - C_2)^2 - (a_2 - a_1)\log \frac{M_1}{M_2}  \\
    &> \frac{1}{4}\log^2 \left(\left(\max \{1, e^{2a_1 - 2a_2}\} - \left(\frac{1 + e^{a_1 - a_2}}{2} \right)^2  \right) \frac{4}{e^{a_1 - a_2}} + 2 \right) \\
    \implies & 2\sqrt{a_1 a_2 (C_1 - C_2)^2 - (a_2 - a_1)\log \frac{M_1}{M_2}} \\
    &> \log \left(\left(\max \{1, e^{2a_1 - 2a_2}\} - \left(\frac{1 + e^{a_1 - a_2}}{2} \right)^2  \right) \frac{4}{e^{a_1 - a_2}} + 2 \right) \\
    \implies & e^{2\sqrt{a_1 a_2 (C_1 - C_2)^2 - (a_2 - a_1)\log \frac{M_1}{M_2}}} \\
    &> \left(\max \{1, e^{2a_1 - 2a_2}\} - \left(\frac{1 + e^{a_1 - a_2}}{2} \right)^2  \right) \frac{4}{e^{a_1 - a_2}} + 2\\
    \implies & e^{2\sqrt{a_1 a_2 (C_1 - C_2)^2 - (a_2 - a_1)\log \frac{M_1}{M_2}}} + e^{-2\sqrt{a_1 a_2 (C_1 - C_2)^2 - (a_2 - a_1)\log \frac{M_1}{M_2}}} - 2  \\
    &> \left(\max \{1, e^{2a_1 - 2a_2}\} - \left(\frac{1 + e^{a_1 - a_2}}{2} \right)^2  \right) \frac{4}{e^{a_1 - a_2}}\\
    \implies &\frac{e^{a_1 - a_2}}{4} \left(e^{2\sqrt{a_1 a_2 (C_1 - C_2)^2 - (a_2 - a_1)\log \frac{M_1}{M_2}}} + e^{-2\sqrt{a_1 a_2 (C_1 - C_2)^2 - (a_2 - a_1)\log \frac{M_1}{M_2}}} - 2 \right) \\
    &> \max \{1, e^{2a_1 - 2a_2}\} - \left(\frac{1 + e^{a_1 - a_2}}{2} \right)^2 \\
    \implies &\left(\frac{1 + e^{a_1 - a_2}}{2} \right)^2 \\
    &\quad + \frac{e^{a_1 - a_2}}{4} \left(e^{2\sqrt{a_1 a_2 (C_1 - C_2)^2 - (a_2 - a_1)\log \frac{M_1}{M_2}}} + e^{-2\sqrt{a_1 a_2 (C_1 - C_2)^2 - (a_2 - a_1)\log \frac{M_1}{M_2}}} - 2 \right) \\
    &> \max \{1, e^{2a_1 - 2a_2}\} \,,
\end{align*}
By continuity of each term on the right-hand side and left hand side of \eqref{eq:delta-a1-neq-a2}, we see that for fixed parameters $a_1, a_2, C_1, C_2, M_1,$ and $M_2$, there exist $\delta, \epsilon > 0$ which satisfy the above inequality as well.
\end{proof}

\section{Proof Details for Section \ref{s:theory}}
\label{a:theory}
In this section, we provide the details of the proofs for Section \theorysection, beginning with those related to the single community model of Section \sstheorygaussian and then those related to Section \sstheorysbm.

We begin with a review of Bernoulli's inequality, which is used throughout the proofs:
\begin{proposition}[Bernoulli's inequality]
For any $x > -1$, and every integer $r \geq 0$,
\[ (1 + x)^r \geq 1 + rx \,.\]
\end{proposition}
\begin{proof}
The claim follows from  induction on $r$. 
As a first base case, when $r = 0$, we see
\[ (1+x)^0 = 1 \geq 1 + 0x \,. \]
Next, we assume the induction hypothesis that for an arbitrary integer $k \geq 0$, and any $x > -1$, 
\[ (1 + x)^k \geq 1 + kx \,.\]
We wish to show the claim holds true when $r = k + 2$. Indeed,
\begin{align*}
    (1 + x)^{k+1} &= (1 + x)^k (1+x) \\
    &\geq (1+kx) (1 +x) \qquad \text{(Induction Hypothesis,} ~(1+x) \geq 0)\\
    &= (1+kx)(1 + x) \\
    &= 1 + (k+1)x + kx^2\\
    &\geq 1 + (k+1) x \,,
\end{align*}
where the final step follows from the fact $x^2 \geq 0$ and $k \geq 0$.
Hence, by induction, for any integer $r$ the claim holds.
\end{proof}

\subsection{Proof of Theorem \thmgaussianconnection} \label{aa:gaussian-connection-proof}
Recall the definition of $N(t)$ as the number of infected individuals at time $t$, and that in the model, $N(0) = 1$, as we begin with a single initially infected node.
We will define $\mathcal{F}_t = \sigma(I(0), I(1), \dots, I(t))$ as the $\sigma-$algebra generated by previous observations of infected individuals.

We first show upper bounds on the quantities on the conditional expectations of $N(t)$, $N(t)^2$, and $N(s) \times N(t)$ for any $s, t \geq 0$.
These bounds will then allow us to fully characterize the expectation of $N(t)$ on its own.
These upper bounds are summarized in Lemma \ref{lem:gaussian-upper}.

\begin{lemma} \label{lem:gaussian-upper}
In the model of Section \sstheorygaussian, recall $N(t) = |I(t)|$ is the number of cases at time $t$.
The following upper bounds hold.
\begin{align}
    \E[N(t)] &\leq (d\beta)^t \gamma^{\frac{t(t-1)}{2}} \,, \label{eq:lin-upper} \\
    \E[N(t)^2] &\leq (d\beta)^t \gamma^{\frac{t(t-1)}{2}} + (d\beta)^{2t} \gamma^{t(t-1)} + \sum_{t' = 1}^{t-1} (d\beta)^{t'} \gamma^{\frac{t'(t'-1)}{2}} \prod_{\tau = t'}^{t-1} (d\beta\gamma^\tau)^{2}  \,.\label{eq:quad-upper}
\end{align}
Notably, each upper bound above is constant with respect to $n$, the number of individuals in the graph.
\end{lemma}
\begin{proof}
We begin by noting that, conditioned on previous observations in $\mathcal{F}_t$, $N(t+1)$ is distributed as a binomial random variable with $n- \sum_{s = 0}^t N(s)$ trials and success probability $1 - (1 - \frac{d \beta \gamma^t}{n})^{N(t)}$.
That is, the size of the susceptible population is  $n- \sum_{s = 0}^t N(s)$ as this reflects the entire population that has not yet been infected or recovered, and the success probability reflects that a new infection occurs if at least one of the $N(t)$ infected individuals infects a particular susceptible node.

Therefore, the following bound can be shown for $\E[N(t+1)|\mathcal{F}_t]$.
\begin{align*}
    \E[N(t+1)|\mathcal{F}_t] &= \left( n- \sum_{s = 0}^t N(s)\right) \left(1 - \left(1 - \frac{d \beta \gamma^t}{n}\right)^{N(t)} \right) \\
    &\leq n \left(1 - \left(1 - \frac{d \beta \gamma^t}{n}\right)^{N(t)} \right) \\
    &\leq n\left(1 - 1 + N(t) \frac{d\beta\gamma^t}{n} \right) \qquad \text{(Bernoulli's inequality)} \,.
\end{align*}
Hence, simplifying, we see
\begin{equation}\label{eq:upper-cond-bound}
    \E[N(t+1)|\mathcal{F}_t] \leq N(t) d \beta \gamma^t \,.
\end{equation}
From this, the upper bound on $N(t)$ becomes apparent and follows from induction.
Specifically, since $N(0) = 1$, the bound holds for the case $t = 0$.
Then, if we assume the induction hypothesis $\E[N(t)] \leq (d\beta)^t \gamma^{\frac{t(t-1)}{2}}$ for arbitrary $t \geq 0$, we see
\begin{align*}
    \E[N(t+1)] &= \E[\E[N(t+1)|\mathcal{F}_t]] \\
    &\leq \E[N(t) d\beta\gamma^t] \qquad \eqref{eq:upper-cond-bound} \\
    &\leq (d\beta)^t \gamma^{\frac{t(t-1)}{2}} \times d\beta \gamma^t  \qquad \text{(Induction Hypothesis)}\\
    &= (d\beta)^{t+1} \gamma^{\frac{t(t-1) + 2t}{2}} = (d\beta)^{t+1} \gamma^{\frac{(t+1)t}{2}} \,,
\end{align*}
proving the bound \eqref{eq:lin-upper}.

Similarly, the following bound holds for $\E[N(t+1)^2|\mathcal{F}_t]$.
Recall that for a binomial random variable with $n$ trials and probability $p$ of success, the second moment is computed as $np(1-p) + (np)^2$.
Hence,
\begin{align*}
    \E[N(t+1)^2|\mathcal{F}_t] &= \left( n- \sum_{s = 0}^t N(s)\right) \left(1 - \left(1 - \frac{d \beta \gamma^t}{n}\right)^{N(t)} \right) \left(\left(1 - \frac{d \beta \gamma^t}{n}\right)^{N(t)} \right) +\\
    &\qquad \left(\left( n- \sum_{s = 0}^t N(s)\right) \left(1 - \left(1 - \frac{d \beta \gamma^t}{n}\right)^{N(t)} \right) \right)^2\\
    &\leq n \left(1 - \left(1 - \frac{d \beta \gamma^t}{n}\right)^{N(t)} \right) \left(\left(1 - \frac{d \beta \gamma^t}{n}\right)^{N(t)} \right) +\\
    &\qquad \left(n \left(1 - \left(1 - \frac{d \beta \gamma^t}{n}\right)^{N(t)} \right) \right)^2 \\
    &\leq n \left(1 - \left(1 - \frac{d \beta \gamma^t}{n}\right)^{N(t)} \right)  +  \left(n \left(1 - \left(1 - \frac{d \beta \gamma^t}{n}\right)^{N(t)} \right) \right)^2 \\
    &\leq n\left(1 - 1 + N(t) \frac{d\beta\gamma^t}{n} \right) + \left(n\left(1 - 1 + N(t) \frac{d\beta\gamma^t}{n} \right) \right)^2 \qquad \text{(Bernoulli's inequality)} \\
    &=N(t) d\beta\gamma^t + N(t)^2 (d\beta\gamma^t)^2 \,,
\end{align*}
As before, the final claim for the upper bound follows from induction.
As a base case, we note again that $\E[N(0)^2] = 1$ as the quantity is deterministic.
Next, we assume the induction hypothesis that $\E[N(t)^2] \leq (d\beta)^{2t} \gamma^{t(t-1)} + \sum_{t' = 0}^{t-1} (d\beta)^{t'} \gamma^{\frac{t'(t'+1)}{2}} \prod_{\tau = t'}^{t-1} (d\beta\gamma^\tau)^{2}$ for an arbitrary $t \geq 0$.
We then see
\begin{align*}
    \E[N(t+1)^2] &= \E[\E[N(t+1)^2|\mathcal{F}_t]] \\
    &\leq \E\left[N(t) d\beta\gamma^t + N(t)^2 (d\beta\gamma^t)^2 \right] \\
    &\leq (d\beta)^{t+1} \gamma^{\frac{t(t+1)}{2}} + (d\beta\gamma^t)^2 \E\left[N(t)^2 \right] \\
    &\leq (d\beta)^{t+1} \gamma^{\frac{t(t+1)}{2}} + (d\beta\gamma^t)^2 \left((d\beta)^t \gamma^{\frac{t(t-1)}{2}} + (d\beta)^{2t} \gamma^{t(t-1)} + \sum_{t' = 1}^{t-1} (d\beta)^{t'} \gamma^{\frac{t'(t'+1)}{2}} \prod_{\tau = t'}^{t-1} (d\beta\gamma^\tau)^{2} \right) \\
    &= (d\beta)^{t+1} \gamma^{\frac{t(t+1)}{2}} + (d\beta)^{2(t+1)} \gamma^{t(t+1)} + \sum_{t' = 0}^{t} (d\beta)^{t'} \gamma^{\frac{t'(t'+1)}{2}} \prod_{\tau = t'}^{t} (d\beta\gamma^\tau)^{2} \,,
\end{align*}
as desired, proving \eqref{eq:quad-upper}.
\end{proof}

As a Corollary, we note the following upper bound on the second moment of $N(t)$, which helps to simplify the presentation of the results:
\begin{corollary}\label{corr:quad-upper-o-notation}
\begin{equation}
\E[N(t)^2] \leq O\left( t e^{\log^2 (d \beta / \sqrt{\gamma} )/ \log (1/\gamma)}\right) \,.
\end{equation}
\end{corollary}
\begin{proof}
The proof of this claim follows by first noting that 
\[ (d\beta)^{2t} \gamma^{t(t-1)}  \leq e^{(\log^2 d \beta / \sqrt{\gamma} )/ \log (1/\gamma)} \,.\]
This follows, as
\begin{align*}
    (d\beta)^{2t} \gamma^{t(t-1)} &\leq \max_{t}(d\beta)^{2t} \gamma^{t(t-1)} \\
    &= \max_t e^{- t^2 \log (1/\gamma) + 2t \log (d\beta/\sqrt{\gamma})} \,.
\end{align*}
Solving the maximization problem, we find that the right hand side maximized at $t^* = \frac{ \log (d\beta/\sqrt{\gamma})}{\log 1/\gamma}$, which yields
\begin{align*}
    (d\beta)^{2t} \gamma^{t(t-1)} &\leq e^{\log^2 (d \beta / \sqrt{\gamma})/ \log (1/\gamma)} \,.
\end{align*}
We also note that $(d\beta)^{2t} \gamma^{t(t-1)}$ is the dominant term in the inequality \eqref{eq:quad-upper}, and that as long as $d\beta > 1$, which corresponds to the non-trivial initial condition for spread, then each term in the summation of \eqref{eq:quad-upper} can be bounded by $e^{\log^2 (d \beta / \sqrt{\gamma})/ \log (1/\gamma)}$. 
Hence, from Lemma \ref{lem:gaussian-upper}, we can then write
\begin{align*}
    \E[N(t)^2] &\leq (d\beta)^t \gamma^{\frac{t(t-1)}{2}} + (d\beta)^{2t} \gamma^{t(t-1)} + \sum_{t' = 1}^{t-1} (d\beta)^{t'} \gamma^{\frac{t'(t'-1)}{2}} \prod_{\tau = t'}^{t-1} (d\beta\gamma^\tau)^{2} \\
    &\leq t e^{(\log^2 d \beta / \sqrt{\gamma} )/ \log (1/\gamma)} \,,
\end{align*}
proving the claim.
\end{proof}

As an additional Corollary, the cross-products $\E[N(s) N(t)]$ are also bounded by a constant with respect to $n$:
\begin{corollary} \label{corr:gaussian-cross-upper}
\begin{equation}
    \E[N(s)N(t)] \leq  \max\{s, t\} e^{\log^2 (d \beta / \sqrt{\gamma} )/ \log (1/\gamma)}\label{eq:cross-upper} \,,
\end{equation}
which notably is constant with respect to $n$.
\end{corollary}
\begin{proof}
This follows from the observation that for any two random variables $X$ and $Y$,
\begin{align*}
    \E[XY] \leq \frac{1}{2}\E[X^2] + \frac{1}{2}\E[Y^2]\,,
\end{align*}
which is due to the fact that for any values $x, y \in \mathbb{R}$, because $(x - y)^2 \geq 0$, we must have $x^2 + y^2 \geq 2 xy$, from which the claim follows.
Hence,
\begin{align*}
    \E[N(s) N(t)] &\leq \frac{1}{2}\E[N(t)^2] + \frac{1}{2} \E[N(s)^2] \\
    &\leq \frac{1}{2} \max \{\E[N(t)^2] , \E[N(s)^2] \} + \frac{1}{2} \max \{\E[N(t)^2] , \E[N(s)^2] \} \\
    &=  \max \{\E[N(t)^2] , \E[N(s)^2] \} \,.
\end{align*} 
From Corollary \ref{corr:quad-upper-o-notation}, the claim then follows.
\end{proof}

\noindent \textbf{Proof of Theorem \thmgaussianconnection}. \quad We prove the claim of the proof in two parts, by noting an upper bound and lower bound on $\E[N(t)]$ which implies the claim of the Theorem.

The upper bound on $\E[N(t)]$ is immediate from Lemma \ref{lem:gaussian-upper}, as it shows
\[ \E[N(t)] \leq (d\beta)^t \gamma^{\frac{t(t-1)}{2}} =  e^{\left(\frac{1}{2} \log \gamma\right) t^2 + \left( \log \frac{d\beta}{\sqrt{\gamma}}\right) t  } \,.\]

We next show a lower bound on $\E[N(t)]$, and proceed by induction.
We claim:
\begin{equation}
    \label{eq:nt-lower-ind-hyp}
    \E[N(t)] \geq e^{\left(\frac{1}{2} \log \gamma\right) t^2 + \left( \log \frac{d\beta}{\sqrt{\gamma}}\right) t  } - O\left( \frac{t^2 e^{\log^2 (d \beta / \sqrt{\gamma} )/ \log (1/\gamma)}}{n}\right) \,,
\end{equation}
where we recall the $O$ notation is taken with respect to the community size $n$.
As a base case, we note that $\E[N(0)] = 1$ due to the initial condition that a single individual is infected. This clearly satisfies the induction hypothesis, as $1 = e^0 - 0$, and $0 = O(d\beta/n)$ trivially.

Next, we assume the induction hypothesis \eqref{eq:nt-lower-ind-hyp}. 
We again recall that given $\mathcal{F}_t = \sigma(I(0), \dots, I(t))$ as the $\sigma-$algebra generated by observations of infections up to time $t$, the conditional distribution of $N(t+1)$ is a binomial random variable with $n - \sum_{s = 0}^t N(s)$ trials and probability of success $\left(1 - \left(1 - \frac{d \beta \gamma^t}{n}\right)^{N(t)} \right)$.
Hence,
\begin{align*}
     \E[N(t+1)] &=  \E[\E[N(t+1)|\mathcal{F}_t] ]\\
     &= \E\left[\left( n- \sum_{s = 0}^t N(s)\right) \left(1 - \left(1 - \frac{d \beta \gamma^t}{n}\right)^{N(t)} \right)\right] \\
     &= \E\left[n\left(1 - \left(1 - \frac{d \beta \gamma^t}{n}\right)^{N(t)} \right) - \left(1 - \left(1 - \frac{d \beta \gamma^t}{n}\right)^{N(t)} \right) \sum_{s = 0}^t N(s)\right] \\
     &\geq \E\left[n \left(1 - 1 + N(t)\frac{d\beta\gamma^t}{n} - \frac{N(t)^2 (d\beta\gamma^t)^2}{2 n^2} \right) - \left(1 - 1 +  N(t)\frac{d\beta\gamma^t}{n}\right) \sum_{s = 0}^t N(s)\right] \,,
\end{align*}
where the last step here holds by noting 
\[ 1 - \frac{d \beta \gamma^t}{n} N(t)  \left(1 - \frac{d \beta \gamma^t}{n}\right)^{N(t)} \leq
\left(1 - \frac{d \beta \gamma^t}{n}\right)^{N(t)}
\leq 1 - N(t)\frac{d\beta\gamma^t}{n} + \frac{N(t)^2 (d\beta\gamma^t)^2}{2 n^2} \,,\]
where the first inequality is due to Bernoulli's inequality and the second follows from noting that for any $x, y > 0$, $(1-x)^y \leq e^{-xy} \leq 1 - xy + \frac{ x^2y^2}{2}$ and applying this inequality with $x = \frac{d \beta \gamma^t}{n}$ and $y = N(t)$.
We then continue, and see from the inequalities on $\E[N(t)],$
\begin{align*}
    \E[N(t+1)] &\geq \E\left[n \left( N(t)\frac{d\beta\gamma^t}{n} - \frac{N(t)^2 (d\beta\gamma^t)^2}{2 n^2} \right) - \left(  N(t)\frac{d\beta\gamma^t}{n}\right) \sum_{s = 0}^t N(s)\right] \\
    &= \E\left[ N(t) d\beta\gamma^t - \frac{N(t)^2 (d\beta\gamma^t)^2}{2 n} - \left(\frac{d\beta\gamma^t}{n}\right) \sum_{s = 0}^t N(t) N(s)\right]\\
    &= \E\left[ N(t) d\beta\gamma^t \right] - \left(\frac{1}{n} \E\left[\frac{N(t)^2 (d\beta\gamma^t)^2}{2} + d\beta\gamma^t \sum_{s = 0}^t N(t) N(s)\right]\right) \,.
\end{align*}
Therefore, from Corollaries  \ref{corr:quad-upper-o-notation} and \ref{corr:gaussian-cross-upper}, which show that there is an upper bound on $\E[N(t)^2]$ and $\E[N(s)N(t)]$ which only depends on $t e^{\log^2 (d \beta / \sqrt{\gamma} )/ \log (1/\gamma)}$, and not $n$ itself, we see that the $t$ cross-terms yield
\begin{equation} \label{eq:lower-cond-bound}
    \E[N(t+1)] \geq d\beta\gamma^t\E\left[ N(t) \right] - O\left(\frac{t^2 e^{\log^2 (d \beta / \sqrt{\gamma} )/ \log (1/\gamma)}}{n}\right) \,.
\end{equation}
Finally, applying the induction hypothesis, we find
\begin{align*}
    \E[N(t+1)] &\geq d\beta\gamma^t\E\left[ N(t) \right] - O\left(\frac{t^2 e^{\log^2 (d \beta / \sqrt{\gamma} )/ \log (1/\gamma)}}{n}\right)\\
    &\geq d\beta\gamma^t \times \left( e^{\left(\frac{1}{2} \log \gamma\right) t^2 + \left( \log \frac{d\beta}{\sqrt{\gamma}}\right) t  } - O\left(\frac{(t-1)^2 e^{\log^2 (d \beta / \sqrt{\gamma} )/ \log (1/\gamma)}}{n}\right) \right) \\
    &\qquad - O\left(\frac{t^2 e^{\log^2 (d \beta / \sqrt{\gamma} )/ \log (1/\gamma)}}{n}\right)\\
    &= e^{\left(\frac{1}{2} \log \gamma\right) (t+1)^2 + \left( \log \frac{d\beta}{\sqrt{\gamma}}\right) (t+1)  } - O\left(\frac{t^2 e^{\log^2 (d \beta / \sqrt{\gamma} )/ \log (1/\gamma)}}{n}\right) \,,
\end{align*}
as desired.

Hence, combining the lower bound in \eqref{eq:nt-lower-ind-hyp} with the upper bound provided in Lemma \ref{lem:gaussian-upper}, we see that $\E[N(t)]$ is tightly characterized up to negative additive term which scales as $O(d\beta/n)$, proving the claim of Theorem \thmgaussianconnection.

\subsection{Proof of Lemma \lemgaussianconcentration}
\label{aa:gaussian-concentration-proof}
To show this claim, we first show the following lemma, which establishes a concentration result on the conditional distribution of $N(t+1)$ given previous observations.
\begin{lemma}
\label{lem:poisson-limit}
Let $\mathcal{F}_t = \sigma(I(0), I(1), \dots, I(t))$ again represent the $\sigma$-algebra of observations generated by previous observations of infected individuals.
Then, for any $\theta > 0$,
\begin{equation}
    \label{eq:limit-mgf}
    \E[e^{\theta N(t+1)}|\mathcal{F}_t] \leq e^{(d\beta \gamma^t N(t))(e^\theta - 1)} \,,
\end{equation}
where we identify the right hand side as the moment generating function of a centered Poisson distribution with parameter $d\beta \gamma^t N(t)$.
\end{lemma}
\begin{proof}
As before, we note that given $\mathcal{F}_t$, the distribution of $N(t+1)$ is a binomial random variable with $n - \sum_{s = 0}^t N(s)$ trials and success probability $1 - (1 - \frac{d \beta \gamma^t}{n})^{N(t)}$.
Hence,
\begin{align*}
    \E[e^{\theta N(t+1)}|\mathcal{F}_t] &= \bigg( \left(1 - \frac{d \beta \gamma^t}{n}\right)^{N(t)} + \Big(1 -  \left(1 - \frac{d \beta \gamma^t}{n}\right)^{N(t)}\Big)e^\theta \bigg)^{n - \sum_{s = 0}^t N(s)} \,.
\end{align*}
This holds for any arbitrary choice of $n$, and any particular history $\mathcal{F}_t$.
Next, we see
\begin{align*}
    \E[e^{\theta N(t+1)}|\mathcal{F}_t] &= \bigg( \left(1 - \frac{d \beta \gamma^t}{n}\right)^{N(t)} + \Big(1 -  \left(1 - \frac{d \beta \gamma^t}{n}\right)^{N(t)}\Big)e^\theta \bigg)^{n}  \\
    &\quad \times \bigg( \left(1 - \frac{d \beta \gamma^t}{n}\right)^{N(t)} + \Big(1 -  \left(1 - \frac{d \beta \gamma^t}{n}\right)^{N(t)}\Big)e^\theta \bigg)^{-\sum_{s = 0}^t N(s)} \\
    &\leq  \bigg( \left(1 - \frac{d \beta \gamma^t}{n}\right)^{N(t)} + \Big(1 -  \left(1 - \frac{d \beta \gamma^t}{n}\right)^{N(t)}\Big)e^\theta \bigg)^{n}\,.
\end{align*}
To show the final inequality above, we first note that $\theta > 0$ implies 
\begin{align*}
&\left(1 - \frac{d \beta \gamma^t}{n}\right)^{N(t)} + \Big(1 -  \left(1 - \frac{d \beta \gamma^t}{n}\right)^{N(t)}\Big)e^\theta \\
&\qquad= 1 - \left( 1 - \left(1 - \frac{d \beta \gamma^t}{n}\right)^{N(t)} \right) + \Big(1 -  \left(1 - \frac{d \beta \gamma^t}{n}\right)^{N(t)}\Big)e^\theta \\
&\qquad = 1+ \Big(1 -  \left(1 - \frac{d \beta \gamma^t}{n}\right)^{N(t)}\Big)(e^\theta - 1) \\
&\qquad \geq 1\,,
\end{align*}
and hence the inequality follows because $-\sum_{s = 0}^t N(s) < 0$.
Next, we see that algebraically manipulating the upper bound yields
\begin{align*}
    \E[e^{\theta N(t+1)}|\mathcal{F}_t] &\leq \bigg(1 + \Big(1 -  \left(1 - \frac{d \beta \gamma^t}{n}\right)^{N(t)}\Big)(e^\theta - 1) \bigg)^{n} \\
    &\leq \bigg(1 + \Big(1  - 1 + \frac{d \beta \gamma^t}{n} N(t) \Big)(e^\theta - 1) \bigg)^{n} \quad \text{(Bernoulli's inequality)} \\
    &= \bigg(1 + \Big(\frac{d \beta \gamma^t}{n} N(t) \Big)(e^\theta - 1) \bigg)^{n} \\
    &\leq e^{(d\beta \gamma^t N(t))(e^\theta - 1)} \,,
\end{align*}
proving the Lemma as desired.
\end{proof}
Hence, using the tail bounds of the Poisson distribution, the following Corollary can be shown:
\begin{corollary} \label{corr:ind-concentration}
For any $x > 0$,
\[  \P \left( N(t+1) > (d \beta \gamma^t + x) N(t) ~|~ \mathcal{F}_t \right) \leq e^{-\min \{\frac{x^2}{4d \beta \gamma^t}, \frac{x}{4} \}} \,,\]
where we recall $\mathcal{F}_t = \sigma(I(0), \dots I(t))$ is the $\sigma$-algebra generated by observations of infections up to time $t$ and $N(t) = |I(t)|$ is the number of infected individuals at time $t$.
\end{corollary}
\begin{proof}
From e.g. \cite{zhang2020non} on the results for tail bounds of the Poisson distribution, we immediately see that because Lemma \ref{lem:poisson-limit} shows that the conditional moment generating function of $N(t+1)$ is dominated by that of a Possion random variable, then for $k \geq 1$,
\[\P \left( |N(t+1) - \E[N(t+1)|\mathcal{F}_t, N(t) = k| > x ~|~ \mathcal{F}_t, N(t) = k\right) \leq e^{-\frac{x^2}{2d \beta \gamma^t k} h\left(\frac{x}{2d \beta \gamma^t k}\right)} \,,\]
where, $h(u) = 2\frac{(1+u) \ln(1+u) - u}{u^2}$.
Notably, for $u \geq 0$, $h(u) \geq \frac{1}{1+u}$ as can be verified by considering the function $(1+u)h(u)$ (see \cite{cannone2017poisson} for details).

Hence,
\begin{align*}
    \P \left( \bigg\vert N(t+1) - \E[N(t+1)|\mathcal{F}_t, N(t) = k]\bigg\vert > x ~|~ \mathcal{F}_t, N(t)= k \right) &\leq e^{-\frac{x^2}{2(d \beta \gamma^t k + x)}} \\
    &\leq e^{-\min \{\frac{x^2}{4d \beta \gamma^t k}, \frac{x}{4} \}} \,,
\end{align*}
which holds because $a + b \leq 2\max\{a, b\}$ for $a, b \geq 0$.

Further, replacing $x$ with $xk$ and rewriting the event of interest, we see
\begin{align}
    \P \left( \bigg\vert N(t+1) - \E\left[N(t+1)|\mathcal{F}_t, N(t) = k\right] \bigg\vert > xk ~|~ \mathcal{F}_t, N(t)= k \right) &\leq e^{-\min \{\frac{kx^2}{4d \beta \gamma^t}, \frac{kx}{4} \}} \nonumber\\
    &\leq e^{-\min \{\frac{x^2}{4d \beta \gamma^t}, \frac{x}{4} \}} \label{eq:single-prop} \,,
\end{align}
since $k \geq 1$.

From the proof of Theorem \thmgaussianconnection, and in particular equations \eqref{eq:upper-cond-bound} and \eqref{eq:lower-cond-bound} which establish upper and lower bounds on the conditional expectation of $N(t+1)$ given $N(t)$, we see that for $k \geq 1$
\[ \E\left[N(t+1)|\mathcal{F}_t, N(t) = k\right] = k d \beta \gamma^t - c_{t, k}(n) \,, \]
where $c_{t, k} (n) = O\left( \frac{1}{n}\right)$ is a non-negative function. 

Moreover, in the case where $k = 0$, we see that if $N(t) = 0$, then $N(t+1) = 0$ with probability 1, as there are no infections at time $t$ to spread at time $t+1$.
Hence, 
\[\P \left( N(t+1) > (d \beta \gamma^t + x) N(t) ~|~ \mathcal{F}_t, N(t) = 0 \right) = 0 \,. \]
Therefore,
\begin{align*}
    &\P \left( N(t+1) > (d \beta \gamma^t + x) N(t) ~|~ \mathcal{F}_t \right) \\
    &\quad= \sum_{k = 0}^\infty \P \left( N(t+1) > (d \beta \gamma^t + x) N(t) ~|~ \mathcal{F}_t, N(t) = k \right) \P(N(t) = k) \\
    &\quad= 0 + \sum_{k = 1}^\infty \P \left( N(t+1) > (d \beta \gamma^t + x) N(t) ~|~ \mathcal{F}_t, N(t) = k \right) \P(N(t) = k) \\
    &\quad \leq \sum_{k = 1}^\infty \P \left( N(t+1) > (d \beta \gamma^t + x) N(t) - c_{t, k}(n) ~|~ \mathcal{F}_t, N(t) = k \right) \P(N(t) = k) \quad (c_{t,k} (n) \geq 0)\\
    &\quad\leq \sum_{k = 1}^\infty \P \left( \bigg\vert N(t+1) - [d \beta \gamma^t N(t) - c_{t, k}(n)] \bigg\vert > xk ~|~ \mathcal{F}_t, N(t) = k \right) \P(N(t) = k) \\
    &\quad= \sum_{k = 1}^\infty \P \left( \bigg\vert N(t+1) - \E\left[N(t+1)|\mathcal{F}_t, N(t) = k\right] \bigg\vert > xk ~|~ \mathcal{F}_t, N(t)= k \right) \P(N(t) = k) \\
    &\quad\leq \sum_{k = 1}^\infty e^{-\min \{\frac{x^2}{4d \beta \gamma^t}, \frac{x}{4} \}} \P(N(t) = k)\\ 
    &\quad\leq e^{-\min \{\frac{x^2}{4d \beta \gamma^t}, \frac{x}{4} \}} \,,
\end{align*}
where the final step follows from the normalization of probability measure such that $\sum_{k = 0}^\infty \P(N(t) = k) = 1$.
\end{proof}

\noindent \textbf{Proof of Lemma \lemgaussianconcentration. } \quad 
We first note the result of Corollary \ref{corr:ind-concentration} when $x = \epsilon d \beta \gamma^t$ yields
\begin{align*}
\P \big( N(t+1) > (d \beta \gamma^t + \epsilon d\beta \gamma^t) &N(t) ~|~ \mathcal{F}_t \big)\\
&\quad= \P \left( N(t+1) > (d \beta \gamma^t) (1+\epsilon) N(t) ~|~ \mathcal{F}_t \right)\\
&\quad\leq e^{-\min \{\frac{(\epsilon d \beta \gamma^t)^2}{4d \beta \gamma^t}, \frac{\epsilon d \beta \gamma^t}{4} \}} \\
&\quad= e^{-\min \{ \epsilon^2, \epsilon\} d \beta \gamma^t/4} \,.
\end{align*}

Next, we see:
\begin{align*}
    &\P\left( \bigcap_{s = 0}^{t-1} \left\lbrace N(s+1) \leq (d \beta \gamma^t) (1+\epsilon) N(s)  \right\rbrace \right) \\
    &\qquad = \P\left( \bigcap_{s = 0}^{t-1} \left\lbrace N(s+1) \leq (d \beta \gamma^t) (1+\epsilon) N(s)  \right\rbrace \right) \\
    &\qquad = \prod_{s = 0}^{t-1}\P\left( N(s+1) \leq (d \beta \gamma^t) (1+\epsilon) N(s)  ~\big\vert~ \bigcap_{\tau = 0}^{s-1} N(s+1) \leq (d \beta \gamma^t) (1+\epsilon) N(s)  \right)\\
    &\qquad = \prod_{s = 0}^{t-1}\left(1 -  \P\left( N(s+1) > (d \beta \gamma^t) (1+\epsilon) N(s) +  ~\big\vert~ \bigcap_{\tau = 0}^{s-1} N(s+1) \leq (d \beta \gamma^t) (1+\epsilon) N(s)  \right) \right)\\
    &\qquad \geq 1 - \sum_{s = 0}^{t-1} \P\left( N(s+1) > (d \beta \gamma^t) (1+\epsilon) N(s) ~\big\vert~ \bigcap_{\tau = 0}^{s-1} N(s+1) \leq (d \beta \gamma^t) (1+\epsilon) N(s) \right)\\
    &\qquad \geq 1 - \sum_{s = 0}^{t-1} e^{-\min \{ \epsilon^2, \epsilon\} d \beta \gamma^s/4} \,,
\end{align*}

Finally, to prove the claim, we see that when for all $s \leq t-1$, $N(s+1) \leq (d \beta \gamma^s) (1+\epsilon) N(s)$, we also have $N(t) \leq (1+\epsilon)^t (d\beta)^t \gamma^{t(t-1)/2}$. 
This follows similarly to the proof of \eqref{eq:lin-upper} in Lemma \ref{lem:gaussian-upper}.
Hence,
\begin{align*}
    &\P\left( \bigcap_{s = 0}^{t-1} \left\lbrace N(s+1) \leq (d \beta \gamma^t) (1+\epsilon) N(s) \right\rbrace \right) \\
    &\qquad \leq \P\left(N(t) \leq (1+\epsilon)^t (d\beta)^t \gamma^{t(t-1)/2} \right) \,.
\end{align*} 
Since $(1+\epsilon)^t (d\beta)^t \gamma^{t(t-1)/2}  = e^{\left(\frac{1}{2} \log \gamma\right) t^2 + \left( \log \frac{d\beta}{\sqrt{\gamma}} (1+\epsilon) \right) t}$, this implies
\begin{align*}
    \P\left(N(t) \leq e^{\left(\frac{1}{2} \log \gamma\right) t^2 + \left( \log \frac{d\beta}{\sqrt{\gamma}} (1+\epsilon) \right) t}  \right) \geq 1 - \sum_{s = 0}^{t-1} e^{-\min \{ \epsilon^2, \epsilon\} d \beta \gamma^s/4} \,,
\end{align*}
proving the claim of the theorem.

\subsection{Proof of Theorem \thmtimetilnext}
\label{aa:spread-time-proof}
First, we introduce some notation to denote the spread of infection in the first community and characterize the event of interest.
We let $N_1(t)$ represent the number of infections in community 1 at time $t$, and note that due to the initial condition, $N_1(0) = 1$.
Let $A_t$ be the event that there are no cases in community $2$ at time $t$.
Hence, the event of interest $T > t$, which refers to the event that there are no infections in community 2 prior to time $t$, is equivalent to $\cap_{s = 0}^t A_s$.

The proof has three parts: First, we show a general lower bound on $\P(T > t)$ which illustrates that there are two key sufficient conditions to providing the lower bound on the probability: upper bounding the number of cases in Community 1, and ensuring $d_{out}$ is low enough to ensure a case in Community 2 is unlikely.
The second part of the proof is to show that Assumption \eqsepconditionone ensures the upper bound on number of cases in community 1, and the third part of the proof shows that Assumption \eqsepconditiontwo ensures $d_{out}$ is sufficiently small.\

\noindent \textbf{General lower bound for $\P(T > t)$.} \quad 
We begin by noting the following observation, which holds for any $t$:
\begin{align*}
    \P(T > t) &= \P\left( \bigcap_{s = 0}^t A_s \right)\\
    &\geq \P\left( \bigcap_{s = 0}^t A_s \cap \Big(  N_1(s+1) \leq (2 d_{in} \beta \gamma^s ) N_1(s) \Big) \right) \,.
\end{align*}
The first step here follows from noting the definition of $A_s$ as the event that no individuals in community 2 are infected by those in community 1 at time $s$, and hence $T > t$ occurs if and only if no individual from community 2 has been infected by an individual from community 1 for all times $s = 0$ until $s = t$.
The inequality then follows from the monotonicity of probability measure.
Next, we see that the definition of conditional probability yields
\begin{align*}
    &\P(T > t) \\
    &\quad \geq \P\left( \bigcap_{s = 0}^t A_s \cap \left(  N_1(s+1) \leq (2 d_{in} \beta \gamma^s ) N_1(s) \right) \right)\\
    &\quad = \prod_{s = 0}^t \P\left( A_s \cap \Big(  N_1(s+1) \leq (2 d_{in} \beta \gamma^s ) N_1(s) \Big) ~\bigg\vert~  \bigcap_{s' = 0}^{s - 1} A_{s'} \cap \Big(  N_1(s'+1) \leq (2 d_{in} \beta \gamma^{s'} ) N_1(s') \Big)\right) \\
    &\quad = \prod_{s = 0}^t \P\left(N_1(s+1) \leq (2 d_{in} \beta \gamma^s ) N_1(s) ~\bigg\vert~  A_s \cap \bigcap_{s' = 0}^{s - 1} A_{s'} \cap \Big(  N_1(s'+1) \leq (2 d_{in} \beta \gamma^{s'} ) N_1(s') \Big)\right) \\
    &\qquad \qquad \times  \P\left( A_s  ~\bigg\vert~ \bigcap_{s' = 0}^{s-1} A_{s'} \cap \Big(  N_1(s'+1) \leq (2 d_{in} \beta \gamma^{s'} ) N_1(s') \Big)\right) \,.
\end{align*}
Further algebraic manipulation reveals
\begin{align*}
    &\P(T > t) \\
    &\geq \prod_{s = 0}^t \P\left(N_1(s+1) \leq (2 d_{in} \beta \gamma^s ) N_1(s) ~\bigg\vert~  A_s \cap \bigcap_{s' = 0}^{s - 1} A_{s'} \cap \Big(  N_1(s'+1) \leq (2 d_{in} \beta \gamma^{s'} ) N_1(s') \Big)\right) \\
    &\qquad \qquad \times  \prod_{s = 0}^t \P\left( A_s  ~\bigg\vert~ \bigcap_{s' = 0}^{s-1} A_{s'} \cap \Big(  N_1(s'+1) \leq (2 d_{in} \beta \gamma^{s'} ) N_1(s') \Big)\right) \\
    &= \prod_{s = 0}^t \left( 1 - \P\left(N_1(s+1) > (2 d_{in} \beta \gamma^s ) N_1(s) ~\bigg\vert~  A_s \cap \bigcap_{s' = 0}^{s - 1} A_{s'} \cap \Big(  N_1(s'+1) \leq (2 d_{in} \beta \gamma^{s'} ) N_1(s') \Big)\right) \right)\\
    &\qquad \qquad \times  \prod_{s = 0}^t \P\left( A_s  ~\bigg\vert~ \bigcap_{s' = 0}^{s-1} A_{s'} \cap \Big(  N_1(s'+1) \leq (2 d_{in} \beta \gamma^{s'} ) N_1(s') \Big)\right) \\
    &\geq \left(1 - \sum_{s = 0}^t  \P\left(N_1(s+1) > (2 d_{in} \beta \gamma^s ) N_1(s) ~\bigg\vert~  A_s \cap \bigcap_{s' = 0}^{s - 1} A_{s'} \cap \Big(  N_1(s'+1) \leq (2 d_{in} \beta \gamma^{s'} ) N_1(s') \Big)\right) \right)\\
    &\qquad \qquad \times  \prod_{s = 0}^t \P\left( A_s  ~\bigg\vert~ \bigcap_{s' = 0}^{s-1} A_{s'} \cap \Big(  N_1(s'+1) \leq (2 d_{in} \beta \gamma^{s'} ) N_1(s') \Big)\right) \,,
\end{align*}
The final inequality follows from the following fact: for values $x_1, \dots, x_t \in [0, 1)$, $\prod_{s = 1}^t (1 - x_s) = \geq 1 - \sum_{s = 1}^t x_s$.
This fact is a generalization of Bernoulli's inequality and can be proven by induction.
Hence, for any $t$
\begin{align}
    \P(T > t) &\geq \left(1 - \sum_{s = 0}^t  \P\left(N_1(s+1) > (2 d_{in} \beta \gamma^s ) N_1(s) ~\bigg\vert~  A_s \cap \bigcap_{s' = 0}^{s - 1} A_{s'} \cap \Big(  N_1(s'+1) \leq (2 d_{in} \beta \gamma^{s'} ) N_1(s') \Big)\right) \right) \nonumber \\
    &\qquad \times  \prod_{s = 0}^t \P\left( A_s  ~\bigg\vert~ \bigcap_{s' = 0}^{s-1} A_{s'} \cap \Big(  N_1(s'+1) \leq (2 d_{in} \beta \gamma^{s'} ) N_1(s') \Big)\right) \label{eq:wait-lower-bound-general}
\end{align}
\null\\
\noindent\textbf{Bounding Cases in Community 1.} \quad Similar to Corollary \ref{corr:ind-concentration}, and defining $\mathcal{F}_t^1 = \sigma(I_1(0), \dots, I_1(t))$ as the observations of infections within the first community up to time $t$, we can show that
\begin{align*}
    \P \left( N_1(t+1) > (d_{in} \beta \gamma^t + x) N_1(t) ~\big\vert~ \mathcal{F}_t^1, \bigcap_{s = 0}^t A_t \right) 
    &\leq e^{-\min \{\frac{x^2}{4d \beta \gamma^t}, \frac{x}{4} \}} \,.
\end{align*}
The only difference between this claim and that of Corollary \ref{corr:ind-concentration} is that we condition on knowing there are no cases in the second community up to time $t$, in which case the problem reduces to that of the single community model.
Hence, the proof of this claim follows from that of Corollary \ref{corr:ind-concentration}.

We can then set $x = d_{in} \beta \gamma^t$, and find
\begin{align*}
     \P \left( N_1(t+1) > (2 d_{in} \beta \gamma^t) N_1(t) ~\big\vert~ \mathcal{F}_t^1, \bigcap_{s = 0}^t A_t \right) 
    &\leq e^{-d_{in} \beta \gamma^t / 4}
\end{align*}
This claim holds for any choice of $t$, and hence it holds for all $t \leq C_1 - \log 20 / \log \frac{1}{\gamma} = - \left( \log \left( \frac{d_{in}\beta}{20\sqrt{\gamma}}\right) \right) / \log \gamma$.
Hence,
\begin{align*}
    \sum_{t = 0}^{\lfloor - \left( \log \left( \frac{d_{in}\beta}{20\sqrt{\gamma}}\right) \right) / \log \gamma \rfloor} &\P\left( N_1(t+1) > (2d_{in} \beta \gamma^t ) N_1(t)~|~ \mathcal{F}_t^1, \bigcap_{s = 0}^t A_t \right) \\
    &\leq \sum_{t = 0}^{\lfloor - \left( \log \left( \frac{d_{in}\beta}{20\sqrt{\gamma}}\right) \right) / \log \gamma \rfloor} e^{-d_{in} \beta \gamma^t / 4} \\
    &\leq \sum_{t = 0}^{\lfloor - \left( \log \left( \frac{d_{in}\beta}{20\sqrt{\gamma}}\right) \right) / \log \gamma \rfloor}e^{-d_{in} \beta \gamma^{- \left( \log \left( \frac{d_{in}\beta}{20\sqrt{\gamma}}\right) \right) / \log \gamma} / 4} \quad \left(t \leq \lfloor - \left( \log \left( \frac{d_{in}\beta}{20\sqrt{\gamma}}\right) \right) / \log \gamma \rfloor \right)\\
    &= \lfloor - \left( \log \left( \frac{d_{in}\beta}{20\sqrt{\gamma}}\right) \right) / \log \gamma \rfloor \times e^{-d_{in} \beta e^{-\log \left( \frac{d_{in}\beta}{20\sqrt{\gamma}}\right)} / 4} \\
    &= \lfloor - \left( \log \left( \frac{d_{in}\beta}{20\sqrt{\gamma}}\right) \right) / \log \gamma \rfloor \times e^{-d_{in} \beta  \frac{20\sqrt{\gamma}}{d_{in}\beta} / 4} \\
    &= \lfloor - \left( \log \left( \frac{d_{in}\beta}{20\sqrt{\gamma}}\right) \right) / \log \gamma \rfloor \times e^{-5\sqrt{\gamma}} \,.
\end{align*}
We also see, from assumption ~\eqsepconditionone made in the Theorem statement, that
\begin{align*}
    C_1(d_{in},\beta, \gamma)  &< \delta e^{5\sqrt{\gamma}} + \log 20 / \log \frac{1}{\gamma} \qquad \text{(Assumption \eqsepconditionone)} \\
    \implies & - \log \left(\frac{d_{in} \beta}{\sqrt{\gamma}} \right) / \log \gamma < \delta e^{5\sqrt{\gamma}} - \log 20 / \log \gamma \\
    \implies & - \log \left(\frac{d_{in} \beta}{20 \sqrt{\gamma}} \right) / \log \gamma < \delta e^{5\sqrt{\gamma}}  \\
    \implies & \left(- \log \left(\frac{d_{in} \beta}{20 \sqrt{\gamma}} \right) / \log \gamma\right) \times e^{-5\sqrt{\gamma}} < \delta \\
    \implies & \left\lfloor - \left( \log \left( \frac{d_{in}\beta}{20\sqrt{\gamma}}\right) \right) / \log \gamma \right\rfloor \times e^{-5\sqrt{\gamma}} \leq \delta \,.
\end{align*}
Therefore,
\begin{equation}
    \label{eq:high-case-delta}
    \sum_{t = 0}^{\lfloor - \left( \log \left( \frac{d_{in}\beta}{20\sqrt{\gamma}}\right) \right) / \log \gamma \rfloor} \P\left( N_1(t+1) > (2d_{in} \beta \gamma^t ) N_1(t) ~|~  \mathcal{F}_t^1, \bigcap_{s = 0}^t A_t \right) \leq \delta \,.
\end{equation}
That is, we see that with high probability, the number of cases in community 1 is small enough as to not create too many chances for infection to spread to community 2.

Returning to the bound provided by \eqref{eq:wait-lower-bound-general}, we now see that, for $t = \lfloor - \left( \log \left( \frac{d_{in}\beta}{20\sqrt{\gamma}}\right) \right) / \log \gamma \rfloor$ the term 
\[ \sum_{s = 0}^t  \P\left(N_1(s+1) > (2 d_{in} \beta \gamma^s ) N_1(s) ~\bigg\vert~  A_s \cap \bigcap_{s' = 0}^{s - 1} A_{s'} \cap \Big(  N_1(s'+1) \leq (2 d_{in} \beta \gamma^{s'} ) N_1(s') \Big)\right) \]
can be bounded using \eqref{eq:high-case-delta}, and hence what remains is to bound 
\[ \prod_{s = 0}^t \P\left( A_s  ~\bigg\vert~ \bigcap_{s' = 0}^{s-1} A_{s'} \cap \Big(  N_1(s'+1) \leq (2 d_{in} \beta \gamma^{s'} ) N_1(s') \Big)\right) \,.\]
We provide this bound by imposing reasonable requirements on $d_{out}$ and $n$. \\

\noindent\textbf{Requirements on $d_{out}$ and $n$.} \quad 
In order to bound the product of probabilities above, we first note that
\begin{align}
    \P\left(A_t ~|\mathcal{F}_t^1, \bigcap_{s = 1}^{t-1} A_{s} \right)
    &=\left(1 - \frac{d_{out}\beta}{n} \right)^{N_1(t-1) n} \,. \label{eq:time-t-no-infect-obs}
\end{align}
That is, for there to be no cases at time $t$ given that there are no cases prior to time $t$ and given the information about previous observations in community 1, it must be the case that each of the $N_1(t-1)$ infected individuals in community 1 is unable to infect anyone in community 2.
Since there are $N_1(t-1)$ infected individuals in community 1 at time $t-1$, there are $n$ susceptible individuals in community 2, the probability of an infection not occurring between a pair of indivuduals is $1 - \frac{d_{out}\beta}{n}$, and all infection events are independent, we then see that the observation \eqref{eq:time-t-no-infect-obs} must hold.

Therefore, we must also have,
\begin{align*}
    \prod_{s = 0}^t \P\left( A_s  ~\bigg\vert~ \bigcap_{s' = 0}^{s-1} A_{s'} , \mathcal{F}_t^1 \right) &= \prod_{s = 1}^t \left(1 - \frac{d_{out}\beta}{n} \right)^{N_1(s-1) n} && (\P(A_0) = 1) \\
    &= e^{\sum_{s = 1}^t N_1(s-1) n \log \left(1 - \frac{d_{out}\beta}{n} \right)} \\
    &\geq e^{\sum_{s = 1}^t N_1(s-1) n  \frac{ - \frac{d_{out}\beta}{n}}{\left(1 - \frac{d_{out}\beta}{n} \right)}} &&\left(\frac{-x}{1-x} \leq \log (1 - x) \right) \\
    &\geq e^{\sum_{s = 1}^t 2 N_1(s-1) n  (- \frac{d_{out}\beta}{n})} &&(\text{Assumption \eqsepconditionthree},~n \geq 2 \beta d_{out}) \\
    &= e^{-2 d_{out}\beta \sum_{s = 1}^t N_1(s-1) } \\
    &= e^{-2 d_{out}\beta \sum_{s = 0}^{t-1} N_1(s) }
\end{align*}

We now note that, for $t^* = \lfloor - \left( \log \left( \frac{d_{in}\beta}{20\sqrt{\gamma}}\right) \right) / \log \gamma \rfloor$, we have $e^{-2 d_{out}\beta \sum_{s = 0}^{t^*-1} N_1(s) } \geq 1 - \delta$. 
This holds because
\begin{equation} 
    e^{-2 d_{out}  \beta\sum_{s = 0}^{t^*-1} (2d\beta)^s \gamma^{s(s+1)/2}} \geq 1 - \delta \,,
\end{equation}
is equivalent to
\[ d_{out}  \leq \frac{1}{2 \beta \sum_{s = 0}^{\lfloor - \left( \log \left( \frac{d_{in}\beta}{20\sqrt{\gamma}}\right) \right) / \log \gamma \rfloor - 1} (2d_{in}\beta)^s \gamma^{s(s+1)/2}}\log \left(\frac{1}{1 - \delta}\right) \,,\]
which is the exact condition of \eqsepconditionthree. 

Hence, we see
\begin{align}
    \prod_{s = 0}^t \P\left( A_s  ~\bigg\vert~ \bigcap_{s' = 0}^{s-1} A_{s'} \cap \Big(  N_1(s'+1) \leq (2 d_{in} \beta \gamma^{s'} ) N_1(s') \Big)\right) \geq 1 - \delta \,. \label{eq:d-out-small}
\end{align}
\null \\
\noindent\textbf{Proof of Theorem \thmtimetilnext.} \quad To complete the proof, we note:
\begin{align*}
    &\P\left(T > \lfloor - \left( \log \left( \frac{d_{in}\beta}{20\sqrt{\gamma}}\right) \right) / \log \gamma \rfloor \right) \\
    &\quad \geq \Bigg(1 - \sum_{s = 0}^{\lfloor - \left( \log \left( \frac{d_{in}\beta}{20\sqrt{\gamma}}\right) \right) / \log \gamma \rfloor} \P\bigg(N_1(s+1) > (2 d_{in} \beta \gamma^s ) N_1(s) ~\bigg\vert~  \\
    &\hspace{16em} A_s \cap \bigcap_{s' = 0}^{s - 1} A_{s'} \cap \Big(  N_1(s'+1) \leq (2 d_{in} \beta \gamma^{s'} ) N_1(s') \Big)\bigg) \Bigg)  \\
    &\qquad \times  \prod_{s = 0}^{\lfloor - \left( \log \left( \frac{d_{in}\beta}{20\sqrt{\gamma}}\right) \right) / \log \gamma \rfloor} \P\left( A_s  ~\bigg\vert~ \bigcap_{s' = 0}^{s-1} A_{s'} \cap \Big(  N_1(s'+1) \leq (2 d_{in} \beta \gamma^{s'} ) N_1(s') \Big)\right) \qquad \eqref{eq:wait-lower-bound-general} \\
    &\quad\geq (1 - \delta) \times \prod_{s = 0}^{\lfloor - \left( \log \left( \frac{d_{in}\beta}{20\sqrt{\gamma}}\right) \right) / \log \gamma \rfloor} \P\left( A_s  ~\bigg\vert~ \bigcap_{s' = 0}^{s-1} A_{s'} \cap \Big(  N_1(s'+1) \leq (2 d_{in} \beta \gamma^{s'} ) N_1(s') \Big)\right) \qquad \eqref{eq:high-case-delta} \\
    &\quad \geq (1-\delta) (1-\delta) \qquad \eqref{eq:d-out-small} \\
    &\quad = 1 - 2 \delta + \delta^2 \geq 1 - 2 \delta \,,
\end{align*}
proving the claim of Theorem \thmtimetilnext.

\end{appendix}

\bibliographystyle{imsart-nameyear} 
\bibliography{main.bib}       

\end{document}


\begin{frontmatter}
\title{Supplementary Material for \\ Unifying Epidemic Models with Mixtures}
\runtitle{Unifying Epidemic Models with Mixtures}

\begin{aug}
\author[A]{\fnms{Arnab} \snm{Sarker}\ead[label=e1, mark]{arnabs@mit.edu}},
\author[A]{\fnms{Ali} \snm{Jadbabaie}\ead[label=e2, mark]{jadbabai@mit.edu}}
\and
\author[A]{\fnms{Devavrat} \snm{Shah}\ead[label=e3, mark]{devavrat@mit.edu}}
\address[A]{Institute for Data, Systems, and Society, MIT \printead{e1}, \printead{e2}, \printead{e3}}
\end{aug}



\end{frontmatter}









\begin{appendix}

\section{Proof of Theorem \algtheorem}
\label{a:alg-proof}
We first provide a proof of Theorem \algtheorem as stated, and then discuss the extension to the case where $a_1 \neq a_2$.

The proof of the theorem has three parts. 
First, we show the following Lemma, which shows that Assumptions 1-3 of the Theorem result in a low overlap condition such that when one component of the mixture comprises a majority of cases, the other component must have a small size.

\begin{lemma} \label{lem:sufficient-assumption-1}
Fix an $\epsilon > 0$, and suppose that the parameters of the underlying model satisfy the following three conditions.
\begin{enumerate}
    \item $M_k \leq M\,, \quad k = 1, 2$
    \item $a_k \geq a\,, \quad k = 1, 2$
    \item $|C_1 - C_2| \geq 2 \sqrt{\frac{1}{a} \log \frac{M}{\epsilon}} \,.$
\end{enumerate}
Then, the parameters satisfy the following conditions.
\begin{enumerate}
    \item[{1. [Dominance of Component 1]}] If $t$ satisfies 
\[ M_1 e^{-a_1(t-C_1)^2} \geq M_2 e^{-a_2(t-C_2)^2} \,,\]
then 
\[ M_2 e^{-a_2(t-C_2)^2} \leq \epsilon \,. \]
\item[{2. [Dominance of Component 2]}] Otherwise, if $t$ satisfies 
\[ M_1 e^{-a_1(t-C_1)^2} \leq M_2 e^{-a_2(t-C_2)^2} \,,\]
then 
\[ M_1 e^{-a_1(t-C_1)^2} \leq \epsilon \,. \]
\end{enumerate}
\end{lemma}
\begin{proof}
We prove that the three conditions of the lemma imply the Dominance of Component 1, and note that the proof for the Dominance of Component 2 follows symmetrically. 
We proceed by contradiction. 
Suppose that the three conditions hold, and that $t$ satisfies
\begin{equation} \label{eq:t-cond}
M_1 e^{-a_1(t-C_1)^2} \geq M_2 e^{-a_2(t-C_2)^2} \,,
\end{equation}
but
\begin{equation} \label{eq:lemma-assumption-1-contradiction}
M_2 e^{-a_2(t-C_2)^2} > \epsilon \,. 
\end{equation}
Since $M_2 e^{-a_2(t-C_2)^2} > \epsilon$, $t$ must satisfy
\begin{equation} \label{eq:t-c2}
    |t-C_2| < \sqrt{\frac{1}{a} \log \frac{M}{\epsilon}} \,,
\end{equation}
as conditions 1 and 2 of the lemma would imply $M e^{-a(t-C_2)^2} > \epsilon$, and \eqref{eq:t-c2} follows from rearranging this inequality.
Moreover, since we assume both \eqref{eq:t-cond} and \eqref{eq:lemma-assumption-1-contradiction}, we must have $M_1 e^{-a_1(t-C_1)^2} > \epsilon$, which similarly implies that $t$ satisfies
\begin{equation} \label{eq:t-c1}
    |t-C_1| < \sqrt{\frac{1}{a} \log \frac{M}{\epsilon}} \,.
\end{equation}
However, equations \eqref{eq:t-c2} and \eqref{eq:t-c1} imply that for any $t$ which satisfies \eqref{eq:t-cond},
\[ |C_1 - C_2| \leq |t - C_1| + |t - C_2| < 2 \sqrt{\frac{1}{a} \log \frac{M}{\epsilon}} \,,\]
by the triangle inequality. This is a contradiction to condition 3 of the Lemma.
Hence, it must be the case that whenever $t$ satisfies \eqref{eq:t-cond}, 
\[ M_2 e^{-a_2(t-C_2)^2} \leq \epsilon \,,\]
proving the claim.
\end{proof}

The next claim shows that, because the noise is sufficiently small, each $\hat{C}_k$ occurs in a location in which component $k$ is dominant, for $k = 1, 2$. 

\begin{lemma} \label{lem:midpoint}
Suppose $a_1 = a_2$ and the conditions of Theorem \algtheorem hold, i.e. the following properties are satisfied for some $0 < \epsilon  < \min\{M_1, M_2\} / 5$, and $a, M > 0$.
\begin{enumerate}
        \item $M_k \leq M\,, \quad k = 1, 2$
    \item $a_k \geq a\,, \quad k = 1, 2$
    \item $|C_1 - C_2| \geq 2 \sqrt{\frac{1}{a} \log \frac{M}{\epsilon}} \,.$
    \item $\delta \leq \delta^*(a_1, C_1, C_2, M_1, M_2)$ as defined as in \eqref{eq:delta-star}.
    \item $C_1, C_2 \in [0, T]$
\end{enumerate}
Let
\begin{align*}
    t_m &= \argmax_{1 \leq t \leq T} S(t) \,, \\
    \hat{C}_1 &= \argmax_{0 \leq t \leq t_m} N(t) \,,\\
    \hat{C}_2 &= \argmax_{t_m \leq t \leq T} N(t) \,,
\end{align*}
which represent the estimates of $\hat{C}_k$ in Line 2 of Algorithm \alginitialization. 
These values will satisfy the following three properties:
\begin{align}
    &C_1 \leq t_m \leq C_2 \label{eq:tm_midpoint}\\
    &M_1 e^{-a_1(\hat{C}_1 - C_1)^2} \geq M_2 e^{-a_2(\hat{C}_1 - C_2)^2} \label{eq:c1-hat-ok} \\
    &M_1 e^{-a_1(\hat{C}_2 - C_1)^2} \leq M_2 e^{-a_2(\hat{C}_2 - C_2)^2} \label{eq:c2-hat-ok}
\end{align}
\end{lemma}
\begin{proof}
To show the proof of this Lemma, we introduce the following object which measures the ratio of cases between the two communities:
\begin{equation} \label{eq:rt}
    r(t) = \frac{M_2 e^{-a_2(t-C_2)^2}}{M_1 e^{-a_1(t-C_1)^2}} \,.
\end{equation}
Since we assume $a_1 = a_2$, this simplifies to
\begin{equation}
    r(t) = \frac{M_2}{M_1} e^{2a_1 (C_2 - C_1) t + a_1 (C_1^2 - C_2^2)} \,.
\end{equation}
Since we assume without loss of generality that $C_1 < C_2$, we can observe that $r(t)$ is increasing in $t$, and starts at $r(0) < 1$ and ends at $r(T) > 1$.
This holds, as $r(C_1) \leq \frac{\epsilon}{M_1 - \epsilon} < \frac{1}{4}$, and similarly, $r(C_2) \geq \frac{M_2 - \epsilon}{\epsilon} > 4$.
Noting $C_1, C_2 \in [0, T]$ allows us to conclude $r(0) < 1$ and $r(T) > 1$.
Since $r(t)$ is continuous, by the intermediate value theorem there must be some point $t^*$ for which $r(t^*) = 1$.
The primary work of this lemma is to show that if the observation noise is small, then $t_m$ as defined above is close to $t^*$ in the sense that the case ratio of $t_m$ is close to 1.

We first note the following relationship between $S(t)$ and $r(t)$:
\begin{equation} \label{eq:st-rt}
    S(t) = - 2 a_1 + \log \left( 1 + \frac{r(t)}{(1 + r(t))^2} \left(e^{2a_1(C_1 - C_2)} + e^{-2a_1(C_1 - C_2)}  - 2 \right) \right) + \log \frac{(1 + \eta_{t+1})(1+\eta_{t-1})}{(1 + \eta_t)^2} \,.
\end{equation}
The proof of Equation \eqref{eq:st-rt} is as follows: First, by noting 
\begin{align*}
    \log N(t) &= \log \left( M_1 e^{-a_1(t-C_1)^2} + M_2 e^{-a_2(t-C_2)^2} \right) \\
    &= \log M_1 e^{-a_1(t-C_1)^2}  + \log \left(1 + r(t) \right) \,,
\end{align*}
we can show algebraically that
\begin{align}
    S(t) &= \log \frac{N(t+1)}{N(t)} - \log \frac{N(t)}{N(t-1)} \nonumber \\
    &= \log M_1 e^{-a_1(t + 1 -C_1)^2}  + \log \left(1 + r(t + 1) \right) \nonumber\\
    &\quad - 2 \left(\log M_1 e^{-a_1(t-C_1)^2}  + \log \left(1 + r(t) \right) \right) \nonumber\\
    &\quad + \log M_1 e^{-a_1(t - 1 -C_1)^2}  + \log \left(1 + r(t - 1) \right) \nonumber\\
    &\quad + \log \frac{(1 + \eta_{t+1})(1+\eta_{t-1})}{(1 + \eta_t)^2}\nonumber\\
    &= \log M_1 e^{-a_1(t -C_1)^2 - 2a_1(t-C_1) - a_1} - 2 \log M_1 e^{-a_1(t-C_1)^2} + \log M_1 e^{-a_1(t -C_1)^2 + 2a_1 (t-C_1) - a_1}   \nonumber\\
    &\quad+ \log \left(1 + r(t + 1) \right) - 2 \log \left(1 + r(t) \right) + \log \left(1 + r(t-1) \right) + \log \frac{(1 + \eta_{t+1})(1+\eta_{t-1})}{(1 + \eta_t)^2} \nonumber\\
    &=  - 2 a_1 +  \log \left(1 + r(t + 1) \right) - 2 \log \left(1 + r(t) \right) + \log \left(1 + r(t-1) \right) + \log \frac{(1 + \eta_{t+1})(1+\eta_{t-1})}{(1 + \eta_t)^2} \nonumber\\
    &=  - 2 a_1 +  \log \left(1 + r(t) \frac{e^{-2a_2(t-C_2) - a_2}}{e^{-2a_1(t-C_1) - a_1}} \right) - 2 \log \left(1 + r(t) \right) + \log \left(1 + r(t) \frac{e^{2a_2(t-C_2) - a_2}}{e^{2a_1(t-C_1) - a_1}}\right)\nonumber\\
    &\quad + \log \frac{(1 + \eta_{t+1})(1+\eta_{t-1})}{(1 + \eta_t)^2} \nonumber\\
    &=  - 2 a_1 +  \log \left(1 + r(t) \frac{e^{-2a_2(t-C_2) - a_2}}{e^{-2a_1(t-C_1) - a_1}} + r(t) \frac{e^{2a_2(t-C_2) - a_2}}{e^{2a_1(t-C_1) - a_1}} + r(t)^2 \frac{e^{-2a_2}}{e^{-2a_1}}\right) - 2 \log \left(1 + r(t) \right)\nonumber\\
    &\quad + \log \frac{(1 + \eta_{t+1})(1+\eta_{t-1})}{(1 + \eta_t)^2} \nonumber\\
    &= - 2 a_1 + \log \left( \left( \frac{1 + (r(t)e^{a_1 - a_2})^2}{(1 + r(t))^2}\right) + \frac{r(t)}{(1 + r(t))^2} \left(\frac{e^{-2a_1(t-C_1)}}{e^{-2a_2(t-C_2)}} + \frac{e^{-2a_2(t-C_2)}}{e^{-2a_1(t-C_1)}} \right) e^{a_1 - a_2}\right) \nonumber\\
    &\quad + \log \frac{(1 + \eta_{t+1})(1+\eta_{t-1})}{(1 + \eta_t)^2} \label{eq:st-rt-general}\,.
\end{align}
In the case where $a_1 = a_2$, this simplifies further yielding the result \eqref{eq:st-rt}.

Since at $t = t^*$, we have $r(t^*) = 1$, and because the noise $\eta_t \in [-\delta, \delta]$ for all $t$, we then can place an lower bound on $S(t^*)$ as follows.
\begin{align*}
    S(t^*) &= - 2 a_1 + \log \left( 1 + \frac{r(t^*)}{(1 + r(t^*))^2} \left(e^{2a_1(C_1 - C_2)} + e^{-2a_1(C_1 - C_2)}  - 2 \right) \right) + \log \frac{(1 + \eta_{t^*+1})(1+\eta_{t^*-1})}{(1 + \eta_t^*)^2} && \eqref{eq:st-rt} \\
    &= - 2 a_1 + \log \left( 1 + \frac{1}{(1 + 1)^2} \left(e^{2a_1(C_1 - C_2)} + e^{-2a_1(C_1 - C_2)}  - 2 \right) \right) + \log \frac{(1 + \eta_{t^*+1})(1+\eta_{t^*-1})}{(1 + \eta_t^*)^2} \\
    & \geq - 2 a_1 + \log \left( 1 + \frac{1}{4} \left(e^{2a_1(C_1 - C_2)} + e^{-2a_1(C_1 - C_2)}  - 2 \right) \right) - 2 \log \frac{1 + \delta}{1 - \delta}
\end{align*}

We next wish to show that for any $t$ such that either $r(t) \leq \frac{\epsilon}{\min \{ M_1, M_2 \} \left(\frac{1-\delta}{1+\delta}\right) - \epsilon}$ or $r(t) \geq \frac{\min \{ M_1, M_2 \}\left(\frac{1-\delta}{1+\delta}\right) - \epsilon}{\epsilon}$, there is a non-trivial lower bound on $S(t)$.
To do so, we note the following fact: For any value $\tau < 1$, if either  $r(t) \leq \tau $ either or $r(t) \geq \frac{1}{\tau}$, then $\frac{r(t)}{(1+r(t))^2} \leq \tau$.

This then allows us to show that, if either $r(t) \leq \frac{\epsilon}{\min \{ M_1, M_2 \} \left(\frac{1-\delta}{1+\delta}\right) - \epsilon}$ or $r(t) \geq \frac{\min \{ M_1, M_2 \}\left(\frac{1-\delta}{1+\delta}\right) - \epsilon}{\epsilon}$, then
\begin{align*}
    S(t) &= - 2 a_1 + \log \left( 1 + \frac{r(t)}{(1 + r(t))^2} \left(e^{2a_1(C_1 - C_2)} + e^{-2a_1(C_1 - C_2)}  - 2 \right) \right) + \log \frac{(1 + \eta_{t+1})(1+\eta_{t-1})}{(1 + \eta_t)^2} && \eqref{eq:st-rt}\\
    &\leq - 2 a_1 + \log \left( 1 + \frac{\epsilon}{\min \{ M_1, M_2 \} \left(\frac{1-\delta}{1+\delta}\right) - \epsilon} \left(e^{2a_1(C_1 - C_2)} + e^{-2a_1(C_1 - C_2)}  - 2 \right) \right) + \log \frac{(1 + \eta_{t+1})(1+\eta_{t-1})}{(1 + \eta_t)^2} \\
    &\leq - 2 a_1 + \log \left( 1 + \frac{\epsilon}{\min \{ M_1, M_2 \} \left(\frac{1-\delta}{1+\delta}\right) - \epsilon} \left(e^{2a_1(C_1 - C_2)} + e^{-2a_1(C_1 - C_2)}  - 2 \right) \right) + 2 \log \frac{1 + \delta}{1 - \delta} \,,
\end{align*}
where the first inequality is due to the bound on $\frac{r(t)}{(1+r(t))^2}$ mentioned above and the second is due to the boundedness of noise.

Importantly, if we show that $S(t^*) \geq S(t)$ for all $t$ such that $r(t) \leq \frac{\epsilon}{\min \{ M_1, M_2 \} \left(\frac{1-\delta}{1+\delta}\right) - \epsilon}$ or $r(t) \geq \frac{\min \{ M_1, M_2 \}\left(\frac{1-\delta}{1+\delta}\right) - \epsilon}{\epsilon}$ almost surely, then we must have
\begin{equation} \label{eq:tm-works}
    \frac{\epsilon}{\min \{ M_1, M_2 \}\left(\frac{1-\delta}{1+\delta} \right) - \epsilon} \leq r(t_m) \leq \frac{\min \{ M_1, M_2 \}\left(\frac{1-\delta}{1+\delta} \right) - \epsilon }{\epsilon} \,,
\end{equation}
as $S(t_m) \geq S(t^*)$ due to the definition of $t_m$ as an argmax.

Due to the condition on $\delta$, it is in fact the case that $S(t^*) \geq S(t)$ for $t$ which satisfy the aforementioned conditions.
Namely, when $\delta \leq \delta^*(a_1, C_1, C_2, M_1, M_2)$ for
\begin{align}
&\delta^*(a_1, C_1, C_2, M_1, M_2) = \nonumber \\
&\quad \min \left\lbrace \left(\left( \frac{1 + \frac{1}{4}  \left(e^{2a_1(C_1 - C_2)} + e^{-2a_1(C_1 - C_2)}  - 2 \right)}{1 + \frac{\epsilon}{\min\{M_1, M_2\} - \epsilon } \left(e^{2a_1(C_1 - C_2)} + e^{-2a_1(C_1 - C_2)}  - 2 \right)}\right)^{1/5} - 1\right) / 4, \frac{1}{4}\right\rbrace \label{eq:delta-star}\,,
\end{align}
we then see that because for any $\delta \leq \frac{1}{2}$, $\frac{1+\delta}{1 - \delta} \leq 1 + 4 \delta$.
Hence, $\delta$ also satisfies
\begin{align*}
\frac{1+\delta}{1-\delta} \leq 1 + 4\delta \leq \left( \frac{1 + \frac{1}{4}  \left(e^{2a_1(C_1 - C_2)} + e^{-2a_1(C_1 - C_2)}  - 2 \right)}{1 + \frac{\epsilon}{\min\{M_1, M_2\} - \epsilon } \left(e^{2a_1(C_1 - C_2)} + e^{-2a_1(C_1 - C_2)}  - 2 \right)}\right)^{1/5} \,.
\end{align*}
Given the assumption that $\epsilon < \min \{M_1, M_2\}$, we also note
\begin{equation} \label{eq:fix-m-bar}
    \frac{\min\{M_1, M_2 \} - \epsilon}{\min\{M_1, M_2 \}\left(\frac{1-\delta}{1+\delta}\right) - \epsilon} \leq 1 + 4 \delta \,.
\end{equation}
This results, as
\begin{align*}
    \min\{M_1, M_2 \} \geq 5\epsilon
    \implies &\min\{M_1, M_2 \} \geq \epsilon \frac{2 [1+\delta]}{(1 - 2\delta)} \qquad (\delta \leq \frac{1}{4})\\
    \implies &\min\{M_1, M_2 \} \geq \epsilon \frac{[2 \delta][1+\delta]}{\delta (1 - 2\delta)}\\
    \implies &\min\{M_1, M_2 \}\left(\frac{\delta - 2 \delta^2}{1+\delta}\right) \geq \epsilon [2 \delta]\\
    \implies &\min\{M_1, M_2 \}\left(\frac{- 2 \delta + 4 \delta^2)}{1+\delta}\right) \leq - \epsilon [4 \delta]\\
    \implies &\min\{M_1, M_2 \}\left(\frac{1 + \delta - (1 + 3 \delta - 4 \delta^2)}{1+\delta}\right) \leq - \epsilon [4 \delta]\\
    \implies &\min\{M_1, M_2 \}\left(1 - \frac{1+3 \delta - 4 \delta^2}{1+\delta}\right) \leq - \epsilon [4 \delta]\\
    \implies &\min\{M_1, M_2 \}\left(1 - \frac{1-\delta}{1+\delta}[1 + 4 \delta]\right) \leq - \epsilon [4 \delta]\\
    \implies &\min\{M_1, M_2 \} - \left[\min\{M_1, M_2 \}\left(\frac{1-\delta}{1+\delta}[1 + 4 \delta]\right) \right] \leq - \epsilon [4 \delta]\\
    \implies &\min\{M_1, M_2 \} - \left[\min\{M_1, M_2 \}\left(\frac{1-\delta}{1+\delta}[1 + 4 \delta]\right) \right] - \epsilon \leq - \epsilon [1 + 4 \delta]\\
    \implies &\min\{M_1, M_2 \} - \epsilon \leq \left[\min\{M_1, M_2 \}\left(\frac{1-\delta}{1+\delta}\right) - \epsilon \right][1 + 4 \delta] \\
    \implies &\frac{\min\{M_1, M_2 \} - \epsilon}{\min\{M_1, M_2 \}\left(\frac{1-\delta}{1+\delta}\right) - \epsilon} \leq 1 + 4 \delta \,.
\end{align*}
This then implies
\begin{align*}
    &4 \log \frac{1+\delta}{1-\delta} + \log  \frac{\min\{M_1, M_2 \} - \epsilon}{\min\{M_1, M_2 \}\left(\frac{1-\delta}{1+\delta}\right) - \epsilon} \\
    &\qquad\leq \log \left( 1 + \frac{1}{4}  \left(e^{2a_1(C_1 - C_2)} + e^{-2a_1(C_1 - C_2)}  - 2 \right)\right) \\
    &\qquad\quad - \log \left(1 + \frac{\epsilon}{\min\{M_1, M_2\} - \epsilon } \left(e^{2a_1(C_1 - C_2)} + e^{-2a_1(C_1 - C_2)}  - 2 \right) \right) \,,
\end{align*}
which implies
\begin{align*}
    4 \log \frac{1+\delta}{1-\delta} &\leq \log \left( 1 + \frac{1}{4}  \left(e^{2a_1(C_1 - C_2)} + e^{-2a_1(C_1 - C_2)}  - 2 \right)\right) \\
    &\quad - \log \left(\frac{\min\{M_1, M_2 \} - \epsilon}{\min\{M_1, M_2 \}\left(\frac{1-\delta}{1+\delta}\right) - \epsilon}  + \frac{\epsilon \left( \frac{\min\{M_1, M_2 \} - \epsilon}{\min\{M_1, M_2 \}\left(\frac{1-\delta}{1+\delta}\right) - \epsilon} \right)}{\min\{M_1, M_2\} - \epsilon } \left(e^{2a_1(C_1 - C_2)} + e^{-2a_1(C_1 - C_2)}  - 2 \right) \right) \\
    &= \log \left( 1 + \frac{1}{4}  \left(e^{2a_1(C_1 - C_2)} + e^{-2a_1(C_1 - C_2)}  - 2 \right)\right) \\
    &\quad - \log \left(1 + \frac{\epsilon}{\min\{M_1, M_2\}\frac{1-\delta}{1+\delta}  - \epsilon } \left(e^{2a_1(C_1 - C_2)} + e^{-2a_1(C_1 - C_2)}  - 2 \right) \right)
\end{align*}
From here, we can conclude that the condition on $\delta^*(a_1, C_1, C_2, M_1, M_2)$ implies that the lower bound on $S(t^*)$ is at least as large as the upper bound on $S(t)$ for $t$ such that either $r(t) \leq \frac{\epsilon}{\min \{ M_1, M_2 \} \left(\frac{1-\delta}{1+\delta}\right) - \epsilon}$ or $r(t) \geq \frac{\min \{ M_1, M_2 \}\left(\frac{1-\delta}{1+\delta}\right) - \epsilon}{\epsilon}$, as the final inequality implies
\begin{align*}
    &- 2 a_1 + \log \left( 1 + \frac{1}{4} \left(e^{2a_1(C_1 - C_2)} + e^{-2a_1(C_1 - C_2)}  - 2 \right) \right) - 2 \log \frac{1 + \delta}{1 - \delta} \geq \\
    &\quad - 2 a_1 + \log \left( 1 + \frac{\epsilon}{\min \{ M_1, M_2 \} \left(\frac{1-\delta}{1+\delta}\right) - \epsilon} \left(e^{2a_1(C_1 - C_2)} + e^{-2a_1(C_1 - C_2)}  - 2 \right) \right) + 2 \log \frac{1 + \delta}{1 - \delta} \,.
\end{align*}

Therefore, as mentioned above, because $S(t_m) \geq S(t^*)$ due to the definition of $t_m$ as an argmax, we can conclude that equation \eqref{eq:tm-works} holds.

\noindent \textbf{Justification of \eqref{eq:tm_midpoint}.} Equation \eqref{eq:tm-works} can then be used to show that \eqref{eq:tm_midpoint} holds.
Specifically, recall that in the case where $a_1 = a_2$, the case ratio $r(t)$ can be written as in \eqref{eq:rt}, which is an increasing function in $t$. Moreover, we have the following set of inequalities:
\begin{align*}
    r(C_1) &\leq \frac{\epsilon}{M_1 - \epsilon} \\
    &\leq\frac{\epsilon}{\min \{ M_1, M_2 \} \left(\frac{1-\delta}{1+\delta}\right) - \epsilon}\\
    &\leq r(t_m) \\
    &\leq \frac{\min \{ M_1, M_2 \} \left(\frac{1-\delta}{1+\delta}\right) - \epsilon}{\epsilon} \\
    &\leq \frac{M_2 - \epsilon}{\epsilon} \\
    &\leq r(C_2) \,,
\end{align*}
which by the increasing nature of $r(t)$ on the interval then implies directly that 
\[ C_1 \leq t_m \leq C_2 \,,\]
as desired.

\noindent \textbf{Justification of \eqref{eq:c1-hat-ok} and \eqref{eq:c2-hat-ok}.} Equation \eqref{eq:tm-works} can also be used to show that both \eqref{eq:c1-hat-ok} and \eqref{eq:c2-hat-ok} hold.
We proceed by showing \eqref{eq:c1-hat-ok} holds, and note that the proof for \eqref{eq:c2-hat-ok} follows symmetrically.

We first note that \eqref{eq:c1-hat-ok} is equivalent to the condition $r(\hat{C}_1) \leq 1$, and that we must have $\hat{C}_1 \leq t_m$ by the definition of $\hat{C}_1$.
Next, suppose towards contradiction that $r(\hat{C}_1) > 1$.
Because $r(t)$ is an increasing function of $t$, and from \eqref{eq:tm-works}, it would have to be the case that
\[ 1 < r(\hat{C}_1) \leq r(t_m) \leq \frac{\min \{ M_1, M_2 \} \left(\frac{1-\delta}{1+\delta}\right) - \epsilon}{\epsilon} \,.\]
From Lemma \ref{lem:sufficient-assumption-1}, we know that because $r(\hat{C}_1) > 1$, at the time index $\hat{C_1}$ the number of cases from the first outbreak is at most $\epsilon$, i.e. $M_1 e^{-a_1(\hat{C_1} - C_1)^2} \leq \epsilon$. 
Further, because 
\[ \frac{M_2 e^{-a_2(\hat{C}_1-C_2)^2}}{M_1 e^{-a_1(\hat{C}_1-C_1)^2}} = r(\hat{C}_1) \leq \frac{\min \{ M_1, M_2 \} \left(\frac{1-\delta}{1+\delta}\right) - \epsilon}{\epsilon}\,,\]
we must also have $M_2 e^{-a_1(\hat{C_1} - C_2)^2} < \min \{ M_1, M_2 \} \left(\frac{1-\delta}{1+\delta}\right) - \epsilon$, or equivalently,
\begin{align*}
N(\hat{C}_1) &= \left(\sum_{k = 1}^2 M_k e^{-a_k (\hat{C_1} - C_k)^2}\right)[1 + \eta_{\hat{C}_1}] \\
&< \left(M_2 e^{-a_1(\hat{C_1} - C_2)^2} + \epsilon \right) [1 + \eta_{\hat{C}_1}]\\
&\leq  \min \{ M_1, M_2 \} \left(\frac{1-\delta}{1+\delta}\right)[1 + \eta_{\hat{C}_1}]  \\
&\leq  \min \{ M_1, M_2 \} [1-\delta] &&(|\eta_{\hat{C}_1}| \leq \delta) \,.
\end{align*} 
By the definition of $\hat{C_1}$ as an argmax, this then implies that for all $t < t_m$,
\[ N(t)  \leq \min \{ M_1, M_2 \}[1-\delta] \,.\]
However, at the point $C_1$, which is the time at which cases in the first outbreak are maximized, we must have $N(C_1) > M_1 [1-\delta]$, a contradiction. Hence, it must be the case that $ r(\hat{C}_1) \leq 1$, which is equivalent to \eqref{eq:c1-hat-ok}, as desired.
\end{proof}

Finally, we put the above two lemmas together for the final proof of the claim.

\noindent \textbf{Proof of Theorem \algtheorem. }
We prove for the case where $k =1$, as the $k = 2$ case holds symmetrically. 
To show \eqmk, we first show $\hat{M}_k \geq M_k [1 - \delta]$.
We have
\begin{align*}
    \hat{M}_1 &= \max_{0 \leq t \leq t_m} N(t) \\
    &\geq \max_{0 \leq t \leq t_m} 
    \left(\sum_{k = 1}^2 M_k e^{-a_k(t-C_k)^2}\right) [1 - \delta] &&(\eta_t \in [-\delta, \delta]) \\
    &\geq \max_{0 \leq t \leq t_m} M_1 e^{-a_1(t-C_1)^2}[1 - \delta] &&(M_2 e^{-a_2(t-C_2)^2} \geq 0)\\
    &\geq M_1 e^{-a_1(\lceil C_1 \rceil -C_1)^2}[1 - \delta] &&(\text{From Lemma \ref{lem:midpoint}}, t_m \geq C_1) \,.
\end{align*}
The upper bound holds similarly, as
\begin{align*}
    \hat{M}_1 &= \max_{0 \leq t \leq t_m} N(t) \\
    &\leq \max_{0 \leq t \leq t_m} 
    \left(\sum_{k = 1}^2 M_k e^{-a_k(t-C_k)^2}\right) [1 + \delta] &&(\eta_t \in [-\delta, \delta]) \\
    &\leq \max_{0 \leq t \leq t_m} (M_1 e^{-a_1(t-C_1)^2} + \epsilon) [1 + \delta] &&(*)\\
    &\leq (M_1 + \epsilon)[1+\delta] &&(e^{-x} \leq 1, ~\forall x \geq 0) \,.
\end{align*}
Where the inequality in $(*)$ holds because Lemma \ref{lem:midpoint} ensures that at $\hat{C}_1$, $M_1 e^{-a_1(\hat{C}_1-C_1)^2} \geq M_2 e^{-a_2(\hat{C}_1-C_2)^2}$ and Lemma \ref{lem:sufficient-assumption-1} then ensures that $M_2 e^{-a_2(\hat{C}_1-C_2)^2} < \epsilon$.
We next show the proof of \eqck, and again show the case where $k = 1$.
To begin, we note that by the definition of $\hat{M}_1$
\begin{align}
    \hat{M}_1 = \left(\sum_{k = 1}^2 M_k e^{-a_k(\hat{C}_1-C_k)^2}\right)[1 + \eta_{\hat{C}_1}] \,.
\end{align}
We further note the following lower and upper bound on $\hat{M}_1$,
\begin{align}
    \hat{M}_1 = \left(\sum_{k = 1}^2 M_k e^{-a_k(\hat{C}_1-C_k)^2}\right)[1 + \eta_{\hat{C}_1}] &\geq M_1 e^{-a_1(\lceil C_1 \rceil -C_1)^2}[1 -\delta] \label{eq:ck-proof-lower}\\
    \hat{M}_1 = \left(\sum_{k = 1}^2 M_k e^{-a_k(\hat{C}_1-C_k)^2}\right)[1 + \eta_{\hat{C}_1}] &\leq (M_1 e^{-a_1(\hat{C}_1-C_1)^2} + \epsilon)[1 + \delta] \label{eq:ck-proof-upper} \,,
\end{align}
where both inequalities follow directly from the computations above used to prove \eqmk.
Hence, from the two inequalities above,
\begin{align*}
    M_1 e^{-a_1(\lceil C_1 \rceil -C_1)^2}[1 -\delta] \leq (M_1 e^{-a_1(\hat{C}_1-C_1)^2} + \epsilon) \,.
\end{align*}
Solving for $|\hat{C}_1-C_1|$, we get
\begin{align*}
    M_1 e^{-a_1(\hat{C}_1-C_1)^2} &\geq  M_1 e^{-a_1(\lceil C_1 \rceil -C_1)^2} \left( \frac{1-\delta}{1+\delta}\right) - \epsilon \implies \\
    -a_1(\hat{C}_1-C_1)^2 &\geq \log \left( e^{-a_1(\lceil C_1 \rceil -C_1)^2} \left( \frac{1-\delta}{1+\delta}\right) - \frac{\epsilon}{M_1} \right) \implies\\
    |\hat{C}_1-C_1| &\leq \sqrt{\frac{1}{a_1} \log \left(\frac{1}{ e^{-a_1(\lceil C_1 \rceil -C_1)^2} \left( \frac{1-\delta}{1+\delta}\right) - \frac{\epsilon}{M_1}} \right)} \,.
\end{align*}
Rewriting the final equation yields
\begin{align*}
    |\hat{C}_1-C_1| &\leq \sqrt{\frac{1}{a_1} \log \left(\frac{M_1}{M_1 e^{-a_1(\lceil C_1 \rceil -C_1)^2} \left( \frac{1-\delta}{1+\delta}\right) - \epsilon} \right)} \\
    &=\sqrt{\frac{1}{a_1} \log \left(1 + \frac{M_1 - M_1 e^{-a_1(\lceil C_1 \rceil -C_1)^2} \left( \frac{1-\delta}{1+\delta}\right) + \epsilon}{M_1 e^{-a_1(\lceil C_1 \rceil -C_1)^2} \left( \frac{1-\delta}{1+\delta}\right) - \epsilon} \right)} \,.
\end{align*}
To get an approximation, we can apply a Taylor expansion as long as $\frac{M_1 - M_1 e^{-a_1/4} \left( \frac{1-\delta}{1+\delta}\right) + \epsilon}{M_1 e^{-a_1(\lceil C_1 \rceil -C_1)^2} \left( \frac{1-\delta}{1+\delta}\right) - \epsilon} \approx 0$, which holds when $\delta$ is close to 0 and $M_1 e^{-a_1(\lceil C_1 \rceil -C_1)^2} \left( \frac{1-\delta}{1+\delta}\right) \gg \epsilon$.
Roughly, we would then have
\begin{align*}
    |\hat{C}_1-C_1| &\lesssim \sqrt{\frac{1}{a_1} \left( \frac{M_1 - M_1 e^{-a_1(\lceil C_1 \rceil -C_1)^2} \left( \frac{1-\delta}{1+\delta}\right) + \epsilon}{M_1 e^{-a_1(\lceil C_1 \rceil -C_1)^2} \left( \frac{1-\delta}{1+\delta}\right) - \epsilon} \right)} \\
    &\approx \sqrt{\frac{1}{a_1} \left( \frac{M_1 - M_1 e^{-a_1(\lceil C_1 \rceil -C_1)^2} \left( \frac{1-\delta}{1+\delta}\right) }{M_1 e^{-a_1(\lceil C_1 \rceil -C_1)^2} \left( \frac{1-\delta}{1+\delta}\right) - \epsilon } \right) + \frac{\epsilon}{M_1 e^{-a_1(\lceil C_1 \rceil -C_1)^2} \left( \frac{1-\delta}{1+\delta}\right) - \epsilon }} \\
    &\approx \sqrt{\frac{1}{a_1} \left(\frac{1 -  e^{-a_1(\lceil C_1 \rceil -C_1)^2} \left( \frac{1-\delta}{1+\delta}\right) }{e^{-a_1(\lceil C_1 \rceil -C_1)^2} \left( \frac{1-\delta}{1+\delta}\right)} + \frac{\epsilon}{M_1} \right)}\,,
\end{align*}
completing the claim of the theorem. \hfill $\qedsymbol$

\subsection{Relaxation of the Assumption $a_1 = a_2$}
The assumption $a_1 = a_2$ was made for clarity of presentation, but is not required in general.
Namely, we note that Lemma \ref{lem:midpoint} can be re-written for the case $a_1 \neq a_2$ with the following modifications.

\begin{lemma}\label{lem:midpoint-a1-neq-a2}
Suppose the parameters of the underlying mixture satisfy the following properties:
\begin{enumerate}
        \item $M_k \leq M\,, \quad k = 1, 2$
    \item $a_k \geq a\,, \quad k = 1, 2$
    \item $|C_1 - C_2| \geq \max \bigg\lbrace 2 \sqrt{\frac{1}{a} \log \frac{M}{\epsilon}}, \\
    \sqrt{ \frac{1}{a_1 a_2}\left(\frac{1}{4}\log^2 \left(\left(\max \{1, e^{2a_1 - 2a_2}\} - \left(\frac{1 + e^{a_1 - a_2}}{2} \right)^2  \right) \frac{4}{e^{a_1 - a_2}} + 2 \right) + (a_2 - a_1)\log \frac{M_1}{M_2} \right)} \bigg\rbrace \,.$
    \item $\delta$ and $\epsilon$ satisfy \eqref{eq:delta-a1-neq-a2}
    \item $C_1, C_2 \in [0, T]$
    \item $r(t) = \frac{M_2e^{-a_2 (t-C_2)^2}}{M_1e^{-a_1 (t-C_1)^2}}$ is increasing in $t$ on the interval $[0, T]$.
\end{enumerate}
Then, as in Lemma \ref{lem:midpoint}, for
\[ t_m = \argmax_{1 \leq t \leq T} S(t) \,,\]
the quantities
\begin{align*}
    \hat{C}_1 &= \argmax_{0 \leq t \leq t_m} N(t) \\
    \hat{C}_2 &= \argmax_{t_m \leq t \leq T} N(t) \,,
\end{align*}
which represent the estimates of $\hat{C}_k$ in Line 2 of Algorithm \alginitialization, and $t_m$ itself will satisfy \eqref{eq:tm_midpoint}-\eqref{eq:c2-hat-ok}.
\end{lemma}
\begin{proof}
The proof again begins by noting the relationship between $S(t)$ and the case ratio $r(t) = \frac{M_2e^{-a_2(t-C_2)^2}}{M_1e^{-a_1(t-C_1)^2}}$, which we claim can be written as 
\begin{align}
    S(t) &= -2a_1 + \log \bigg( \left(\frac{1 + r(t) e^{a_1 - a_2}}{1 + r(t)} \right)^2 \label{eq:st-rt-a1-neq-a2}\\
    &\quad + \frac{r(t)e^{a_1 - a_2}}{(1+r(t))^2} \left(e^{2\sqrt{a_1 a_2 (C_1 - C_2)^2 - (a_2 - a_1)\log \frac{r(t)M_1}{M_2}}} + e^{-2\sqrt{a_1 a_2 (C_1 - C_2)^2 - (a_2 - a_1)\log \frac{r(t)M_1}{M_2}}} - 2 \right)\bigg) \nonumber\\
    &\quad + \log \frac{(1 + \eta_{t+1})(1+\eta_{t-1})}{(1 + \eta_t)^2} \nonumber\,. 
\end{align}
The proof of this observation is as follows.
First, we note that from the manipulations of Lemma \ref{lem:midpoint}, 
\begin{align*}
    S(t) &= - 2 a_1 + \log \left( \left( \frac{1 + (r(t)e^{a_1 - a_2})^2}{(1 + r(t))^2}\right) + \frac{r(t)}{(1 + r(t))^2} \left(\frac{e^{-2a_1(t-C_1)}}{e^{-2a_2(t-C_2)}} + \frac{e^{-2a_2(t-C_2)}}{e^{-2a_1(t-C_1)}} \right) e^{a_1 - a_2}\right)\\
    &\quad + \log \frac{(1 + \eta_{t+1})(1+\eta_{t-1})}{(1 + \eta_t)^2}\,.
\end{align*}
Next, we note that since $r(t) = \frac{M_2e^{-a_2(t-C_2)^2}}{M_1e^{-a_1(t-C_1)^2}}$,
\[ -a_2(t-C_2)^2 + a_1(t-C_1)^2 = \log \frac{M_1 r(t)}{M_2} \,,\]
which implies
\begin{align*}
    (a_1 - a_2)t^2 + 2(a_2 C_2 - a_1 C_1)t + a_1 C_1^2 - a_2 C_2^2 - \log \frac{M_1 r(t)}{M_2} \,.
\end{align*}
Then, applying the quadratic formula, we see:
\begin{align*}
    t &= \frac{-2(a_2 C_2 - a_1 C_1) \pm \sqrt{4(a_2 C_2 - a_1 C_1)^2 - 4 (a_1 - a_2) \left(a_1 C_1^2 - a_2 C_2^2 - \log \frac{M_1 r(t)}{M_2}\right)}}{2(a_1 - a_2)} \\
    &= \frac{-2(a_2 C_2 - a_1 C_1) \pm 2\sqrt{a_1 a_2 (C_1 - C_2)^2 - (a_2 - a_1) \left(\log \frac{M_1 r(t)}{M_2}\right)}}{2(a_1 - a_2)} \,,
\end{align*}
where the second equality follows from noting
\begin{align*}
    4(a_2 C_2 - a_1 C_1)^2 &- 4 (a_1 - a_2) \left(a_1 C_1^2 - a_2 C_2^2 - \log \frac{M_1 r(t)}{M_2}\right) \\
    &= 4a_2^2 C_2^2 + 4a_1^2 C_1^2 - 8 a_1 a_2 C_1 C_2 - 4 a_1^2 C_1^2 - 4 a_2^2 C_2^2 \\
    &\quad + 4 a_1 a_2 C_1^2 + 4 a_1 a_2 C_2^2 - 4(a_2 - a_1) \log \frac{M_1 r(t)}{M_2} \\
    &= 4 a_1 a_2 (C_1^2 - 2C_1 C_2 + C_2^2) - 4 (a_2 - a_1) \log \frac{M_1 r(t)}{M_2}\\
    &= 4 a_1 a_2 (C_1 - C_2)^2 - 4 (a_2 - a_1) \log \frac{M_1 r(t)}{M_2} \,.
\end{align*}
Given that $t$ can be written in terms of $r(t)$, we can then plug back into the definition of $S(t)$, and find that since the solution to the quadratic formula shows
\[ 2(a_1 - a_2)t + 2(a_2 C_2 - a_1 C_1) = \pm 2\sqrt{a_1 a_2 (C_1 - C_2)^2 - (a_2 - a_1) \left(\log \frac{M_1 r(t)}{M_2}\right)} \,,  \]
it must be the case
\begin{align*}
&\frac{e^{-2a_1(t-C_1)}}{e^{-2a_2(t-C_2)}} + \frac{e^{-2a_2(t-C_2)}}{e^{-2a_1(t-C_1)}} =
&e^{2\sqrt{a_1 a_2 (C_1 - C_2)^2 - (a_2 - a_1)\log \frac{r(t)M_1}{M_2}}} + e^{-2\sqrt{a_1 a_2 (C_1 - C_2)^2 - (a_2 - a_1)\log \frac{r(t)M_1}{M_2}}} \,,
\end{align*}
as both positive and negative terms will appear in the sum regardless of which selection of $t$ is taken.
Plugging this expression into \eqref{eq:st-rt-general} proves the equality in \eqref{eq:st-rt-a1-neq-a2}.

At the index $t^*$ for which $r(t^*) = 1$, which is unique because we assume $r(t)$ is increasing on the observed interval, we see that the following statement holds:
\begin{align*}
    S(t^*) &\geq -2a_1 + \log \bigg( \left(\frac{1 + e^{a_1 - a_2}}{2} \right)^2 \\
    &\quad + \frac{e^{a_1 - a_2}}{4} \left(e^{2\sqrt{a_1 a_2 (C_1 - C_2)^2 - (a_2 - a_1)\log \frac{M_1}{M_2}}} + e^{-2\sqrt{a_1 a_2 (C_1 - C_2)^2 - (a_2 - a_1)\log \frac{M_1}{M_2}}} - 2 \right)\bigg) \\
    &\quad - 2 \log \left(\frac{1+\delta}{1-\delta} \right) \,.
\end{align*}
The inequality here comes from noting $|\eta_t| \leq \delta$.

Now, suppose $t$ is such that either $r(t) \leq \frac{\epsilon}{\min \{ M_1, M_2 \}\left(\frac{1-\delta}{1+\delta}\right) - \epsilon}$ or $r(t) \geq \frac{\min \{ M_1, M_2 \}\left(\frac{1-\delta}{1+\delta}\right) - \epsilon}{\epsilon}$. 
Since we assume $r(t)$ is increasing in $t$ on the observed interval, there are precisely two continuous regions of time intervals which are considered here.
We then see, for $t$ in these regions, since for any $\tau < 1$ we know that $r(t) \leq \tau \implies \frac{r(t)}{(1+r(t))^2} \leq \tau$ and $r(t) \geq \frac{1}{\tau} \implies \frac{r(t)}{(1+r(t))^2} \leq \tau, $
\begin{align*}
    S(t) &= -2a_1 + \log \bigg( \left(\frac{1 + r(t) e^{a_1 - a_2}}{1 + r(t)} \right)^2 \\
    &\quad + \frac{r(t)e^{a_1 - a_2}}{(1+r(t))^2} \left(e^{2\sqrt{a_1 a_2 (C_1 - C_2)^2 - (a_2 - a_1)\log \frac{r(t)M_1}{M_2}}} + e^{-2\sqrt{a_1 a_2 (C_1 - C_2)^2 - (a_2 - a_1)\log \frac{r(t)M_1}{M_2}}} - 2 \right)\bigg)\\
    &\quad + \log \frac{(1 + \eta_{t+1})(1+\eta_{t-1})}{(1 + \eta_t)^2} \\
    &\leq -2a_1 + \log \bigg( \left(\frac{1 + r(t) e^{a_1 - a_2}}{1 + r(t)} \right)^2 \\
    &\quad + \frac{\epsilon}{\min \{ M_1, M_2 \}\left(\frac{1-\delta}{1+\delta}\right) - \epsilon} e^{a_1 - a_2} \bigg(e^{2\sqrt{a_1 a_2 (C_1 - C_2)^2 - (a_2 - a_1)\log \frac{r(t)M_1}{M_2}}} \\
    &\qquad\qquad\qquad\qquad\qquad + e^{-2\sqrt{a_1 a_2 (C_1 - C_2)^2 - (a_2 - a_1)\log \frac{r(t)M_1}{M_2}}} - 2 \bigg)\bigg)\\
    &\quad + \log \frac{(1 + \eta_{t+1})(1+\eta_{t-1})}{(1 + \eta_t)^2} 
\end{align*}
Defining
\begin{align*}
&Q(a_1, a_2, C_1, C_2, M_1, M_2)= \\
&\sup_{t \in [0, T] ~|~ \frac{r(t)}{(1+r(t))^2} \leq \frac{\epsilon}{\min \{ M_1, M_2 \}\left(\frac{1-\delta}{1+\delta}\right) - \epsilon}} e^{2\sqrt{a_1 a_2 (C_1 - C_2)^2 - (a_2 - a_1)\log \frac{r(t)M_1}{M_2}}} + e^{-2\sqrt{a_1 a_2 (C_1 - C_2)^2 - (a_2 - a_1)\log \frac{r(t)M_1}{M_2}}} - 2 \,,
\end{align*} 

it is then clear that for such values of $t$,
\begin{align*}
    S(t) &\leq - 2 a_1 + \log \bigg( \left(\frac{1 + r(t) e^{a_1 - a_2}}{1 + r(t)} \right)^2  + \frac{\epsilon}{\min \{ M_1, M_2 \}\left(\frac{1-\delta}{1+\delta}\right) - \epsilon} e^{a_1 - a_2} Q(a_1, a_2, C_1, C_2, M_1, M_2)\bigg)\\
    &\quad + \log \frac{(1 + \eta_{t+1})(1+\eta_{t-1})}{(1 + \eta_t)^2}  \\
    &\leq- 2 a_1 + \log \bigg( \left(\frac{1 + r(t) e^{a_1 - a_2}}{1 + r(t)} \right)^2  + \frac{\epsilon}{\min \{ M_1, M_2 \}\left(\frac{1-\delta}{1+\delta}\right) - \epsilon} e^{a_1 - a_2} Q(a_1, a_2, C_1, C_2, M_1, M_2)\bigg)\\
    &\quad + 2 \log \frac{1+\delta}{1-\delta} \qquad (|\eta_t| \leq \delta)\\
    &\leq -2 a_1 + \log \bigg( \max \{1, e^{2a_1 - 2a_2}\}  + \frac{\epsilon}{\min \{ M_1, M_2 \}\left(\frac{1-\delta}{1+\delta}\right) - \epsilon} e^{a_1 - a_2} Q(a_1, a_2, C_1, C_2, M_1, M_2)\bigg)\\
    &\quad + 2 \log \frac{1+\delta}{1-\delta} \,. \qquad \left(\left(\frac{1 + r(t) e^{a_1 - a_2}}{1 + r(t)} \right)^2 \leq \max\{1, e^{2a_1 - 2a_2}\} \right) \,.
\end{align*}
Importantly, $Q(a_1, a_2, C_1, C_2, M_1, M_2)$ is bounded, as $t \in [0, T]$ is bounded, and is not a function of $\epsilon$.

Hence, if $\delta$ and $\epsilon$ are small enough as to ensure
\begin{align} 
   &-2a_1 + \log \bigg( \left(\frac{1 + e^{a_1 - a_2}}{2} \right)^2 \label{eq:delta-a1-neq-a2}\\
    &\quad + \frac{e^{a_1 - a_2}}{4} \left(e^{2\sqrt{a_1 a_2 (C_1 - C_2)^2 - (a_2 - a_1)\log \frac{M_1}{M_2}}} + e^{-2\sqrt{a_1 a_2 (C_1 - C_2)^2 - (a_2 - a_1)\log \frac{M_1}{M_2}}} - 2 \right)\bigg) \nonumber\\
    &\quad - 2 \log \left(\frac{1+\delta}{1-\delta} \right) \geq \nonumber \\
    & - 2 a_1 + \log \bigg( \max \{1, e^{2a_1 - 2a_2}\}  + \frac{\epsilon}{\min \{ M_1, M_2 \}\left(\frac{1-\delta}{1+\delta}\right) - \epsilon} e^{a_1 - a_2} Q(a_1, a_2, C_1, C_2, M_2)\bigg)\nonumber \\
    &\quad + 2 \log \frac{1+\delta}{1-\delta} \nonumber
\end{align}
then the conclusion of Lemma \ref{lem:midpoint} will hold and the proof will follow as before.
We note that such a selection of $\delta$ and $\epsilon$ is possible, as when both are equal to 0, the inequality holds strictly so long as:
\begin{align*}
    &\left(\frac{1 + e^{a_1 - a_2}}{2} \right)^2\\
    &\quad + \frac{e^{a_1 - a_2}}{4} \left(e^{2\sqrt{a_1 a_2 (C_1 - C_2)^2 - (a_2 - a_1)\log \frac{M_1}{M_2}}} + e^{-2\sqrt{a_1 a_2 (C_1 - C_2)^2 - (a_2 - a_1)\log \frac{M_1}{M_2}}} - 2 \right) \\
    &> \max \{1, e^{2a_1 - 2a_2}\} \,,
\end{align*}
which is guaranteed by the temporal well-separation condition in Assumption 3 of the Lemma, since
\begin{align*}
    & |C_1 - C_2|  \\
    &> \sqrt{ \frac{1}{a_1 a_2}\left(\frac{1}{4}\log^2 \left(\left(\max \{1, e^{2a_1 - 2a_2}\} - \left(\frac{1 + e^{a_1 - a_2}}{2} \right)^2  \right) \frac{4}{e^{a_1 - a_2}} + 2 \right) + (a_2 - a_1)\log \frac{M_1}{M_2} \right)}\\
    \implies & a_1 a_2 (C_1 - C_2)^2 - (a_2 - a_1)\log \frac{M_1}{M_2}  \\
    &> \frac{1}{4}\log^2 \left(\left(\max \{1, e^{2a_1 - 2a_2}\} - \left(\frac{1 + e^{a_1 - a_2}}{2} \right)^2  \right) \frac{4}{e^{a_1 - a_2}} + 2 \right) \\
    \implies & 2\sqrt{a_1 a_2 (C_1 - C_2)^2 - (a_2 - a_1)\log \frac{M_1}{M_2}} \\
    &> \log \left(\left(\max \{1, e^{2a_1 - 2a_2}\} - \left(\frac{1 + e^{a_1 - a_2}}{2} \right)^2  \right) \frac{4}{e^{a_1 - a_2}} + 2 \right) \\
    \implies & e^{2\sqrt{a_1 a_2 (C_1 - C_2)^2 - (a_2 - a_1)\log \frac{M_1}{M_2}}} \\
    &> \left(\max \{1, e^{2a_1 - 2a_2}\} - \left(\frac{1 + e^{a_1 - a_2}}{2} \right)^2  \right) \frac{4}{e^{a_1 - a_2}} + 2\\
    \implies & e^{2\sqrt{a_1 a_2 (C_1 - C_2)^2 - (a_2 - a_1)\log \frac{M_1}{M_2}}} + e^{-2\sqrt{a_1 a_2 (C_1 - C_2)^2 - (a_2 - a_1)\log \frac{M_1}{M_2}}} - 2  \\
    &> \left(\max \{1, e^{2a_1 - 2a_2}\} - \left(\frac{1 + e^{a_1 - a_2}}{2} \right)^2  \right) \frac{4}{e^{a_1 - a_2}}\\
    \implies &\frac{e^{a_1 - a_2}}{4} \left(e^{2\sqrt{a_1 a_2 (C_1 - C_2)^2 - (a_2 - a_1)\log \frac{M_1}{M_2}}} + e^{-2\sqrt{a_1 a_2 (C_1 - C_2)^2 - (a_2 - a_1)\log \frac{M_1}{M_2}}} - 2 \right) \\
    &> \max \{1, e^{2a_1 - 2a_2}\} - \left(\frac{1 + e^{a_1 - a_2}}{2} \right)^2 \\
    \implies &\left(\frac{1 + e^{a_1 - a_2}}{2} \right)^2 \\
    &\quad + \frac{e^{a_1 - a_2}}{4} \left(e^{2\sqrt{a_1 a_2 (C_1 - C_2)^2 - (a_2 - a_1)\log \frac{M_1}{M_2}}} + e^{-2\sqrt{a_1 a_2 (C_1 - C_2)^2 - (a_2 - a_1)\log \frac{M_1}{M_2}}} - 2 \right) \\
    &> \max \{1, e^{2a_1 - 2a_2}\} \,,
\end{align*}
By continuity of each term on the right-hand side and left hand side of \eqref{eq:delta-a1-neq-a2}, we see that for fixed parameters $a_1, a_2, C_1, C_2, M_1,$ and $M_2$, there exist $\delta, \epsilon > 0$ which satisfy the above inequality as well.
\end{proof}

\section{Proof Details for Section \theorysection}
\label{a:theory}
In this section, we provide the details of the proofs for Section \theorysection, beginning with those related to the single community model of Section \sstheorygaussian and then those related to Section \sstheorysbm.

We begin with a review of Bernoulli's inequality, which is used throughout the proofs:
\begin{proposition}[Bernoulli's inequality]
For any $x > -1$, and every integer $r \geq 0$,
\[ (1 + x)^r \geq 1 + rx \,.\]
\end{proposition}
\begin{proof}
The claim follows from  induction on $r$. 
As a first base case, when $r = 0$, we see
\[ (1+x)^0 = 1 \geq 1 + 0x \,. \]
Next, we assume the induction hypothesis that for an arbitrary integer $k \geq 0$, and any $x > -1$, 
\[ (1 + x)^k \geq 1 + kx \,.\]
We wish to show the claim holds true when $r = k + 2$. Indeed,
\begin{align*}
    (1 + x)^{k+1} &= (1 + x)^k (1+x) \\
    &\geq (1+kx) (1 +x) \qquad \text{(Induction Hypothesis,} ~(1+x) \geq 0)\\
    &= (1+kx)(1 + x) \\
    &= 1 + (k+1)x + kx^2\\
    &\geq 1 + (k+1) x \,,
\end{align*}
where the final step follows from the fact $x^2 \geq 0$ and $k \geq 0$.
Hence, by induction, for any integer $r$ the claim holds.
\end{proof}

\subsection{Proof of Theorem \thmgaussianconnection} \label{aa:gaussian-connection-proof}
Recall the definition of $N(t)$ as the number of infected individuals at time $t$, and that in the model, $N(0) = 1$, as we begin with a single initially infected node.
We will define $\mathcal{F}_t = \sigma(I(0), I(1), \dots, I(t))$ as the $\sigma-$algebra generated by previous observations of infected individuals.

We first show upper bounds on the quantities on the conditional expectations of $N(t)$, $N(t)^2$, and $N(s) \times N(t)$ for any $s, t \geq 0$.
These bounds will then allow us to fully characterize the expectation of $N(t)$ on its own.
These upper bounds are summarized in Lemma \ref{lem:gaussian-upper}.

\begin{lemma} \label{lem:gaussian-upper}
In the model of Section \sstheorygaussian, recall $N(t) = |I(t)|$ is the number of cases at time $t$.
The following upper bounds hold.
\begin{align}
    \E[N(t)] &\leq (d\beta)^t \gamma^{\frac{t(t-1)}{2}} \,, \label{eq:lin-upper} \\
    \E[N(t)^2] &\leq (d\beta)^t \gamma^{\frac{t(t-1)}{2}} + (d\beta)^{2t} \gamma^{t(t-1)} + \sum_{t' = 1}^{t-1} (d\beta)^{t'} \gamma^{\frac{t'(t'-1)}{2}} \prod_{\tau = t'}^{t-1} (d\beta\gamma^\tau)^{2}  \,.\label{eq:quad-upper}
\end{align}
Notably, each upper bound above is constant with respect to $n$, the number of individuals in the graph.
\end{lemma}
\begin{proof}
We begin by noting that, conditioned on previous observations in $\mathcal{F}_t$, $N(t+1)$ is distributed as a binomial random variable with $n- \sum_{s = 0}^t N(s)$ trials and success probability $1 - (1 - \frac{d \beta \gamma^t}{n})^{N(t)}$.
That is, the size of the susceptible population is  $n- \sum_{s = 0}^t N(s)$ as this reflects the entire population that has not yet been infected or recovered, and the success probability reflects that a new infection occurs if at least one of the $N(t)$ infected individuals infects a particular susceptible node.

Therefore, the following bound can be shown for $\E[N(t+1)|\mathcal{F}_t]$.
\begin{align*}
    \E[N(t+1)|\mathcal{F}_t] &= \left( n- \sum_{s = 0}^t N(s)\right) \left(1 - \left(1 - \frac{d \beta \gamma^t}{n}\right)^{N(t)} \right) \\
    &\leq n \left(1 - \left(1 - \frac{d \beta \gamma^t}{n}\right)^{N(t)} \right) \\
    &\leq n\left(1 - 1 + N(t) \frac{d\beta\gamma^t}{n} \right) \qquad \text{(Bernoulli's inequality)} \,.
\end{align*}
Hence, simplifying, we see
\begin{equation}\label{eq:upper-cond-bound}
    \E[N(t+1)|\mathcal{F}_t] \leq N(t) d \beta \gamma^t \,.
\end{equation}
From this, the upper bound on $N(t)$ becomes apparent and follows from induction.
Specifically, since $N(0) = 1$, the bound holds for the case $t = 0$.
Then, if we assume the induction hypothesis $\E[N(t)] \leq (d\beta)^t \gamma^{\frac{t(t-1)}{2}}$ for arbitrary $t \geq 0$, we see
\begin{align*}
    \E[N(t+1)] &= \E[\E[N(t+1)|\mathcal{F}_t]] \\
    &\leq \E[N(t) d\beta\gamma^t] \qquad \eqref{eq:upper-cond-bound} \\
    &\leq (d\beta)^t \gamma^{\frac{t(t-1)}{2}} \times d\beta \gamma^t  \qquad \text{(Induction Hypothesis)}\\
    &= (d\beta)^{t+1} \gamma^{\frac{t(t-1) + 2t}{2}} = (d\beta)^{t+1} \gamma^{\frac{(t+1)t}{2}} \,,
\end{align*}
proving the bound \eqref{eq:lin-upper}.

Similarly, the following bound holds for $\E[N(t+1)^2|\mathcal{F}_t]$.
Recall that for a binomial random variable with $n$ trials and probability $p$ of success, the second moment is computed as $np(1-p) + (np)^2$.
Hence,
\begin{align*}
    \E[N(t+1)^2|\mathcal{F}_t] &= \left( n- \sum_{s = 0}^t N(s)\right) \left(1 - \left(1 - \frac{d \beta \gamma^t}{n}\right)^{N(t)} \right) \left(\left(1 - \frac{d \beta \gamma^t}{n}\right)^{N(t)} \right) +\\
    &\qquad \left(\left( n- \sum_{s = 0}^t N(s)\right) \left(1 - \left(1 - \frac{d \beta \gamma^t}{n}\right)^{N(t)} \right) \right)^2\\
    &\leq n \left(1 - \left(1 - \frac{d \beta \gamma^t}{n}\right)^{N(t)} \right) \left(\left(1 - \frac{d \beta \gamma^t}{n}\right)^{N(t)} \right) +\\
    &\qquad \left(n \left(1 - \left(1 - \frac{d \beta \gamma^t}{n}\right)^{N(t)} \right) \right)^2 \\
    &\leq n \left(1 - \left(1 - \frac{d \beta \gamma^t}{n}\right)^{N(t)} \right)  +  \left(n \left(1 - \left(1 - \frac{d \beta \gamma^t}{n}\right)^{N(t)} \right) \right)^2 \\
    &\leq n\left(1 - 1 + N(t) \frac{d\beta\gamma^t}{n} \right) + \left(n\left(1 - 1 + N(t) \frac{d\beta\gamma^t}{n} \right) \right)^2 \qquad \text{(Bernoulli's inequality)} \\
    &=N(t) d\beta\gamma^t + N(t)^2 (d\beta\gamma^t)^2 \,,
\end{align*}
As before, the final claim for the upper bound follows from induction.
As a base case, we note again that $\E[N(0)^2] = 1$ as the quantity is deterministic.
Next, we assume the induction hypothesis that $\E[N(t)^2] \leq (d\beta)^{2t} \gamma^{t(t-1)} + \sum_{t' = 0}^{t-1} (d\beta)^{t'} \gamma^{\frac{t'(t'+1)}{2}} \prod_{\tau = t'}^{t-1} (d\beta\gamma^\tau)^{2}$ for an arbitrary $t \geq 0$.
We then see
\begin{align*}
    \E[N(t+1)^2] &= \E[\E[N(t+1)^2|\mathcal{F}_t]] \\
    &\leq \E\left[N(t) d\beta\gamma^t + N(t)^2 (d\beta\gamma^t)^2 \right] \\
    &\leq (d\beta)^{t+1} \gamma^{\frac{t(t+1)}{2}} + (d\beta\gamma^t)^2 \E\left[N(t)^2 \right] \\
    &\leq (d\beta)^{t+1} \gamma^{\frac{t(t+1)}{2}} + (d\beta\gamma^t)^2 \left((d\beta)^t \gamma^{\frac{t(t-1)}{2}} + (d\beta)^{2t} \gamma^{t(t-1)} + \sum_{t' = 1}^{t-1} (d\beta)^{t'} \gamma^{\frac{t'(t'+1)}{2}} \prod_{\tau = t'}^{t-1} (d\beta\gamma^\tau)^{2} \right) \\
    &= (d\beta)^{t+1} \gamma^{\frac{t(t+1)}{2}} + (d\beta)^{2(t+1)} \gamma^{t(t+1)} + \sum_{t' = 0}^{t} (d\beta)^{t'} \gamma^{\frac{t'(t'+1)}{2}} \prod_{\tau = t'}^{t} (d\beta\gamma^\tau)^{2} \,,
\end{align*}
as desired, proving \eqref{eq:quad-upper}.
\end{proof}

As a Corollary, we note the following upper bound on the second moment of $N(t)$, which helps to simplify the presentation of the results:
\begin{corollary}\label{corr:quad-upper-o-notation}
\begin{equation}
\E[N(t)^2] \leq O\left( t e^{\log^2 (d \beta / \sqrt{\gamma} )/ \log (1/\gamma)}\right) \,.
\end{equation}
\end{corollary}
\begin{proof}
The proof of this claim follows by first noting that 
\[ (d\beta)^{2t} \gamma^{t(t-1)}  \leq e^{(\log^2 d \beta / \sqrt{\gamma} )/ \log (1/\gamma)} \,.\]
This follows, as
\begin{align*}
    (d\beta)^{2t} \gamma^{t(t-1)} &\leq \max_{t}(d\beta)^{2t} \gamma^{t(t-1)} \\
    &= \max_t e^{- t^2 \log (1/\gamma) + 2t \log (d\beta/\sqrt{\gamma})} \,.
\end{align*}
Solving the maximization problem, we find that the right hand side maximized at $t^* = \frac{ \log (d\beta/\sqrt{\gamma})}{\log 1/\gamma}$, which yields
\begin{align*}
    (d\beta)^{2t} \gamma^{t(t-1)} &\leq e^{\log^2 (d \beta / \sqrt{\gamma})/ \log (1/\gamma)} \,.
\end{align*}
We also note that $(d\beta)^{2t} \gamma^{t(t-1)}$ is the dominant term in the inequality \eqref{eq:quad-upper}, and that as long as $d\beta > 1$, which corresponds to the non-trivial initial condition for spread, then each term in the summation of \eqref{eq:quad-upper} can be bounded by $e^{\log^2 (d \beta / \sqrt{\gamma})/ \log (1/\gamma)}$. 
Hence, from Lemma \ref{lem:gaussian-upper}, we can then write
\begin{align*}
    \E[N(t)^2] &\leq (d\beta)^t \gamma^{\frac{t(t-1)}{2}} + (d\beta)^{2t} \gamma^{t(t-1)} + \sum_{t' = 1}^{t-1} (d\beta)^{t'} \gamma^{\frac{t'(t'-1)}{2}} \prod_{\tau = t'}^{t-1} (d\beta\gamma^\tau)^{2} \\
    &\leq t e^{(\log^2 d \beta / \sqrt{\gamma} )/ \log (1/\gamma)} \,,
\end{align*}
proving the claim.
\end{proof}

As an additional Corollary, the cross-products $\E[N(s) N(t)]$ are also bounded by a constant with respect to $n$:
\begin{corollary} \label{corr:gaussian-cross-upper}
\begin{equation}
    \E[N(s)N(t)] \leq  \max\{s, t\} e^{\log^2 (d \beta / \sqrt{\gamma} )/ \log (1/\gamma)}\label{eq:cross-upper} \,,
\end{equation}
which notably is constant with respect to $n$.
\end{corollary}
\begin{proof}
This follows from the observation that for any two random variables $X$ and $Y$,
\begin{align*}
    \E[XY] \leq \frac{1}{2}\E[X^2] + \frac{1}{2}\E[Y^2]\,,
\end{align*}
which is due to the fact that for any values $x, y \in \mathbb{R}$, because $(x - y)^2 \geq 0$, we must have $x^2 + y^2 \geq 2 xy$, from which the claim follows.
Hence,
\begin{align*}
    \E[N(s) N(t)] &\leq \frac{1}{2}\E[N(t)^2] + \frac{1}{2} \E[N(s)^2] \\
    &\leq \frac{1}{2} \max \{\E[N(t)^2] , \E[N(s)^2] \} + \frac{1}{2} \max \{\E[N(t)^2] , \E[N(s)^2] \} \\
    &=  \max \{\E[N(t)^2] , \E[N(s)^2] \} \,.
\end{align*} 
From Corollary \ref{corr:quad-upper-o-notation}, the claim then follows.
\end{proof}

\noindent \textbf{Proof of Theorem \thmgaussianconnection}. \quad We prove the claim of the proof in two parts, by noting an upper bound and lower bound on $\E[N(t)]$ which implies the claim of the Theorem.

The upper bound on $\E[N(t)]$ is immediate from Lemma \ref{lem:gaussian-upper}, as it shows
\[ \E[N(t)] \leq (d\beta)^t \gamma^{\frac{t(t-1)}{2}} =  e^{\left(\frac{1}{2} \log \gamma\right) t^2 + \left( \log \frac{d\beta}{\sqrt{\gamma}}\right) t  } \,.\]

We next show a lower bound on $\E[N(t)]$, and proceed by induction.
We claim:
\begin{equation}
    \label{eq:nt-lower-ind-hyp}
    \E[N(t)] \geq e^{\left(\frac{1}{2} \log \gamma\right) t^2 + \left( \log \frac{d\beta}{\sqrt{\gamma}}\right) t  } - O\left( \frac{t^2 e^{\log^2 (d \beta / \sqrt{\gamma} )/ \log (1/\gamma)}}{n}\right) \,,
\end{equation}
where we recall the $O$ notation is taken with respect to the community size $n$.
As a base case, we note that $\E[N(0)] = 1$ due to the initial condition that a single individual is infected. This clearly satisfies the induction hypothesis, as $1 = e^0 - 0$, and $0 = O(d\beta/n)$ trivially.

Next, we assume the induction hypothesis \eqref{eq:nt-lower-ind-hyp}. 
We again recall that given $\mathcal{F}_t = \sigma(I(0), \dots, I(t))$ as the $\sigma-$algebra generated by observations of infections up to time $t$, the conditional distribution of $N(t+1)$ is a binomial random variable with $n - \sum_{s = 0}^t N(s)$ trials and probability of success $\left(1 - \left(1 - \frac{d \beta \gamma^t}{n}\right)^{N(t)} \right)$.
Hence,
\begin{align*}
     \E[N(t+1)] &=  \E[\E[N(t+1)|\mathcal{F}_t] ]\\
     &= \E\left[\left( n- \sum_{s = 0}^t N(s)\right) \left(1 - \left(1 - \frac{d \beta \gamma^t}{n}\right)^{N(t)} \right)\right] \\
     &= \E\left[n\left(1 - \left(1 - \frac{d \beta \gamma^t}{n}\right)^{N(t)} \right) - \left(1 - \left(1 - \frac{d \beta \gamma^t}{n}\right)^{N(t)} \right) \sum_{s = 0}^t N(s)\right] \\
     &\geq \E\left[n \left(1 - 1 + N(t)\frac{d\beta\gamma^t}{n} - \frac{N(t)^2 (d\beta\gamma^t)^2}{2 n^2} \right) - \left(1 - 1 +  N(t)\frac{d\beta\gamma^t}{n}\right) \sum_{s = 0}^t N(s)\right] \,,
\end{align*}
where the last step here holds by noting 
\[ 1 - \frac{d \beta \gamma^t}{n} N(t)  \left(1 - \frac{d \beta \gamma^t}{n}\right)^{N(t)} \leq
\left(1 - \frac{d \beta \gamma^t}{n}\right)^{N(t)}
\leq 1 - N(t)\frac{d\beta\gamma^t}{n} + \frac{N(t)^2 (d\beta\gamma^t)^2}{2 n^2} \,,\]
where the first inequality is due to Bernoulli's inequality and the second follows from noting that for any $x, y > 0$, $(1-x)^y \leq e^{-xy} \leq 1 - xy + \frac{ x^2y^2}{2}$ and applying this inequality with $x = \frac{d \beta \gamma^t}{n}$ and $y = N(t)$.
We then continue, and see from the inequalities on $\E[N(t)],$
\begin{align*}
    \E[N(t+1)] &\geq \E\left[n \left( N(t)\frac{d\beta\gamma^t}{n} - \frac{N(t)^2 (d\beta\gamma^t)^2}{2 n^2} \right) - \left(  N(t)\frac{d\beta\gamma^t}{n}\right) \sum_{s = 0}^t N(s)\right] \\
    &= \E\left[ N(t) d\beta\gamma^t - \frac{N(t)^2 (d\beta\gamma^t)^2}{2 n} - \left(\frac{d\beta\gamma^t}{n}\right) \sum_{s = 0}^t N(t) N(s)\right]\\
    &= \E\left[ N(t) d\beta\gamma^t \right] - \left(\frac{1}{n} \E\left[\frac{N(t)^2 (d\beta\gamma^t)^2}{2} + d\beta\gamma^t \sum_{s = 0}^t N(t) N(s)\right]\right) \,.
\end{align*}
Therefore, from Corollaries  \ref{corr:quad-upper-o-notation} and \ref{corr:gaussian-cross-upper}, which show that there is an upper bound on $\E[N(t)^2]$ and $\E[N(s)N(t)]$ which only depends on $t e^{\log^2 (d \beta / \sqrt{\gamma} )/ \log (1/\gamma)}$, and not $n$ itself, we see that the $t$ cross-terms yield
\begin{equation} \label{eq:lower-cond-bound}
    \E[N(t+1)] \geq d\beta\gamma^t\E\left[ N(t) \right] - O\left(\frac{t^2 e^{\log^2 (d \beta / \sqrt{\gamma} )/ \log (1/\gamma)}}{n}\right) \,.
\end{equation}
Finally, applying the induction hypothesis, we find
\begin{align*}
    \E[N(t+1)] &\geq d\beta\gamma^t\E\left[ N(t) \right] - O\left(\frac{t^2 e^{\log^2 (d \beta / \sqrt{\gamma} )/ \log (1/\gamma)}}{n}\right)\\
    &\geq d\beta\gamma^t \times \left( e^{\left(\frac{1}{2} \log \gamma\right) t^2 + \left( \log \frac{d\beta}{\sqrt{\gamma}}\right) t  } - O\left(\frac{(t-1)^2 e^{\log^2 (d \beta / \sqrt{\gamma} )/ \log (1/\gamma)}}{n}\right) \right) \\
    &\qquad - O\left(\frac{t^2 e^{\log^2 (d \beta / \sqrt{\gamma} )/ \log (1/\gamma)}}{n}\right)\\
    &= e^{\left(\frac{1}{2} \log \gamma\right) (t+1)^2 + \left( \log \frac{d\beta}{\sqrt{\gamma}}\right) (t+1)  } - O\left(\frac{t^2 e^{\log^2 (d \beta / \sqrt{\gamma} )/ \log (1/\gamma)}}{n}\right) \,,
\end{align*}
as desired.

Hence, combining the lower bound in \eqref{eq:nt-lower-ind-hyp} with the upper bound provided in Lemma \ref{lem:gaussian-upper}, we see that $\E[N(t)]$ is tightly characterized up to negative additive term which scales as $O(d\beta/n)$, proving the claim of Theorem \thmgaussianconnection.




\subsection{Proof of Lemma \lemgaussianconcentration}
\label{aa:gaussian-concentration-proof}
To show this claim, we first show the following lemma, which establishes a concentration result on the conditional distribution of $N(t+1)$ given previous observations.
\begin{lemma}
\label{lem:poisson-limit}
Let $\mathcal{F}_t = \sigma(I(0), I(1), \dots, I(t))$ again represent the $\sigma$-algebra of observations generated by previous observations of infected individuals.
Then, for any $\theta > 0$,
\begin{equation}
    \label{eq:limit-mgf}
    \E[e^{\theta N(t+1)}|\mathcal{F}_t] \leq e^{(d\beta \gamma^t N(t))(e^\theta - 1)} \,,
\end{equation}
where we identify the right hand side as the moment generating function of a centered Poisson distribution with parameter $d\beta \gamma^t N(t)$.
\end{lemma}
\begin{proof}
As before, we note that given $\mathcal{F}_t$, the distribution of $N(t+1)$ is a binomial random variable with $n - \sum_{s = 0}^t N(s)$ trials and success probability $1 - (1 - \frac{d \beta \gamma^t}{n})^{N(t)}$.
Hence,
\begin{align*}
    \E[e^{\theta N(t+1)}|\mathcal{F}_t] &= \bigg( \left(1 - \frac{d \beta \gamma^t}{n}\right)^{N(t)} + \Big(1 -  \left(1 - \frac{d \beta \gamma^t}{n}\right)^{N(t)}\Big)e^\theta \bigg)^{n - \sum_{s = 0}^t N(s)} \,.
\end{align*}
This holds for any arbitrary choice of $n$, and any particular history $\mathcal{F}_t$.
Next, we see
\begin{align*}
    \E[e^{\theta N(t+1)}|\mathcal{F}_t] &= \bigg( \left(1 - \frac{d \beta \gamma^t}{n}\right)^{N(t)} + \Big(1 -  \left(1 - \frac{d \beta \gamma^t}{n}\right)^{N(t)}\Big)e^\theta \bigg)^{n}  \\
    &\quad \times \bigg( \left(1 - \frac{d \beta \gamma^t}{n}\right)^{N(t)} + \Big(1 -  \left(1 - \frac{d \beta \gamma^t}{n}\right)^{N(t)}\Big)e^\theta \bigg)^{-\sum_{s = 0}^t N(s)} \\
    &\leq  \bigg( \left(1 - \frac{d \beta \gamma^t}{n}\right)^{N(t)} + \Big(1 -  \left(1 - \frac{d \beta \gamma^t}{n}\right)^{N(t)}\Big)e^\theta \bigg)^{n}\,.
\end{align*}
To show the final inequality above, we first note that $\theta > 0$ implies 
\begin{align*}
&\left(1 - \frac{d \beta \gamma^t}{n}\right)^{N(t)} + \Big(1 -  \left(1 - \frac{d \beta \gamma^t}{n}\right)^{N(t)}\Big)e^\theta \\
&\qquad= 1 - \left( 1 - \left(1 - \frac{d \beta \gamma^t}{n}\right)^{N(t)} \right) + \Big(1 -  \left(1 - \frac{d \beta \gamma^t}{n}\right)^{N(t)}\Big)e^\theta \\
&\qquad = 1+ \Big(1 -  \left(1 - \frac{d \beta \gamma^t}{n}\right)^{N(t)}\Big)(e^\theta - 1) \\
&\qquad \geq 1\,,
\end{align*}
and hence the inequality follows because $-\sum_{s = 0}^t N(s) < 0$.
Next, we see that algebraically manipulating the upper bound yields
\begin{align*}
    \E[e^{\theta N(t+1)}|\mathcal{F}_t] &\leq \bigg(1 + \Big(1 -  \left(1 - \frac{d \beta \gamma^t}{n}\right)^{N(t)}\Big)(e^\theta - 1) \bigg)^{n} \\
    &\leq \bigg(1 + \Big(1  - 1 + \frac{d \beta \gamma^t}{n} N(t) \Big)(e^\theta - 1) \bigg)^{n} \quad \text{(Bernoulli's inequality)} \\
    &= \bigg(1 + \Big(\frac{d \beta \gamma^t}{n} N(t) \Big)(e^\theta - 1) \bigg)^{n} \\
    &\leq e^{(d\beta \gamma^t N(t))(e^\theta - 1)} \,,
\end{align*}
proving the Lemma as desired.
\end{proof}
Hence, using the tail bounds of the Poisson distribution, the following Corollary can be shown:
\begin{corollary} \label{corr:ind-concentration}
For any $x > 0$,
\[  \P \left( N(t+1) > (d \beta \gamma^t + x) N(t) ~|~ \mathcal{F}_t \right) \leq e^{-\min \{\frac{x^2}{4d \beta \gamma^t}, \frac{x}{4} \}} \,,\]
where we recall $\mathcal{F}_t = \sigma(I(0), \dots I(t))$ is the $\sigma$-algebra generated by observations of infections up to time $t$ and $N(t) = |I(t)|$ is the number of infected individuals at time $t$.
\end{corollary}
\begin{proof}
From e.g. \cite{zhang2020non} on the results for tail bounds of the Poisson distribution, we immediately see that because Lemma \ref{lem:poisson-limit} shows that the conditional moment generating function of $N(t+1)$ is dominated by that of a Possion random variable, then for $k \geq 1$,
\[\P \left( |N(t+1) - \E[N(t+1)|\mathcal{F}_t, N(t) = k| > x ~|~ \mathcal{F}_t, N(t) = k\right) \leq e^{-\frac{x^2}{2d \beta \gamma^t k} h\left(\frac{x}{2d \beta \gamma^t k}\right)} \,,\]
where, $h(u) = 2\frac{(1+u) \ln(1+u) - u}{u^2}$.
Notably, for $u \geq 0$, $h(u) \geq \frac{1}{1+u}$ as can be verified by considering the function $(1+u)h(u)$ (see \cite{cannone2017poisson} for details).

Hence,
\begin{align*}
    \P \left( \bigg\vert N(t+1) - \E[N(t+1)|\mathcal{F}_t, N(t) = k]\bigg\vert > x ~|~ \mathcal{F}_t, N(t)= k \right) &\leq e^{-\frac{x^2}{2(d \beta \gamma^t k + x)}} \\
    &\leq e^{-\min \{\frac{x^2}{4d \beta \gamma^t k}, \frac{x}{4} \}} \,,
\end{align*}
which holds because $a + b \leq 2\max\{a, b\}$ for $a, b \geq 0$.

Further, replacing $x$ with $xk$ and rewriting the event of interest, we see
\begin{align}
    \P \left( \bigg\vert N(t+1) - \E\left[N(t+1)|\mathcal{F}_t, N(t) = k\right] \bigg\vert > xk ~|~ \mathcal{F}_t, N(t)= k \right) &\leq e^{-\min \{\frac{kx^2}{4d \beta \gamma^t}, \frac{kx}{4} \}} \nonumber\\
    &\leq e^{-\min \{\frac{x^2}{4d \beta \gamma^t}, \frac{x}{4} \}} \label{eq:single-prop} \,,
\end{align}
since $k \geq 1$.

From the proof of Theorem \thmgaussianconnection, and in particular equations \eqref{eq:upper-cond-bound} and \eqref{eq:lower-cond-bound} which establish upper and lower bounds on the conditional expectation of $N(t+1)$ given $N(t)$, we see that for $k \geq 1$
\[ \E\left[N(t+1)|\mathcal{F}_t, N(t) = k\right] = k d \beta \gamma^t - c_{t, k}(n) \,, \]
where $c_{t, k} (n) = O\left( \frac{1}{n}\right)$ is a non-negative function. 

Moreover, in the case where $k = 0$, we see that if $N(t) = 0$, then $N(t+1) = 0$ with probability 1, as there are no infections at time $t$ to spread at time $t+1$.
Hence, 
\[\P \left( N(t+1) > (d \beta \gamma^t + x) N(t) ~|~ \mathcal{F}_t, N(t) = 0 \right) = 0 \,. \]
Therefore,
\begin{align*}
    &\P \left( N(t+1) > (d \beta \gamma^t + x) N(t) ~|~ \mathcal{F}_t \right) \\
    &\quad= \sum_{k = 0}^\infty \P \left( N(t+1) > (d \beta \gamma^t + x) N(t) ~|~ \mathcal{F}_t, N(t) = k \right) \P(N(t) = k) \\
    &\quad= 0 + \sum_{k = 1}^\infty \P \left( N(t+1) > (d \beta \gamma^t + x) N(t) ~|~ \mathcal{F}_t, N(t) = k \right) \P(N(t) = k) \\
    &\quad \leq \sum_{k = 1}^\infty \P \left( N(t+1) > (d \beta \gamma^t + x) N(t) - c_{t, k}(n) ~|~ \mathcal{F}_t, N(t) = k \right) \P(N(t) = k) \quad (c_{t,k} (n) \geq 0)\\
    &\quad\leq \sum_{k = 1}^\infty \P \left( \bigg\vert N(t+1) - [d \beta \gamma^t N(t) - c_{t, k}(n)] \bigg\vert > xk ~|~ \mathcal{F}_t, N(t) = k \right) \P(N(t) = k) \\
    &\quad= \sum_{k = 1}^\infty \P \left( \bigg\vert N(t+1) - \E\left[N(t+1)|\mathcal{F}_t, N(t) = k\right] \bigg\vert > xk ~|~ \mathcal{F}_t, N(t)= k \right) \P(N(t) = k) \\
    &\quad\leq \sum_{k = 1}^\infty e^{-\min \{\frac{x^2}{4d \beta \gamma^t}, \frac{x}{4} \}} \P(N(t) = k)\\ 
    &\quad\leq e^{-\min \{\frac{x^2}{4d \beta \gamma^t}, \frac{x}{4} \}} \,,
\end{align*}
where the final step follows from the normalization of probability measure such that $\sum_{k = 0}^\infty \P(N(t) = k) = 1$.
\end{proof}

\noindent \textbf{Proof of Lemma \lemgaussianconcentration. } \quad 
We first note the result of Corollary \ref{corr:ind-concentration} when $x = \epsilon d \beta \gamma^t$ yields
\begin{align*}
\P \big( N(t+1) > (d \beta \gamma^t + \epsilon d\beta \gamma^t) &N(t) ~|~ \mathcal{F}_t \big)\\
&\quad= \P \left( N(t+1) > (d \beta \gamma^t) (1+\epsilon) N(t) ~|~ \mathcal{F}_t \right)\\
&\quad\leq e^{-\min \{\frac{(\epsilon d \beta \gamma^t)^2}{4d \beta \gamma^t}, \frac{\epsilon d \beta \gamma^t}{4} \}} \\
&\quad= e^{-\min \{ \epsilon^2, \epsilon\} d \beta \gamma^t/4} \,.
\end{align*}

Next, we see:
\begin{align*}
    &\P\left( \bigcap_{s = 0}^{t-1} \left\lbrace N(s+1) \leq (d \beta \gamma^t) (1+\epsilon) N(s)  \right\rbrace \right) \\
    &\qquad = \P\left( \bigcap_{s = 0}^{t-1} \left\lbrace N(s+1) \leq (d \beta \gamma^t) (1+\epsilon) N(s)  \right\rbrace \right) \\
    &\qquad = \prod_{s = 0}^{t-1}\P\left( N(s+1) \leq (d \beta \gamma^t) (1+\epsilon) N(s)  ~\big\vert~ \bigcap_{\tau = 0}^{s-1} N(s+1) \leq (d \beta \gamma^t) (1+\epsilon) N(s)  \right)\\
    &\qquad = \prod_{s = 0}^{t-1}\left(1 -  \P\left( N(s+1) > (d \beta \gamma^t) (1+\epsilon) N(s) +  ~\big\vert~ \bigcap_{\tau = 0}^{s-1} N(s+1) \leq (d \beta \gamma^t) (1+\epsilon) N(s)  \right) \right)\\
    &\qquad \geq 1 - \sum_{s = 0}^{t-1} \P\left( N(s+1) > (d \beta \gamma^t) (1+\epsilon) N(s) ~\big\vert~ \bigcap_{\tau = 0}^{s-1} N(s+1) \leq (d \beta \gamma^t) (1+\epsilon) N(s) \right)\\
    &\qquad \geq 1 - \sum_{s = 0}^{t-1} e^{-\min \{ \epsilon^2, \epsilon\} d \beta \gamma^s/4} \,,
\end{align*}

Finally, to prove the claim, we see that when for all $s \leq t-1$, $N(s+1) \leq (d \beta \gamma^s) (1+\epsilon) N(s)$, we also have $N(t) \leq (1+\epsilon)^t (d\beta)^t \gamma^{t(t-1)/2}$. 
This follows similarly to the proof of \eqref{eq:lin-upper} in Lemma \ref{lem:gaussian-upper}.
Hence,
\begin{align*}
    &\P\left( \bigcap_{s = 0}^{t-1} \left\lbrace N(s+1) \leq (d \beta \gamma^t) (1+\epsilon) N(s) \right\rbrace \right) \\
    &\qquad \leq \P\left(N(t) \leq (1+\epsilon)^t (d\beta)^t \gamma^{t(t-1)/2} \right) \,.
\end{align*} 
Since $(1+\epsilon)^t (d\beta)^t \gamma^{t(t-1)/2}  = e^{\left(\frac{1}{2} \log \gamma\right) t^2 + \left( \log \frac{d\beta}{\sqrt{\gamma}} (1+\epsilon) \right) t}$, this implies
\begin{align*}
    \P\left(N(t) \leq e^{\left(\frac{1}{2} \log \gamma\right) t^2 + \left( \log \frac{d\beta}{\sqrt{\gamma}} (1+\epsilon) \right) t}  \right) \geq 1 - \sum_{s = 0}^{t-1} e^{-\min \{ \epsilon^2, \epsilon\} d \beta \gamma^s/4} \,,
\end{align*}
proving the claim of the theorem.

\subsection{Proof of Theorem \thmtimetilnext}
\label{aa:spread-time-proof}
First, we introduce some notation to denote the spread of infection in the first community and characterize the event of interest.
We let $N_1(t)$ represent the number of infections in community 1 at time $t$, and note that due to the initial condition, $N_1(0) = 1$.
Let $A_t$ be the event that there are no cases in community $2$ at time $t$.
Hence, the event of interest $T > t$, which refers to the event that there are no infections in community 2 prior to time $t$, is equivalent to $\cap_{s = 0}^t A_s$.

The proof has three parts: First, we show a general lower bound on $\P(T > t)$ which illustrates that there are two key sufficient conditions to providing the lower bound on the probability: upper bounding the number of cases in Community 1, and ensuring $d_{out}$ is low enough to ensure a case in Community 2 is unlikely.
The second part of the proof is to show that Assumption \eqsepconditionone ensures the upper bound on number of cases in community 1, and the third part of the proof shows that Assumption \eqsepconditiontwo ensures $d_{out}$ is sufficiently small.\

\noindent \textbf{General lower bound for $\P(T > t)$.} \quad 
We begin by noting the following observation, which holds for any $t$:
\begin{align*}
    \P(T > t) &= \P\left( \bigcap_{s = 0}^t A_s \right)\\
    &\geq \P\left( \bigcap_{s = 0}^t A_s \cap \Big(  N_1(s+1) \leq (2 d_{in} \beta \gamma^s ) N_1(s) \Big) \right) \,.
\end{align*}
The first step here follows from noting the definition of $A_s$ as the event that no individuals in community 2 are infected by those in community 1 at time $s$, and hence $T > t$ occurs if and only if no individual from community 2 has been infected by an individual from community 1 for all times $s = 0$ until $s = t$.
The inequality then follows from the monotonicity of probability measure.
Next, we see that the definition of conditional probability yields
\begin{align*}
    &\P(T > t) \\
    &\quad \geq \P\left( \bigcap_{s = 0}^t A_s \cap \left(  N_1(s+1) \leq (2 d_{in} \beta \gamma^s ) N_1(s) \right) \right)\\
    &\quad = \prod_{s = 0}^t \P\left( A_s \cap \Big(  N_1(s+1) \leq (2 d_{in} \beta \gamma^s ) N_1(s) \Big) ~\bigg\vert~  \bigcap_{s' = 0}^{s - 1} A_{s'} \cap \Big(  N_1(s'+1) \leq (2 d_{in} \beta \gamma^{s'} ) N_1(s') \Big)\right) \\
    &\quad = \prod_{s = 0}^t \P\left(N_1(s+1) \leq (2 d_{in} \beta \gamma^s ) N_1(s) ~\bigg\vert~  A_s \cap \bigcap_{s' = 0}^{s - 1} A_{s'} \cap \Big(  N_1(s'+1) \leq (2 d_{in} \beta \gamma^{s'} ) N_1(s') \Big)\right) \\
    &\qquad \qquad \times  \P\left( A_s  ~\bigg\vert~ \bigcap_{s' = 0}^{s-1} A_{s'} \cap \Big(  N_1(s'+1) \leq (2 d_{in} \beta \gamma^{s'} ) N_1(s') \Big)\right) \,.
\end{align*}
Further algebraic manipulation reveals
\begin{align*}
    &\P(T > t) \\
    &\geq \prod_{s = 0}^t \P\left(N_1(s+1) \leq (2 d_{in} \beta \gamma^s ) N_1(s) ~\bigg\vert~  A_s \cap \bigcap_{s' = 0}^{s - 1} A_{s'} \cap \Big(  N_1(s'+1) \leq (2 d_{in} \beta \gamma^{s'} ) N_1(s') \Big)\right) \\
    &\qquad \qquad \times  \prod_{s = 0}^t \P\left( A_s  ~\bigg\vert~ \bigcap_{s' = 0}^{s-1} A_{s'} \cap \Big(  N_1(s'+1) \leq (2 d_{in} \beta \gamma^{s'} ) N_1(s') \Big)\right) \\
    &= \prod_{s = 0}^t \left( 1 - \P\left(N_1(s+1) > (2 d_{in} \beta \gamma^s ) N_1(s) ~\bigg\vert~  A_s \cap \bigcap_{s' = 0}^{s - 1} A_{s'} \cap \Big(  N_1(s'+1) \leq (2 d_{in} \beta \gamma^{s'} ) N_1(s') \Big)\right) \right)\\
    &\qquad \qquad \times  \prod_{s = 0}^t \P\left( A_s  ~\bigg\vert~ \bigcap_{s' = 0}^{s-1} A_{s'} \cap \Big(  N_1(s'+1) \leq (2 d_{in} \beta \gamma^{s'} ) N_1(s') \Big)\right) \\
    &\geq \left(1 - \sum_{s = 0}^t  \P\left(N_1(s+1) > (2 d_{in} \beta \gamma^s ) N_1(s) ~\bigg\vert~  A_s \cap \bigcap_{s' = 0}^{s - 1} A_{s'} \cap \Big(  N_1(s'+1) \leq (2 d_{in} \beta \gamma^{s'} ) N_1(s') \Big)\right) \right)\\
    &\qquad \qquad \times  \prod_{s = 0}^t \P\left( A_s  ~\bigg\vert~ \bigcap_{s' = 0}^{s-1} A_{s'} \cap \Big(  N_1(s'+1) \leq (2 d_{in} \beta \gamma^{s'} ) N_1(s') \Big)\right) \,,
\end{align*}
The final inequality follows from the following fact: for values $x_1, \dots, x_t \in [0, 1)$, $\prod_{s = 1}^t (1 - x_s) = \geq 1 - \sum_{s = 1}^t x_s$.
This fact is a generalization of Bernoulli's inequality and can be proven by induction.
Hence, for any $t$
\begin{align}
    \P(T > t) &\geq \left(1 - \sum_{s = 0}^t  \P\left(N_1(s+1) > (2 d_{in} \beta \gamma^s ) N_1(s) ~\bigg\vert~  A_s \cap \bigcap_{s' = 0}^{s - 1} A_{s'} \cap \Big(  N_1(s'+1) \leq (2 d_{in} \beta \gamma^{s'} ) N_1(s') \Big)\right) \right) \nonumber \\
    &\qquad \times  \prod_{s = 0}^t \P\left( A_s  ~\bigg\vert~ \bigcap_{s' = 0}^{s-1} A_{s'} \cap \Big(  N_1(s'+1) \leq (2 d_{in} \beta \gamma^{s'} ) N_1(s') \Big)\right) \label{eq:wait-lower-bound-general}
\end{align}
\null\\
\noindent\textbf{Bounding Cases in Community 1.} \quad Similar to Corollary \ref{corr:ind-concentration}, and defining $\mathcal{F}_t^1 = \sigma(I_1(0), \dots, I_1(t))$ as the observations of infections within the first community up to time $t$, we can show that
\begin{align*}
    \P \left( N_1(t+1) > (d_{in} \beta \gamma^t + x) N_1(t) ~\big\vert~ \mathcal{F}_t^1, \bigcap_{s = 0}^t A_t \right) 
    &\leq e^{-\min \{\frac{x^2}{4d \beta \gamma^t}, \frac{x}{4} \}} \,.
\end{align*}
The only difference between this claim and that of Corollary \ref{corr:ind-concentration} is that we condition on knowing there are no cases in the second community up to time $t$, in which case the problem reduces to that of the single community model.
Hence, the proof of this claim follows from that of Corollary \ref{corr:ind-concentration}.

We can then set $x = d_{in} \beta \gamma^t$, and find
\begin{align*}
     \P \left( N_1(t+1) > (2 d_{in} \beta \gamma^t) N_1(t) ~\big\vert~ \mathcal{F}_t^1, \bigcap_{s = 0}^t A_t \right) 
    &\leq e^{-d_{in} \beta \gamma^t / 4}
\end{align*}
This claim holds for any choice of $t$, and hence it holds for all $t \leq C_1 - \log 20 / \log \frac{1}{\gamma} = - \left( \log \left( \frac{d_{in}\beta}{20\sqrt{\gamma}}\right) \right) / \log \gamma$.
Hence,
\begin{align*}
    \sum_{t = 0}^{\lfloor - \left( \log \left( \frac{d_{in}\beta}{20\sqrt{\gamma}}\right) \right) / \log \gamma \rfloor} &\P\left( N_1(t+1) > (2d_{in} \beta \gamma^t ) N_1(t)~|~ \mathcal{F}_t^1, \bigcap_{s = 0}^t A_t \right) \\
    &\leq \sum_{t = 0}^{\lfloor - \left( \log \left( \frac{d_{in}\beta}{20\sqrt{\gamma}}\right) \right) / \log \gamma \rfloor} e^{-d_{in} \beta \gamma^t / 4} \\
    &\leq \sum_{t = 0}^{\lfloor - \left( \log \left( \frac{d_{in}\beta}{20\sqrt{\gamma}}\right) \right) / \log \gamma \rfloor}e^{-d_{in} \beta \gamma^{- \left( \log \left( \frac{d_{in}\beta}{20\sqrt{\gamma}}\right) \right) / \log \gamma} / 4} \quad \left(t \leq \lfloor - \left( \log \left( \frac{d_{in}\beta}{20\sqrt{\gamma}}\right) \right) / \log \gamma \rfloor \right)\\
    &= \lfloor - \left( \log \left( \frac{d_{in}\beta}{20\sqrt{\gamma}}\right) \right) / \log \gamma \rfloor \times e^{-d_{in} \beta e^{-\log \left( \frac{d_{in}\beta}{20\sqrt{\gamma}}\right)} / 4} \\
    &= \lfloor - \left( \log \left( \frac{d_{in}\beta}{20\sqrt{\gamma}}\right) \right) / \log \gamma \rfloor \times e^{-d_{in} \beta  \frac{20\sqrt{\gamma}}{d_{in}\beta} / 4} \\
    &= \lfloor - \left( \log \left( \frac{d_{in}\beta}{20\sqrt{\gamma}}\right) \right) / \log \gamma \rfloor \times e^{-5\sqrt{\gamma}} \,.
\end{align*}
We also see, from assumption ~\eqsepconditionone made in the Theorem statement, that
\begin{align*}
    C_1(d_{in},\beta, \gamma)  &< \delta e^{5\sqrt{\gamma}} + \log 20 / \log \frac{1}{\gamma} \qquad \text{(Assumption \eqsepconditionone)} \\
    \implies & - \log \left(\frac{d_{in} \beta}{\sqrt{\gamma}} \right) / \log \gamma < \delta e^{5\sqrt{\gamma}} - \log 20 / \log \gamma \\
    \implies & - \log \left(\frac{d_{in} \beta}{20 \sqrt{\gamma}} \right) / \log \gamma < \delta e^{5\sqrt{\gamma}}  \\
    \implies & \left(- \log \left(\frac{d_{in} \beta}{20 \sqrt{\gamma}} \right) / \log \gamma\right) \times e^{-5\sqrt{\gamma}} < \delta \\
    \implies & \left\lfloor - \left( \log \left( \frac{d_{in}\beta}{20\sqrt{\gamma}}\right) \right) / \log \gamma \right\rfloor \times e^{-5\sqrt{\gamma}} \leq \delta \,.
\end{align*}
Therefore,
\begin{equation}
    \label{eq:high-case-delta}
    \sum_{t = 0}^{\lfloor - \left( \log \left( \frac{d_{in}\beta}{20\sqrt{\gamma}}\right) \right) / \log \gamma \rfloor} \P\left( N_1(t+1) > (2d_{in} \beta \gamma^t ) N_1(t) ~|~  \mathcal{F}_t^1, \bigcap_{s = 0}^t A_t \right) \leq \delta \,.
\end{equation}
That is, we see that with high probability, the number of cases in community 1 is small enough as to not create too many chances for infection to spread to community 2.

Returning to the bound provided by \eqref{eq:wait-lower-bound-general}, we now see that, for $t = \lfloor - \left( \log \left( \frac{d_{in}\beta}{20\sqrt{\gamma}}\right) \right) / \log \gamma \rfloor$ the term 
\[ \sum_{s = 0}^t  \P\left(N_1(s+1) > (2 d_{in} \beta \gamma^s ) N_1(s) ~\bigg\vert~  A_s \cap \bigcap_{s' = 0}^{s - 1} A_{s'} \cap \Big(  N_1(s'+1) \leq (2 d_{in} \beta \gamma^{s'} ) N_1(s') \Big)\right) \]
can be bounded using \eqref{eq:high-case-delta}, and hence what remains is to bound 
\[ \prod_{s = 0}^t \P\left( A_s  ~\bigg\vert~ \bigcap_{s' = 0}^{s-1} A_{s'} \cap \Big(  N_1(s'+1) \leq (2 d_{in} \beta \gamma^{s'} ) N_1(s') \Big)\right) \,.\]
We provide this bound by imposing reasonable requirements on $d_{out}$ and $n$. \\

\noindent\textbf{Requirements on $d_{out}$ and $n$.} \quad 
In order to bound the product of probabilities above, we first note that
\begin{align}
    \P\left(A_t ~|\mathcal{F}_t^1, \bigcap_{s = 1}^{t-1} A_{s} \right)
    &=\left(1 - \frac{d_{out}\beta}{n} \right)^{N_1(t-1) n} \,. \label{eq:time-t-no-infect-obs}
\end{align}
That is, for there to be no cases at time $t$ given that there are no cases prior to time $t$ and given the information about previous observations in community 1, it must be the case that each of the $N_1(t-1)$ infected individuals in community 1 is unable to infect anyone in community 2.
Since there are $N_1(t-1)$ infected individuals in community 1 at time $t-1$, there are $n$ susceptible individuals in community 2, the probability of an infection not occurring between a pair of indivuduals is $1 - \frac{d_{out}\beta}{n}$, and all infection events are independent, we then see that the observation \eqref{eq:time-t-no-infect-obs} must hold.

Therefore, we must also have,
\begin{align*}
    \prod_{s = 0}^t \P\left( A_s  ~\bigg\vert~ \bigcap_{s' = 0}^{s-1} A_{s'} , \mathcal{F}_t^1 \right) &= \prod_{s = 1}^t \left(1 - \frac{d_{out}\beta}{n} \right)^{N_1(s-1) n} && (\P(A_0) = 1) \\
    &= e^{\sum_{s = 1}^t N_1(s-1) n \log \left(1 - \frac{d_{out}\beta}{n} \right)} \\
    &\geq e^{\sum_{s = 1}^t N_1(s-1) n  \frac{ - \frac{d_{out}\beta}{n}}{\left(1 - \frac{d_{out}\beta}{n} \right)}} &&\left(\frac{-x}{1-x} \leq \log (1 - x) \right) \\
    &\geq e^{\sum_{s = 1}^t 2 N_1(s-1) n  (- \frac{d_{out}\beta}{n})} &&(\text{Assumption \eqsepconditionthree},~n \geq 2 \beta d_{out}) \\
    &= e^{-2 d_{out}\beta \sum_{s = 1}^t N_1(s-1) } \\
    &= e^{-2 d_{out}\beta \sum_{s = 0}^{t-1} N_1(s) }
\end{align*}

We now note that, for $t^* = \lfloor - \left( \log \left( \frac{d_{in}\beta}{20\sqrt{\gamma}}\right) \right) / \log \gamma \rfloor$, we have $e^{-2 d_{out}\beta \sum_{s = 0}^{t^*-1} N_1(s) } \geq 1 - \delta$. 
This holds because
\begin{equation} 
    e^{-2 d_{out}  \beta\sum_{s = 0}^{t^*-1} (2d\beta)^s \gamma^{s(s+1)/2}} \geq 1 - \delta \,,
\end{equation}
is equivalent to
\[ d_{out}  \leq \frac{1}{2 \beta \sum_{s = 0}^{\lfloor - \left( \log \left( \frac{d_{in}\beta}{20\sqrt{\gamma}}\right) \right) / \log \gamma \rfloor - 1} (2d_{in}\beta)^s \gamma^{s(s+1)/2}}\log \left(\frac{1}{1 - \delta}\right) \,,\]
which is the exact condition of \eqsepconditionthree. 

Hence, we see
\begin{align}
    \prod_{s = 0}^t \P\left( A_s  ~\bigg\vert~ \bigcap_{s' = 0}^{s-1} A_{s'} \cap \Big(  N_1(s'+1) \leq (2 d_{in} \beta \gamma^{s'} ) N_1(s') \Big)\right) \geq 1 - \delta \,. \label{eq:d-out-small}
\end{align}
\null \\
\noindent\textbf{Proof of Theorem \thmtimetilnext.} \quad To complete the proof, we note:
\begin{align*}
    &\P\left(T > \lfloor - \left( \log \left( \frac{d_{in}\beta}{20\sqrt{\gamma}}\right) \right) / \log \gamma \rfloor \right) \\
    &\quad \geq \Bigg(1 - \sum_{s = 0}^{\lfloor - \left( \log \left( \frac{d_{in}\beta}{20\sqrt{\gamma}}\right) \right) / \log \gamma \rfloor} \P\bigg(N_1(s+1) > (2 d_{in} \beta \gamma^s ) N_1(s) ~\bigg\vert~  \\
    &\hspace{16em} A_s \cap \bigcap_{s' = 0}^{s - 1} A_{s'} \cap \Big(  N_1(s'+1) \leq (2 d_{in} \beta \gamma^{s'} ) N_1(s') \Big)\bigg) \Bigg)  \\
    &\qquad \times  \prod_{s = 0}^{\lfloor - \left( \log \left( \frac{d_{in}\beta}{20\sqrt{\gamma}}\right) \right) / \log \gamma \rfloor} \P\left( A_s  ~\bigg\vert~ \bigcap_{s' = 0}^{s-1} A_{s'} \cap \Big(  N_1(s'+1) \leq (2 d_{in} \beta \gamma^{s'} ) N_1(s') \Big)\right) \qquad \eqref{eq:wait-lower-bound-general} \\
    &\quad\geq (1 - \delta) \times \prod_{s = 0}^{\lfloor - \left( \log \left( \frac{d_{in}\beta}{20\sqrt{\gamma}}\right) \right) / \log \gamma \rfloor} \P\left( A_s  ~\bigg\vert~ \bigcap_{s' = 0}^{s-1} A_{s'} \cap \Big(  N_1(s'+1) \leq (2 d_{in} \beta \gamma^{s'} ) N_1(s') \Big)\right) \qquad \eqref{eq:high-case-delta} \\
    &\quad \geq (1-\delta) (1-\delta) \qquad \eqref{eq:d-out-small} \\
    &\quad = 1 - 2 \delta + \delta^2 \geq 1 - 2 \delta \,,
\end{align*}
proving the claim of Theorem \thmtimetilnext.









\end{appendix}

\bibliographystyle{imsart-nameyear} 
\bibliography{main.bib}       